\renewcommand{\tableofcontents}{%
    \section*{\contentsname}%
    \begingroup
    \setlength{\baselineskip}{15pt} 
    \@starttoc{toc}%
    \endgroup
}
\titleformat{\paragraph}[runin] 
{\normalfont\normalsize\bfseries}{\theparagraph}{0.3em}{}
\titlespacing{\paragraph}{0pt}{0.7ex}{0.5em}
\newtheorem{theorem}{Theorem}[section]
\newtheorem{corollary}{Corollary}[section]
\newtheorem{definition}{Definition}[section]
\newtheorem{assumption}{Assumption}[section]
\newtheorem{claim}{Claim}[section]
\newtheorem{lemma}{Lemma}[section]
\newtheorem*{remark}{Remark}
\newcommand{\R}{\mathbb{R}}
\newcommand{\E}{\mathbb{E}}
\newcommand{\cX}{\mathcal{X}}
\newcommand{\cY}{\mathcal{Y}}
\newcommand{\cC}{\mathcal{C}}
\newcommand{\expect}{\mathbb{E}}
\newcommand{\prob}{\mathbb{P}}
\newcommand{\cO}{\mathcal{O}}
\newcommand{\cH}{\mathcal{H}}
\newcommand{\cF}{\mathcal{F}}
\newcommand{\cP}{\mathcal{P}}
\newcommand{\bE}{\mathbb{E}}
\newcommand{\argmin}{\text{argmin}}
\newcommand{\eps}{\varepsilon}
\newcommand{\kl}{\mathrm{d}_{\mathrm{KL}}}
\newcommand{\tv}{\mathrm{D}_{\mathrm{TV}}}
\newcommand{\is}{\mathrm{D}_{\mathrm{IS}}}
\newcommand{\wis}{\mathrm{D}_{\mathrm{WIS}}}
\newcommand{\dist}{d_\mathcal{P}}
\newcommand{\disthat}{\hat{d}_\mathcal{P}}
\title{The Capabilities and Limitations of Weak-to-Strong Generalization: Generalization and Calibration}
\author{
\textbf{Wei Yao\textsuperscript{1{$\star$}}},
\textbf{Wenkai Yang\textsuperscript{1{$\star$}}},
\textbf{Gengze Xu\textsuperscript{1}},
\textbf{Ziqiao Wang\textsuperscript{2}},
\textbf{Yankai Lin\textsuperscript{1}},
\textbf{Yong Liu\textsuperscript{1}$^{\dag}$}
\\
\\
 \textsuperscript{1}Gaoling School of Artificial Intelligence, Renmin University of China,
\\
 \textsuperscript{2}School of Computer Science and Technology, Tongji University,
\\
\tt\footnotesize\{wei.yao, wenkaiyang, liuyonggsai\}@ruc.edu.cn\\
}
\begin{document}
\maketitle

\let\thefootnote\relax\footnotetext{$^\star$ Equal contribution\hspace{3pt} \hspace{5pt}$^{\dag}$ Corresponding author\hspace{5pt}}

\begin{abstract}
Weak-to-strong generalization, where weakly supervised strong models outperform their weaker teachers, offers a promising approach to aligning superhuman models with human values. 
To deepen the understanding of this approach, we provide theoretical insights into its capabilities and limitations. 
First, in the classification setting, we establish upper and lower generalization error bounds for the strong model, identifying the primary limitations as stemming from the weak model's generalization error and the optimization objective itself.
Additionally, we derive lower and upper bounds on the calibration error of the strong model. 
These theoretical bounds reveal two critical insights: (1) the weak model should demonstrate strong generalization performance and maintain well-calibrated predictions, and (2) the strong model's training process must strike a careful balance, as excessive optimization may lead to overfitting to the weak supervision.
Finally, in the regression setting, we theoretically extend the work of~\citet{charikar2024quantifying} to a loss function based on KL divergence, offering guarantees that the strong student can outperform its weak teacher by at least the magnitude of their disagreement. 
The theory is validated through synthetic experiments.
\end{abstract}

\section{Introduction}

Human supervision~\citep{ouyang2022training,bai2022training} plays a crucial role in building both effective and safe artificial intelligence systems~\citep{achiam2023gpt,touvron2023llama}.
However, as future superhuman models exhibit increasingly complex behaviors, reliable human oversight becomes increasingly challenging~\citep{openai_superalignment}.

To tackle this issue, the Weak-To-Strong Generalization (W2SG) paradigm~\citep{burns2023weak} is proposed.
It finds that, strong pre-trained language models, when fine-tuned using labels produced by weaker models, consistently achieve better performance than their weak supervisors.
This intriguing phenomenon has not only driven the development of diverse alignment algorithms~\citep{zhu2024weak,liu2024co}, but also inspired efforts~\citep{pawelczyk2024generalizing,yang-etal-2024-weak,guo2024vision} to extend the concept to other tasks.
However, despite its empirical success, the theoretical foundations of W2SG remain underdeveloped.
Although several elegant theoretical frameworks~\citep{lang2024theoretical,somerstep2024statistical,wu2024provable,charikar2024quantifying} are proposed, a universal framework is still lacking to address fundamental questions, such as: \textit{What is the optimal generalization performance a strong model can achieve after W2SG? What other factors are influenced by W2SG?}

To answer these questions, we provide a theoretical analysis of W2SG, shedding lights on its capabilities and limitations.
Firstly, in classification tasks, 
our theoretical analysis of lower and upper generalization bounds under KL divergence loss reveals that the strong model's performance is fundamentally determined by two key factors:
(1) the disagreement between strong and weak models, which serves as the minimization objective in W2SG, and (2) the weak model's performance.
These findings suggest that (1) achieving the minimal optimization objective in W2SG limits the strong model’s ability to significantly outperform its weak supervisor, and (2) selecting a stronger weak model can enhance the performance of the strong model.
Secondly, we investigate how strong model's calibration—the property that a model's predicted confidence aligns with its actual accuracy~\citep{guo2017calibration,kumar2019verified}—is affected in the W2SG framework.
Our theoretical bounds reveal that the calibration of the strong model depends on both the calibration of the weak model and the disagreement between the two models.
The theory highlights the importance of avoiding a poorly-calibrated weak model and an overfitted strong model.
The above theoretical analysis is validated using GPT-2 series~\citep{radford2019language} and Pythia series~\citep{biderman2023pythia}.

In addition to classification setting, we also consider the regression problem.
In particular, we build on~\citet{charikar2024quantifying} by extending their analysis of squared loss to output distribution divergence, a measure of the difference between two models' output distributions. 
In this setting, the model outputs are normalized to form valid probability distributions over all input data, and the output distribution divergence between two models is defined as the KL divergence of their respective output distributions.
We recover the findings from~\citet{charikar2024quantifying} and show that the strong model's generalization error is provably smaller than the weak model's, with the gap no less than the W2SG minimization objective—namely, the strong model's error on the weak labels.
We conduct synthetic experiments to support our theoretical insights.

\section{Related Work}

In this section, we introduce AI alignment and W2SG.
Additional related work including teacher-student learning paradigm, weakly-supervised learning, calibration and information-theoretic analysis is provided in~\cref{appendix:related_work}.

\paragraph{AI alignment.}
AI alignment~\citep{ji2023ai,shen2023large} aims to ensure AI systems act in accordance with human values. 
A popular approach to achieve this goal is fine-tuning models on human-annotated data, such as Reinforcement Learning from Human Feedback (RLHF)~\citep{ouyang2022training,bai2022training} and Direct Preference Optimization (DPO)~\citep{rafailov2024direct}. 
However, this alignment paradigm faces significant challenges: human oversight becomes insufficient as AI surpasses human capabilities~\citep{kim2024road}, and obtaining scalable, high-quality human feedback remains difficult~\citep{casper2023open}. 
These challenges highlight the critical need to align superhuman AI systems~\citep{openai_superalignment}.
In contrast to these approaches, our work explores W2SG, which 
leverages weak models as the teacher to achieve the alignment goal.

\paragraph{Weak-to-strong generalization.}
To explore the effect of weak models to supervise strong models, \citet{burns2023weak} first find that strong models supervised by weak models can exhibit better performance on corresponding tasks than their weak supervisors, indicating the possibility of stimulating greater power from super models under weak supervisions.
There are also algorithms~\citep{zhu2024weak,agrawal2024ensemw2s,sang2024improving,guo2024improving} and empirical analysis~\citep{yang2024super,ye2024weak} for it.
However, only a limited number of theoretical studies have been conducted on this topic.
\citet{lang2024theoretical} analyzes it by introducing theoretical bounds that account for pseudolabel correction and coverage expansion. \citet{somerstep2024statistical} frame W2SG as a transfer learning problem, revealing limitations of fine-tuning on weak labels. \citet{wu2024provable} study linear models under a spiked covariance setting and derive asymptotic bounds. \citet{charikar2024quantifying} take a convex-theoretic approach in regression, quantifying performance improvements under squared loss via the misfit error between weak and strong models.
The work most closely related to ours is~\citet{charikar2024quantifying}, which primarily focuses on squared loss in regression. 
In contrast, we consider KL divergence-like losses, including KL divergence for classification and output distribution divergence for regression. 
Furthermore, while they focuses on establishing upper bounds, our study incorporates both upper and lower bounds as well as calibration analysis through experiments on language models, providing a more comprehensive understanding of the fundamental capabilities and limitations of W2SG.

\section{Preliminaries}

\subsection{Classification and Regression} \label{prelim_class_regress}

We examine two problem settings.
In the first case, we consider classification tasks using KL divergence as the loss function. Minimizing this loss is equivalent to minimizing cross-entropy loss, which is widely used in the W2SG literature~\citep{burns2023weak}. 
In the second case, we focus on regression tasks, employing the KL divergence between the predictions of two models as the loss function. 
The model outputs over the entire data domain are normalized to form probability distributions.
This approach is an extension of previous result~\citep{charikar2024quantifying} on squared loss, and provides an intuitive framework for understanding W2SG.

Given the data distribution $\cP$, data domain $\cX$ and output domain $\cY$, let $\cF: \cX \to \cY$.
Consider the difference $\dist$ and empirical difference $\disthat$ between two models, where $\dist,\disthat: \cF \times \cF \to \R_0^+$. We define the below two settings:

\paragraph{Setting 1: KL divergence loss.}
Firstly, we consider a $k$-classification problem.
Given the data domain $\cX \subseteq \R^d$ and output domain $\cY \subseteq \R^k$. Consider the model with the softmax module, i.e., $\forall y = (y_1, \cdots, y_k)^T \in \cY$, there holds $\sum_{i=1}^k y_i=1$ and $0 < y_i \le 1$.
Given two models $f,g \in \cF$,
define $\dist$ and $\disthat$:
\begin{align}
& \dist(f,g) \triangleq \bE_{x \sim \cP} \left[ \mathrm{D}_{\mathrm{KL}}(f(x) \| g(x)) \right], \\ \label{def:cross_entropy}
& \disthat(f,g) \triangleq \frac{1}{n} \sum_{j=1}^n \mathrm{D}_{\mathrm{KL}}(f(x_j) \| g(x_j)),  
\end{align}
where the KL divergence $\mathrm{D}_{\mathrm{KL}}(f(x) \| g(x)) = \sum_{i=1}^k [f(x)]_i \log \frac{[f(x)]_i}{[g(x)]_i}$ is the KL divergence between predictions, and $[f(x)]_i, [g(x)]_i$ are elements of $f(x), g(x)$. 

\paragraph{Setting 2: Output distribution divergence.} 
While KL divergence loss serve as de facto standard for classification, recent research demonstrates that KL divergence has been effectively employed in regression~\citep{imani2018improving,yang2021learning,gunder2022full,kitazawa2025bounds}.
Therefore, we also consider a regression problem using KL divergence in W2SG.
Let the data domain and output domain be $\cX \subseteq \R^d$ and $\cY = \{ y \in \R| 0 < y \le 1 \}$, respectively. 
In this setting, the outputs of the model for all input data are probability-normalized to ensure they form valid probability distributions. The difference between two models $f,g \in \cF$ is then measured as the KL divergence between their corresponding output distributions:
\begin{align}
& \dist(f,g) \triangleq \int_{\cX} f(x) \log \frac{f(x)}{g(x)} d x, \label{def:kl_dist} \\
& \disthat(f,g) \triangleq \sum_{i=1}^n f(x_i) \log \frac{f(x_i)}{g(x_i)}. \label{def:kl_dist_emp}
\end{align}


\subsection{Weak-to-strong Generalization}

In the context of W2SG, we focus on the fine-tuning phase after pre-training.
Let $h^\star:\R^d \to \R^{d^\star}$ denotes the ground truth representation function, which maps data $x \in \cX$ to an ideal, fully enriched representation $h^\star(x)$. 
The target fine-tuning task, composed with the ground truth representation, is denoted as $f^\star \circ h^\star$, where $f^\star:\R^{d^\star}\to \cY$.
The \textit{weak model} learns a mapping $f_w \circ h_w$, where the pre-trained representation $h_w:\cX \to \R^{d_w}$ extracts features from the input data, and $f_w:\R^{d_w} \to \cY$ is fine-tuned using supervised data with ground truth labels.
The strong model, on the other hand, aims to learn a mapping $f_{sw} \circ h_s$, where $h_s: \cX \to \R^{d_s}$ is the representation, and $f_{sw} \in \cF_{s}$ is a task-specific function from a hypothesis class $\cF_{s}: \R^{d_s} \to \cY$.
The strong model leverages the representation $h_s$ to improve performance on the fine-tuning task.
In the convention setting of AI alignment~\citep{ouyang2022training}, 
the model is learned through human-annotated ground truth data:
\begin{align}
    \label{eqn:alignment-population-minimizer}
    f_{s} = \argmin_{f \in \cF_{s}}\; \dist(f^\star \circ h^\star, f \circ h_s).
\end{align}
Nevertheless, the acquisition of human-generated data is both costly and time-consuming.
To address this challenge, the W2SG framework leverages weak supervision from the weak model's predictions, enabling the strong model to be trained through population risk minimization:
\begin{align}
    \label{eqn:fsw-population-minimizer}
    f_{sw} = \argmin_{f \in \cF_{s}}\; \dist(f_w \circ h_w, f \circ h_s).
\end{align}
In practice, we label $n$ i.i.d. samples using the weak model and minimize the empirical risk:
\begin{align} \label{eqn:erm}
    \hat{f}_{sw} = \argmin_{f \in \cF_s} \disthat(f_w \circ h_w, f \circ h_s).
\end{align}

Denote the labeling function $F^\star=f^\star \circ h^\star$, strong ceiling model $F_s=f_s \circ h_s$, weak model $F_w=f_w \circ h_w$, and strong models $F_{sw}=f_{sw} \circ h_s$, $\hat{F}_{sw}=\hat{f}_{sw} \circ h_s$, respectively.


\section{Universal Results in W2SG} \label{section:universal_result}

In this section, we consider the classification problem, where $\dist$ is the KL divergence loss defined in~\cref{def:cross_entropy}.
We first establish lower and upper generalization error bounds of the strong model in W2SG in~\cref{section_lower_upper}.
Then the lower and upper calibration error bounds are shown in~\cref{subsec:suff_nece_condition}.

\subsection{Generalization Error Bounds} \label{section_lower_upper}

\begin{theorem}[Proved in \cref{proof_lemma_inf}] \label{lemma:upper_lower_inf}
Given the data domain $\cX$, output domain $\cY$ and models $F_{sw}, F_w, F^\star$ defined above. 
Then there holds
\begin{multline*}
    \left| \dist(F^\star, F_{sw}) - \dist\left( F^\star, F_w \right) \right| \\ \le \cO \left(\sqrt{\dist(F_{sw}, F_w)} \right),
\end{multline*}
\end{theorem}

\begin{remark}
    The proof can be also extended to the regression setting, which is provided in~\cref{proof:lower_upper}.
\end{remark}

\cref{lemma:upper_lower_inf} provides a quantitative framework for assessing the performance gap between weak model and strong model in W2SG.
Specifically, the value of $\dist(F^\star, F_{sw})$ is constrained by two terms: 
(1) $\dist(F^\star, F_{sw})$, which reflects the performance of the weak model, and 
(2) $\dist(F_w, F_{sw})$, which is decided by the optimization result in~\cref{eqn:fsw-population-minimizer} and measures how the strong model learns to imitate the weak supervisor.
This result is examined from two complementary perspectives: a lower bound and an upper bound.

\paragraph{Lower bound.}
The lower bound indicates the fundamental limitation: $\dist(F^\star, F_{sw})$ cannot be arbitrarily reduced.
Firstly, a minimal $\dist(F^\star, F_{sw})$ is intrinsically tied to the weak model performance $\dist\left( F^\star, F_w \right)$.
To improve the strong model, the weak model becomes critical---that is, $\dist\left( F^\star, F_w \right)$ should be as small as possible. It underscores the importance of \textbf{\textit{carefully selecting the weak model}}~\citep{burns2023weak,yang2024super}.
Secondly, the performance improvement of strong model over the weak model cannot exceed $\cO \left(\sqrt{\dist(F_w, F_{sw})} \right)$.
In W2SG, while the student-supervisor disagreement $\dist(F_w, F_{sw})$ is minimized in~\cref{eqn:fsw-population-minimizer}, we anticipate $\cO \left(\sqrt{\dist(F_w, F_{sw})} \right)$ to remain relatively small.
However, a paradox arises: achieving a smaller $\dist\left( F^\star, F_{sw} \right)$ necessitates a larger $\dist(F_w, F_{sw})$.
This implies that \textbf{\textit{the performance improvement of W2SG is probably constrained by its own optimization objective}}.

\paragraph{Upper bound.}
The upper bound provides a theoretical guarantee for W2SG by ensuring that $\dist(F^\star, F_{sw})$ remains bounded and does not grow arbitrarily large.
Firstly, effective W2SG requires choosing a weak model that produces supervision signal closely aligned with the true score, i.e., achieving a small $\dist\left( F^\star, F_w\right)$. 
To this end, employing a stronger weak model is crucial to obtain a tighter upper bound of $\dist(F^\star, F_{sw})$.
Secondly, the worst-case performance of the strong model is constrained by the sum of $\dist\left( F^\star, F_w\right)$ and $\cO \left(\sqrt{\dist(F_w, F_{sw})} \right)$.
By appropriately selecting the weak model and determining the minimizer of~\cref{eqn:fsw-population-minimizer}, both $\dist\left( F^\star, F_w\right)$ and $\cO \left(\sqrt{\dist(F_w, F_{sw})} \right)$ can be kept small, ensuring the practicality of the strong model.

\subsection{Calibration Error Bound} \label{subsec:suff_nece_condition}

In this section, we further explore W2SG through the lens of calibration~\citep{kumar2019verified}, which requires that the prediction confidence should match the actual outcome.
We first state the definition of Marginal Calibration Error (MCE)~\citep{kumar2019verified}, which is an extended version of Expected Calibration Error (ECE)~\citep{guo2017calibration} designed for multi-class classification.
In particular, we use an $\ell_1$ version of MCE, with the weight constant $\frac{1}{k}$ omitted.
\begin{definition}[Marginal Calibration Error~\citep{kumar2019verified}] \label{def:cal:mce}
Let $x \in \cX$, ground truth $y=[y_1, \cdots, y_k]^T \in \{ 0,1 \}^k$ where $\sum_{i=1}^k y_i=1$, and a model $f: \cX \to \cY$.
Define the marginal calibration error of $f$ as:
\vspace{-2pt}
\begin{align} \label{def:cal_err}
    \textit{MCE}(f) = \sum_{i=1}^k \expect_x \left| [f(x)]_i-\prob[y_i=1|[f(x)]_i] \right|.
\end{align}
\end{definition}

\vspace{-7pt}
It measures the difference between model confidence and actual outcome, and $\textit{MCE}(f) \in [0,2]$.
For binary classification, $\textit{MCE}$ is twice the $\textit{ECE}$.
We shed light on upper and lower bounds of calibration of the strong model.

\begin{theorem}[Proved in~\cref{proof:calibration}] \label{theorem:calibration}
Let $\text{MCE}(\cdot)$ be the marginal calibration error in~\cref{def:cal:mce}.
Then there holds
\vspace{-7pt}
\begin{multline}
    \left| \textit{MCE}(F_{sw}) - \textit{MCE}(F_w) \right| \\ \le 2 \cdot \sqrt{1-\exp{\left(-\dist(F_w,F_{sw})\right)}}.
\end{multline}
\end{theorem}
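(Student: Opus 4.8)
The plan is to establish a single two-sided estimate $\left| \textit{MCE}(F_{sw}) - \textit{MCE}(F_w) \right| \le 2\sqrt{1 - \exp(-\dist(F_w, F_{sw}))}$, from which both displayed inequalities follow immediately by rearranging. For a model $f$ write $c_f(x) = \big( \prob[y_1 = 1 \mid [f(x)]_1], \dots, \prob[y_k = 1 \mid [f(x)]_k] \big)$ for its coordinate-wise recalibration, so that \cref{def:cal:mce} reads $\textit{MCE}(f) = \expect_x \| f(x) - c_f(x) \|_1$. Applying the reverse triangle inequality in $\ell_1$ first to the two norms and then splitting the argument,
\begin{align*}
\left| \textit{MCE}(F_{sw}) - \textit{MCE}(F_w) \right|
&\le \expect_x \Big| \| F_{sw}(x) - c_{F_{sw}}(x) \|_1 - \| F_w(x) - c_{F_w}(x) \|_1 \Big| \\
&\le \expect_x \big\| \big( F_{sw}(x) - F_w(x) \big) - \big( c_{F_{sw}}(x) - c_{F_w}(x) \big) \big\|_1 \\
&\le \expect_x \| F_{sw}(x) - F_w(x) \|_1 + \expect_x \| c_{F_{sw}}(x) - c_{F_w}(x) \|_1 ,
\end{align*}
so the task reduces to controlling the pointwise model gap $\expect_x \| F_{sw}(x) - F_w(x) \|_1$ and the discrepancy $\expect_x \| c_{F_{sw}}(x) - c_{F_w}(x) \|_1$ between the two recalibration maps.

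For the model gap I would pass to total variation, $\expect_x \| F_{sw}(x) - F_w(x) \|_1 = 2 \, \expect_x \tv\big( F_w(x), F_{sw}(x) \big)$, and apply the Bretagnolle--Huber inequality at each $x$, $\tv\big( F_w(x), F_{sw}(x) \big) \le \sqrt{1 - \exp\big( -\mathrm{D}_{\mathrm{KL}}(F_w(x) \| F_{sw}(x)) \big)}$. Since $t \mapsto \sqrt{1 - \exp(-t)}$ is concave on $[0, \infty)$, Jensen's inequality pulls the expectation inside the root:
\begin{align*}
\expect_x \| F_{sw}(x) - F_w(x) \|_1
&\le 2 \, \expect_x \sqrt{1 - \exp\big( -\mathrm{D}_{\mathrm{KL}}(F_w(x) \| F_{sw}(x)) \big)} \\
&\le 2 \sqrt{1 - \exp\big( -\expect_x \mathrm{D}_{\mathrm{KL}}(F_w(x) \| F_{sw}(x)) \big)} = 2 \sqrt{1 - \exp\big( -\dist(F_w, F_{sw}) \big)} ,
\end{align*}
using the definition of $\dist$ from \cref{def:cross_entropy} in the last step; this already has exactly the target form and constant.

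The remaining piece, bounding $\expect_x \| c_{F_{sw}}(x) - c_{F_w}(x) \|_1$, is the main obstacle. The map $c_f$ depends on $f$ only through the level sets of its coordinates, and two pointwise-close models can in principle induce quite different conditioning $\sigma$-algebras, so a bare triangle inequality does not close the argument; this term is where the hypotheses must enter. The route I would take is to exploit the confidence assumption already in force (softmax outputs lying in $(0,1]$ and bounded away from $0$) together with the fact that $F_{sw}$ is the population minimizer of $\dist(F_w, \cdot)$ in \cref{eqn:fsw-population-minimizer}, coupling the two coordinate level-set partitions so that $\expect_x \| c_{F_{sw}}(x) - c_{F_w}(x) \|_1$ is itself controlled by $\expect_x \tv(F_w(x), F_{sw}(x))$ and hence obeys the same Bretagnolle--Huber/Jensen bound; the cleanest variant simply identifies the recalibration target across the two models (taking the Bayes-optimal conditional $\prob[y_i = 1 \mid x]$, i.e. a common $c$), in which case this term vanishes outright and the constant $2$ is preserved. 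Finally, since $\textit{MCE}(\cdot) \in [0,2]$ and $2\sqrt{1 - \exp(-t)} \le 2$ for all $t \ge 0$, the resulting two-sided bound is never vacuous, and rearranging $\left| \textit{MCE}(F_{sw}) - \textit{MCE}(F_w) \right| \le 2\sqrt{1 - \exp(-\dist(F_w, F_{sw}))}$ gives precisely the stated inequalities.
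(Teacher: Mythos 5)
Your first bound (the model gap) is exactly the paper's argument: pass to total variation via $\expect_x\|F_{sw}(x)-F_w(x)\|_1 = 2\,\expect_x \tv(F_w(x),F_{sw}(x))$, apply Bretagnolle--Huber pointwise, and pull the expectation inside $t\mapsto\sqrt{1-e^{-t}}$ by Jensen. The difference, and the gap, is the second term $\expect_x\|c_{F_{sw}}(x)-c_{F_w}(x)\|_1$, which you correctly identify as the obstacle but do not actually bound. The "coupling" route you sketch cannot work in general: the recalibration map $c_f$ depends on $f$ only through the level sets of its coordinates, and an arbitrarily small pointwise perturbation of $f$ can change those level sets from completely uninformative (so $c_f$ is the global class marginal) to perfectly class-separating (so $c_g\in\{0,1\}$), making $\expect_x\|c_f(x)-c_g(x)\|_1$ of order one while $\dist(f,g)$ is tiny. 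So no bound of this term by a function of $\dist(F_w,F_{sw})$ alone is available, and as written your main decomposition does not close.

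What the paper actually does is your parenthetical "cleanest variant," adopted from the outset rather than as an aside: it introduces a single calibrated Bayes score $F^b$ with $[F^b(x)]_i=\prob(Y_i=1\mid X=x)$, reads the conditioning in \cref{def:cal:mce} as giving $\textit{MCE}(F)=\expect_X\|F(X)-F^b(X)\|_1$ for \emph{both} models, and then the cross term never appears: the reverse triangle inequality in $\ell_1$ against the common reference $F^b$ gives $|\textit{MCE}(F_{sw})-\textit{MCE}(F_w)|\le\expect_X\|F_w(X)-F_{sw}(X)\|_1$ directly, and the TV/Bretagnolle--Huber/Jensen chain finishes. (You are right to be uneasy here: $\prob[Y_i=1\mid [F(X)]_i]$ and $\prob[Y_i=1\mid X]$ are not the same conditional expectation in general, so this identification is a substantive reading of the definition rather than a free move; but it is the reading under which the theorem is proved.) To repair your write-up, replace both $c_{F_{sw}}$ and $c_{F_w}$ by the common $F^b$ at the start and delete the second term; everything you proved for the first term then constitutes the complete proof.
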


\vspace{-4pt}

\cref{theorem:calibration} demonstrates that the calibration error of $F_{sw}$ is influenced by two key factors: (1) the calibration error of $F_w$, and (2) the teacher-student disagreement, as characterized by the optimization result in~\cref{eqn:fsw-population-minimizer}.
This theoretical result yields two insights.
First, to achieve a strong model with acceptable calibration, the weak teacher should also exhibit acceptable calibration. Otherwise, the strong model will inherit a non-trivial calibration error from the weak teacher as $\dist(F_w,F_{sw})$ goes to zero.
Second, closely imitating the weak supervisor minimizes $\dist(F_w,F_{sw})$, causing the calibration errors of the strong and weak models to converge. 
Taking them together, to ensure W2SG with reasonable calibration and prevent a poorly-calibrated $F_{sw}$, it is crucial to \textit{\textbf{avoid using a poorly-calibrated weak model with an overfitted strong model}}.
Additionally, since models with larger capacity may exhibit higher calibration errors~\citep{guo2017calibration}, a potential trade-off may exist between the weak model's calibration error and the teacher-student disagreement.
In other words, $\textit{MCE}(F_w)$ and $\sqrt{1-\exp{\left(-\dist(F_w,F_{sw})\right)}}$ may not be minimized simultaneously, posing a challenge in selecting the weak model and designing an effective optimization strategy to achieve better calibration in the strong model.

\subsection{Experimental Validation in Language Models}

\begin{figure*}[t]
  \centering
  \subfigure[Accuracy (Pythia).]{
    \includegraphics[width=0.47\textwidth]{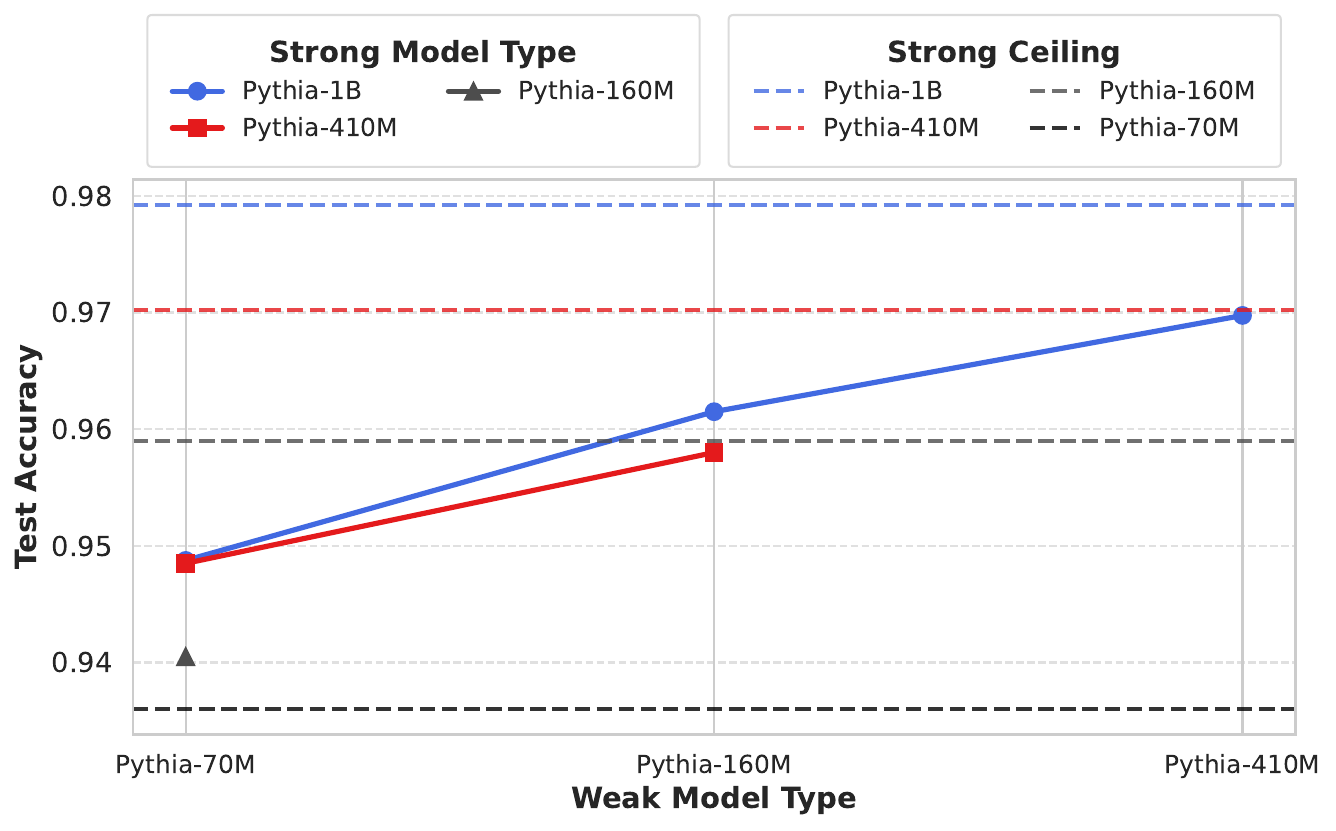}
    \label{fig:pythia_acc}
  }
  \subfigure[Accuracy (GPT-2).]{
    \includegraphics[width=0.47\textwidth]{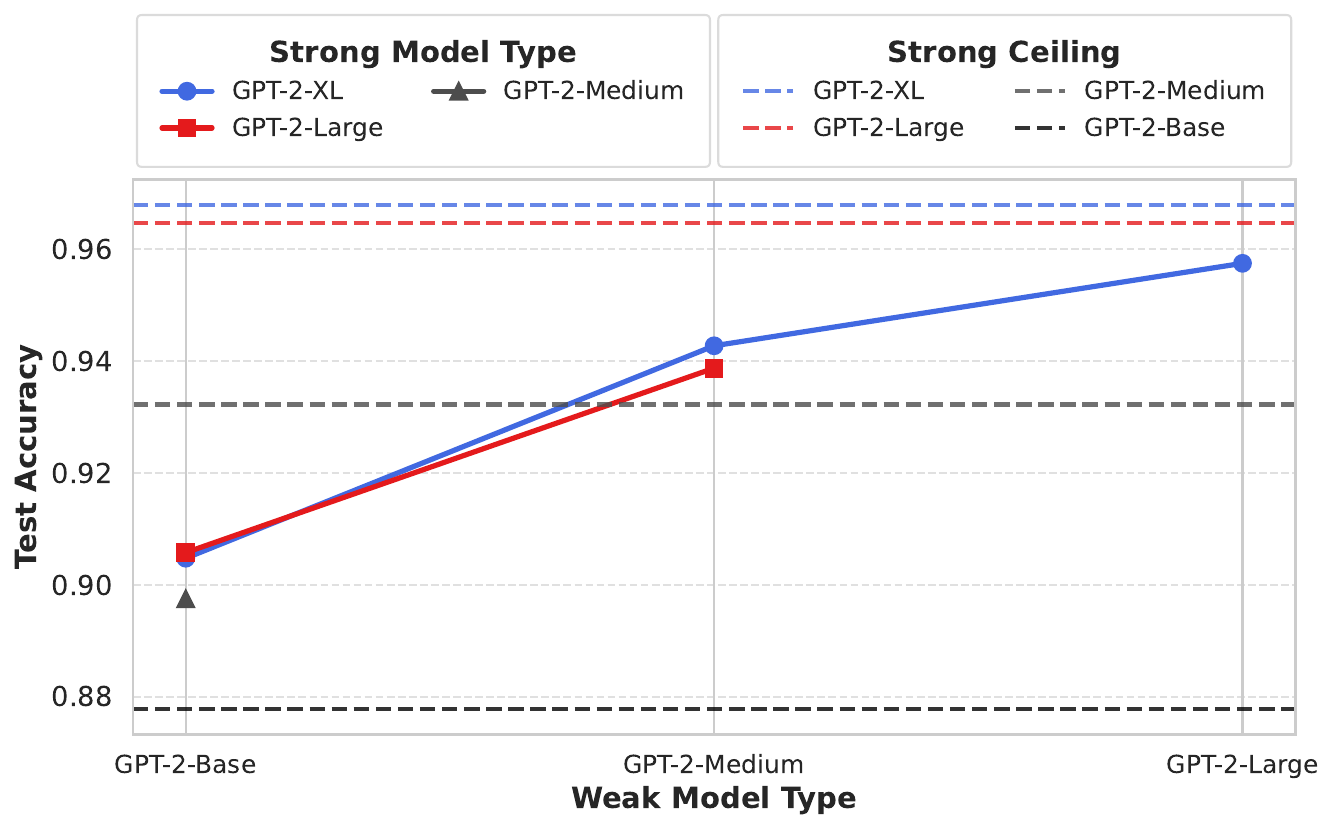}
    \label{fig:gpt_acc}
  }
  \subfigure[ECE (Pythia).]{
    \includegraphics[width=0.47\textwidth]{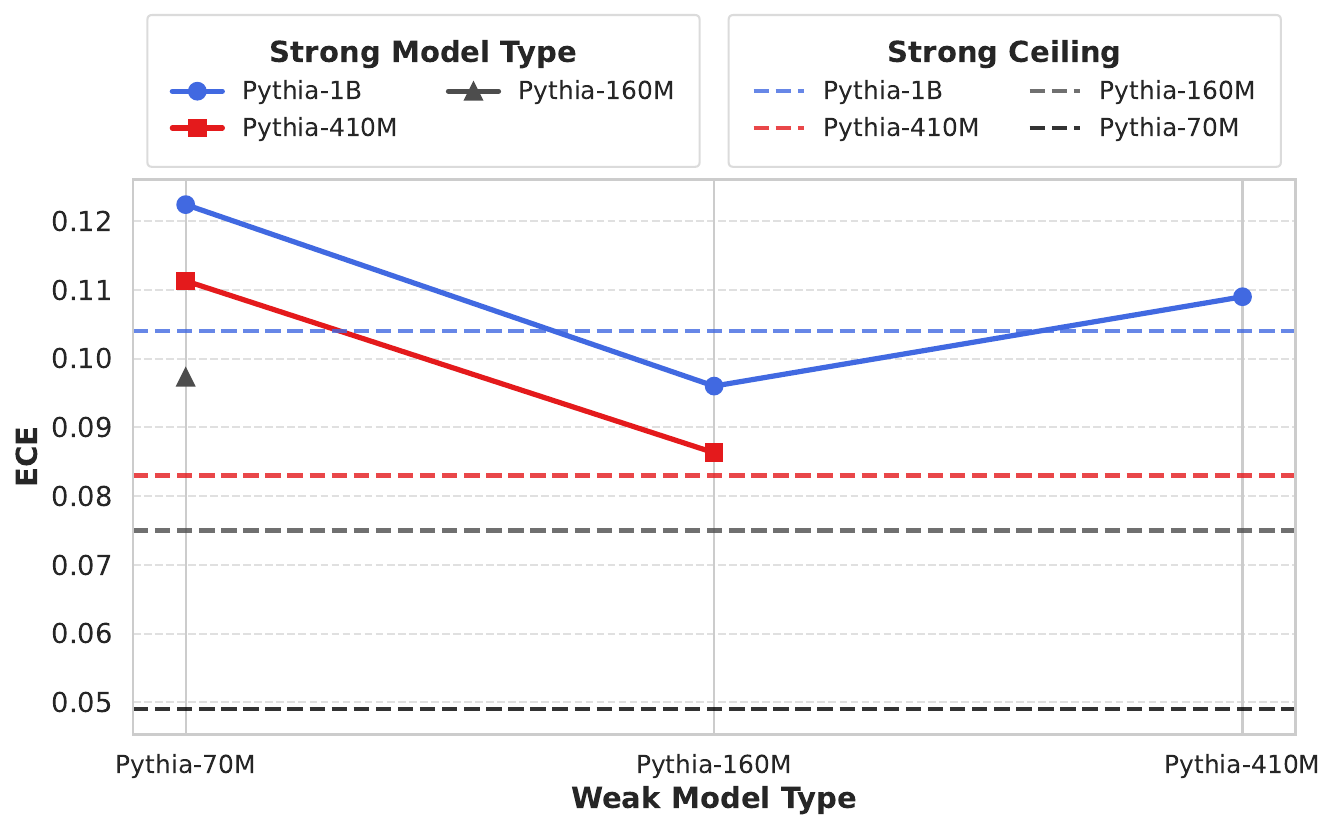}
    \label{fig:pythia_calibration}
  }
  \subfigure[ECE (GPT-2).]{
    \includegraphics[width=0.47\textwidth]{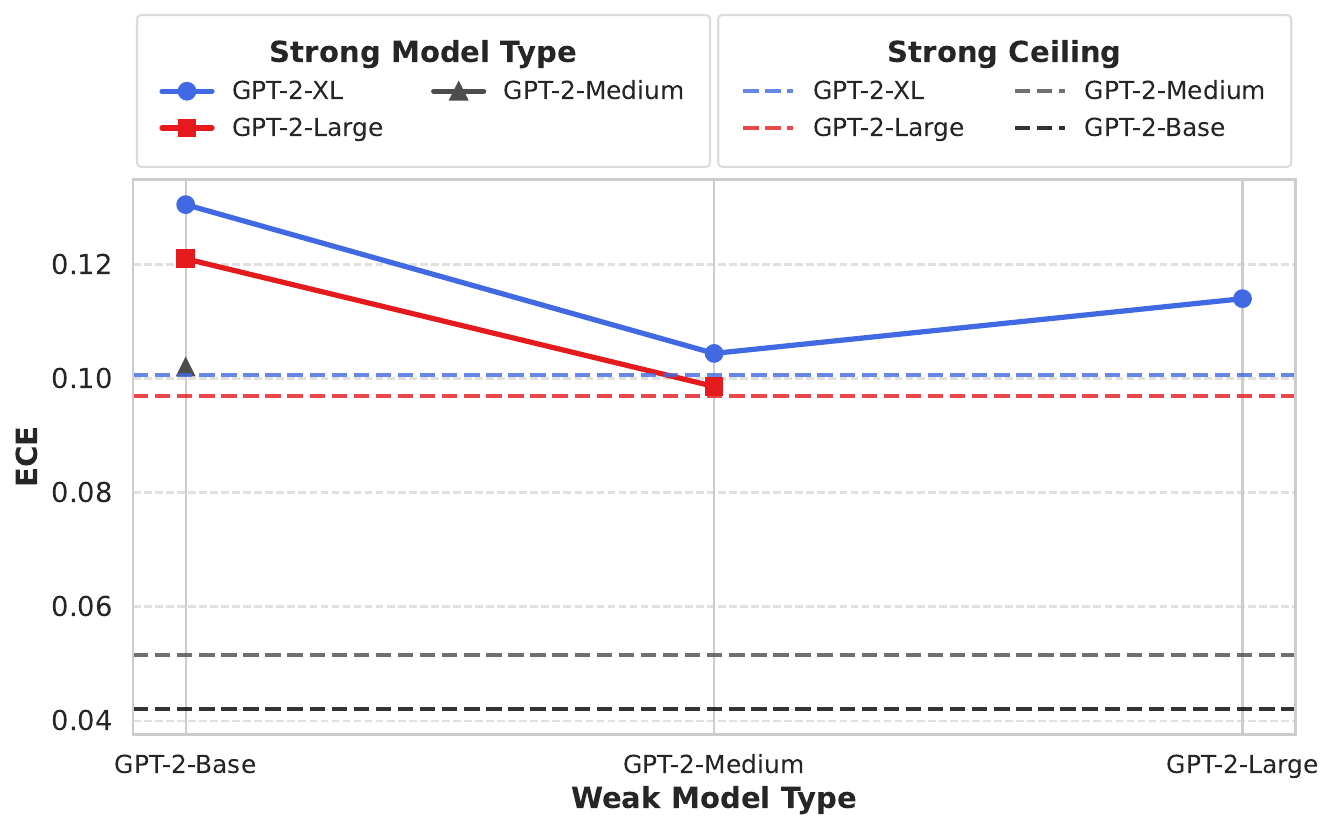}
    \label{fig:gpt_calibration}
  }
  \vspace{-5pt}
  \caption{Accuracy and calibration results for Pythia and GPT-2 series. (\textbf{a}) Test accuracies of Pythia series. (\textbf{b}) Test accuracies of GPT-2 series. 
  Each curve demonstrates the variation in accuracy of W2SG as strong models are supervised by weak models of varying capabilities. ``Strong Ceiling'' corresponds to models fine-tuned using ground truth data. 
  (\textbf{c}) Expected calibration errors of Pythia series. Each curve depicts the change in ECE as strong models are supervised by different weak teachers. (\textbf{d}) Expected calibration errors of GPT-2 series.
  }
  \label{exp_llm_main}
  \vspace{-8pt}
\end{figure*}

\begin{figure*}[t]
  \centering
  \subfigure[Accuracy (Pythia).]{
    \includegraphics[width=0.23\textwidth]{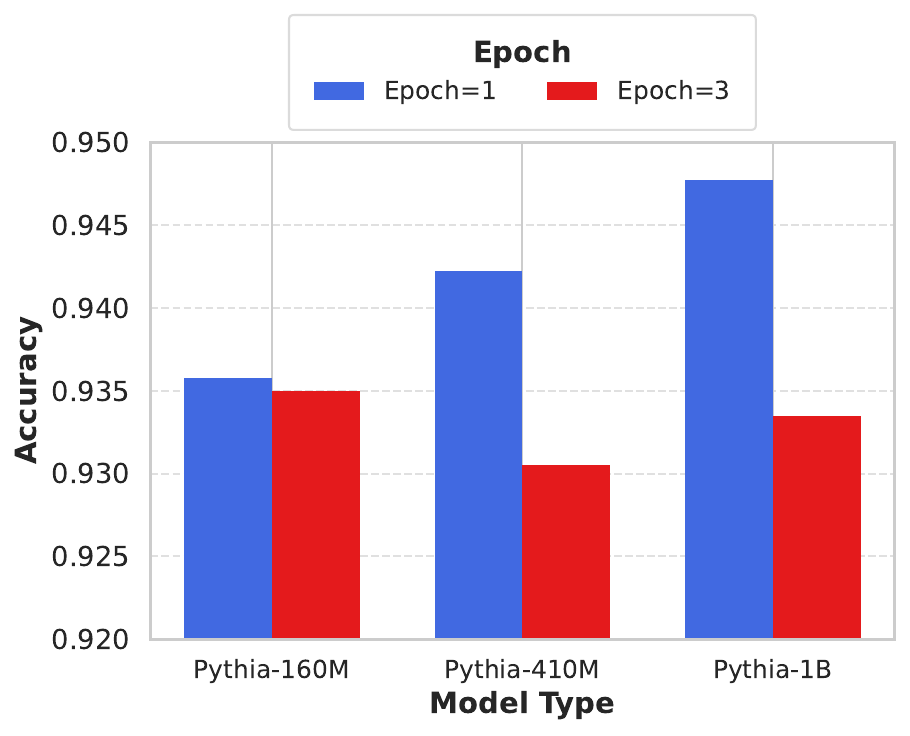}
  }
  \subfigure[Accuracy (GPT-2).]{
    \includegraphics[width=0.23\textwidth]{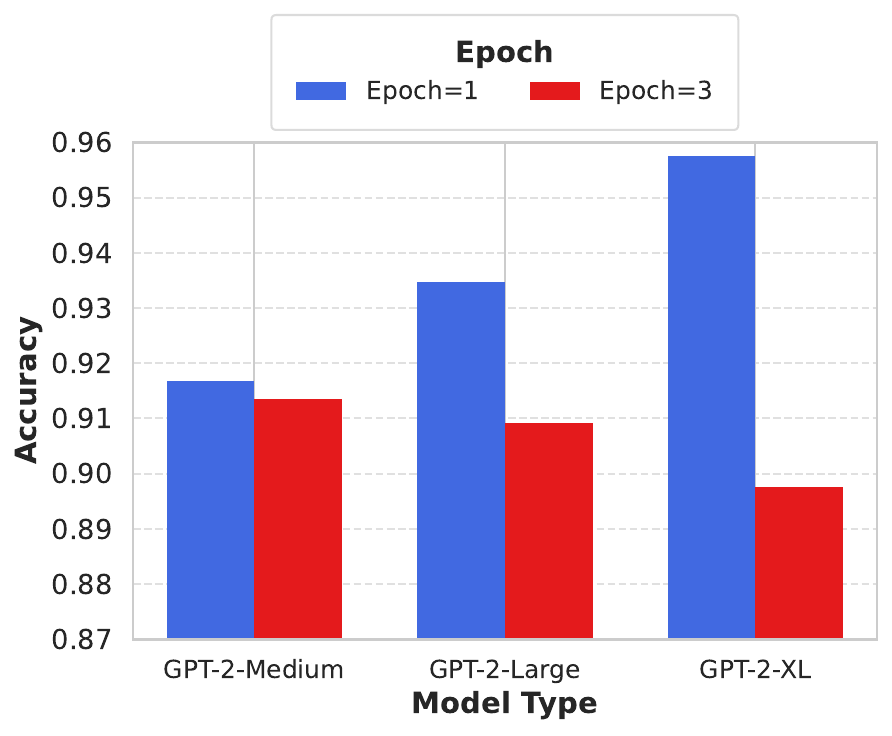}
  }
  \subfigure[ECE (Pythia).]{
    \includegraphics[width=0.23\textwidth]{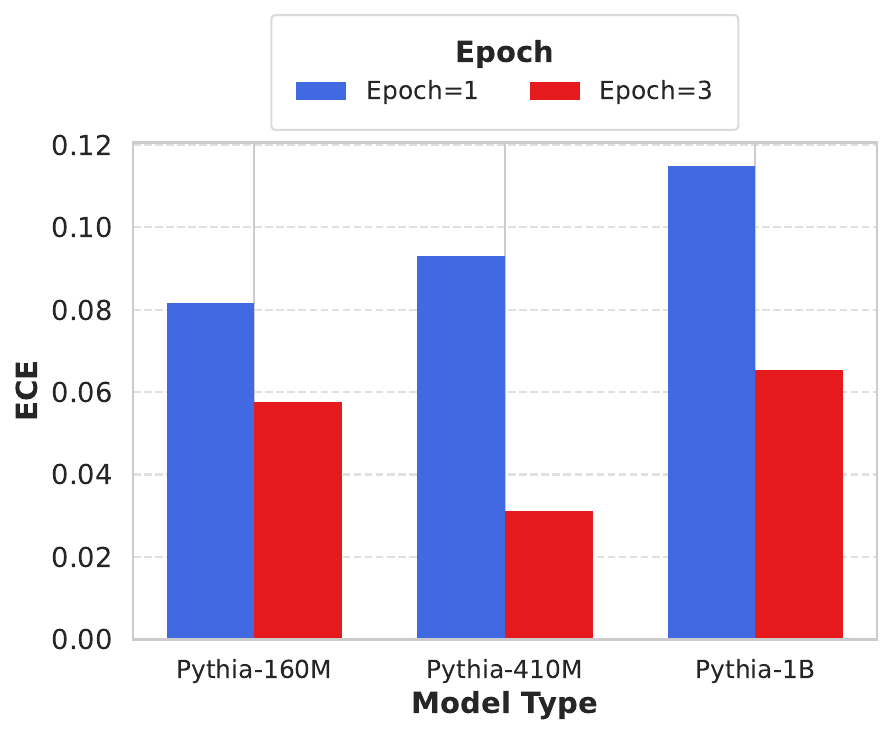}
  }
  \subfigure[ECE (GPT-2).]{
    \includegraphics[width=0.23\textwidth]{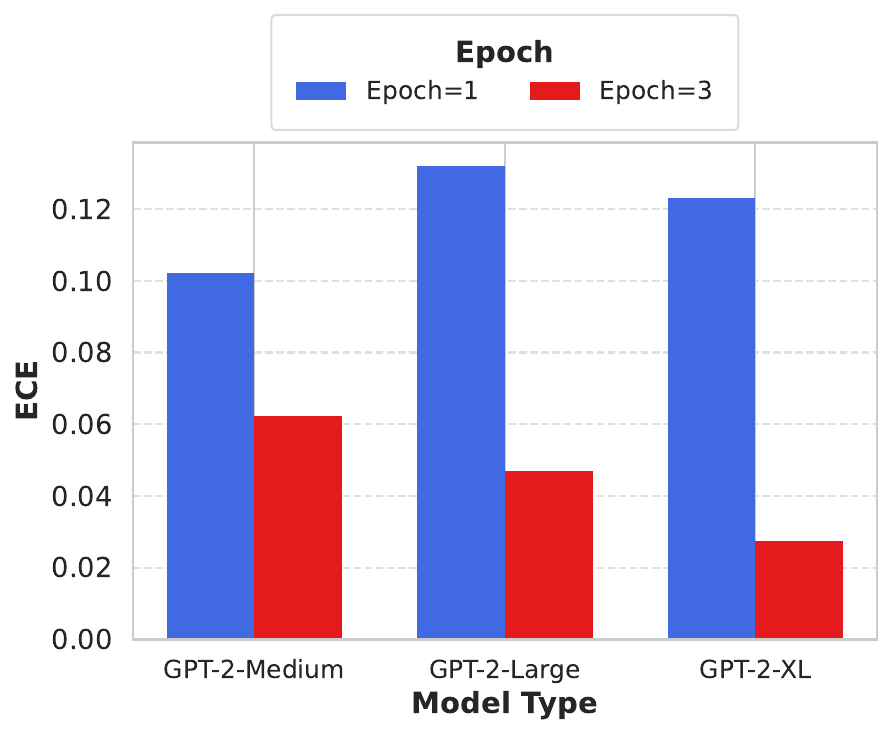}
  }
  \vspace{-7pt}
  \caption{Ablation study for the Pythia and GPT-2 series. (\textbf{a})-(\textbf{b}) Test accuracies of Pythia and GPT-2. The accuracies of Pythia-70M and GPT-2-Base fine-tuned on ground truth data is 92.45\% and 90.95\%, respectively.
  (\textbf{c})-(\textbf{d}) ECE of Pythia and GPT-2. The ECE of Pythia-70M and GPT-2-Base fine-tuned on ground truth data is 0.049 and 0.042, respectively.}
  \label{exp_llm_ablation}
  \vspace{-10pt}
\end{figure*}

In this section, we use language models to verify our theoretical results in W2SG.

\subsubsection{Experimental Setting}

\paragraph{Dataset.}
We define the alignment objective as enabling a weak model to guide a strong model in achieving harmlessness. To this end, we employ CAI-Harmless~\citep{bai2022constitutional}, which is a widely adopted single-turn harmless dataset for reward modeling task. 
Each sample is structured as $(x;y_c,y_r)$, where $x$ denotes the prompt, and $y_c$ and $y_r$ represent the human-preferred and human-rejected completions, respectively.
The dataset is randomly split into three 4K-sample subsets: one for fine-tuning both weak and strong base models, another for weak supervision via weak model predictions, and the last for testing and evaluation. 
The dataset is randomly divided into three distinct subsets:
(1) 4K ground truth samples for fine-tuning both weak and strong base language models;
(2) A held-out set of 4K samples, where labels are predicted by the weak model and used to provide weak supervision for training the strong model;
(3) The remaining 4K samples, reserved for testing and evaluating the performance of all models.

\paragraph{Model.}
To explore weak-to-strong generalization, we utilize GPT-2 series ~\citep{radford2019language} (including GPT-2-Base, GPT-2-Medium, GPT-2-Large, and GPT-2-XL) and Pythia series~\citep{biderman2023pythia} (including Pythia-70M, Pythia-160M, Pythia-410M and Pythia-1B).
For each model, we append a linear projection head to facilitate logit predictions for each completion pair $\Tilde{x}=(x;y_c,y_r)$. 
Consequently, the task can be framed as a binary classification problem, where the model $F$ predicts the soft label as 
$$F(\Tilde{x}) = \text{Sigmoid}(F(y_c)-F(y_r)).$$

\paragraph{Training.}
The models are trained via KL divergence loss. 
Training details are in~\cref{exp_llm_training_detail}.

\paragraph{Metric.}
To evaluate whether a model $F$ can effectively distinguish between chosen and rejected completions ($y_c$ and $y_r$) for a given prompt $x$, we aim for $F$ to assign a higher score to the chosen completion compared to the rejected one.
Specifically, this requires $F(y_c)-F(y_r)>0$ for each completion pair $\Tilde{x}=(x;y_c,y_r)$, which implies $F(\Tilde{x})>0.5$.
Accordingly, the test accuracy of a model $F$ is reported as the fraction of predictions that satisfy $F(\Tilde{x})>0.5$.

\subsubsection{Results and Analysis}
The generalization and calibration performance for Pythia and GPT-2 series are shown in~\cref{exp_llm_main}.
To further investigate how the optimization result $\dist(F_w, F_{sw})$ affect generalization and calibration, we increase the number of epochs to train a strong model that more closely imitates the weak model. The corresponding results are in~\cref{exp_llm_ablation}.

\paragraph{Main results.}
\cref{fig:pythia_acc} and~\cref{fig:gpt_acc} demonstrate that, for the same strong model, the generalization of W2SG increases when supervised by a weak model of greater capacity.
This experimental result is consistent with~\cref{lemma:upper_lower_inf}.
Interestingly, in \cref{fig:pythia_calibration} and~\cref{fig:gpt_calibration}, we observe that for the same weak model as the teacher, a stronger student model with higher capacity tends to exhibit larger ECE. 
This aligns with previous findings on the negative correlation between model capacity and calibration~\citep{guo2017calibration}.

\paragraph{Ablation study.}
We extend the training epochs and design a series of teacher-student pairs with increasing model capacities. 
In particular, we employ Pythia-70M as the weak teacher to supervise Pythia-160M, 410M, and 1B. 
We also utilize GPT-2-Base as the weak teacher for supervising GPT-2-Medium, Large, and XL.
\cref{exp_llm_ablation} illustrates that as we increase the number of epochs to reduce $\dist(F_w, F_{sw})$, \textit{there is a simultaneous decline in both the accuracy and calibration error of other strong models}.
Taking the Pythia series as an example, 
\cref{fig:pythia_acc} and~\cref{fig:pythia_calibration} demonstrate that 
Pythia-70M achieves the lowest accuracy and best ECE performance among the Pythia models. 
While~\cref{theorem:calibration} indicates that reducing $\dist(F_w, F_{sw})$ causes the accuracy and calibration results of strong models to converge toward those of the weak model, 
our experiments show that increasing the number of epochs leads to reduced accuracy and ECE for Pythia-160M, 410M, and 1B.
In other words, the accuracy and ECE of strong models approach those of the weak model, consistent with~\cref{lemma:upper_lower_inf} and~\cref{theorem:calibration}. 
And this trend is also observed in the GPT-2 series.

\paragraph{Overfitting blocks W2SG.}
As the number of epochs increases, \textit{the accuracy of GPT-2-XL drops even below that of GPT-2-Base} (90.95\%). 
This is attributed to the strong expressive power of GPT-2-XL, which leads to overfitting to the weak supervision provided by GPT-2-Base.
Note that the upper bounds derived in~\cref{lemma:upper_lower_inf} and~\cref{theorem:calibration} do not guarantee that the strong model will outperform the weak model in terms of both generalization performance and calibration properties.
The underlying intuition is that if a strong model overfits to the weak supervision, it may closely mimic the weak model's generalization and calibration behavior. 
Consequently, the strong model could end up performing on par with or potentially even worse than the weak model.

\section{Results Beyond Squared Loss} \label{subsec:recover_quantify}

In regression problems under some assumptions, \citet{charikar2024quantifying} proves that the strong model’s error is smaller than the weak model’s, with the gap at least the strong model’s error on the weak labels.
This observation naturally raises the following question:
\textit{Can their proof be extended from squared loss to output distribution divergence?}
In this section, we show how to theoretically bridge the gap between squared loss and KL divergence within the overall proof framework established in~\citet{charikar2024quantifying}.
To begin with, we restate an assumption used in previous study.
\begin{assumption}[Convexity Assumption~\citep{charikar2024quantifying}] \label{convex_set}
The strong model learns fine-tuning tasks from a convex set $\cF_{s}$.
\end{assumption}

To satisfy this assumption, $\cF_s$ can be the class of all linear functions~\citep{charikar2024quantifying}.
It is validated by practice: a popular way to fine-tune a pre-trained model on task-specific data is by tuning the last linear layer of the model~\citep{howard2018universal,kumar2022fine}.

\subsection{Upper Bound (Realizability)} \label{subsub:realize}

Firstly, we consider the case where $\exists f_s \in \cF_s$ such that $F_s = F^\star$ (also called ``Realizability''~\citep{charikar2024quantifying}).
It means we can find a $f_s$ such that $f_s \circ h_s = f^\star \circ h^\star$.
This assumption implicitly indicates the strong power of pre-training. 
It requires that the representation $h_s$ has learned extremely enough information during pre-training, which is reasonable in modern large language models pre-trained on very large corpus~\citep{touvron2023llama,achiam2023gpt}.
We state our result in the realizable setting, which corresponds to Theorem 1 in~\citet{charikar2024quantifying}.

\begin{theorem}[Proved in \cref{proof_theorem_1-main}]
\label{thm:realizable-main}
Given $F^\star$, $F_w$ and $F_{sw}$ defined above.
Consider $\cF_s$ that satisfies Assumption~\ref{convex_set}. 
Consider W2SG using reverse KL divergence loss:
\begin{align*}
    f_{sw} = \argmin_{f \in \cF_{s}}\; \dist(f \circ h_s, f_w \circ h_w).
\end{align*}
Assume that $\exists f_s \in \cF_s$ such that $F_s = F^\star$.
Then
\begin{align} \label{eqn:realizable-main}
    \dist(F^\star, F_{sw}) \le \dist(F^\star, F_w) - \dist(F_{sw}, F_w).
\end{align}
\end{theorem}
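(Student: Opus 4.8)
The plan is to mimic the Pythagorean/variational argument of \citet{charikar2024quantifying} for squared loss, but carried out in the geometry induced by KL divergence. The starting point is that $f_{sw}$ is, by definition, the minimizer over the convex set $\cF_s$ of the functional $f \mapsto \dist(f \circ h_s, f_w \circ h_w)$. Writing $D(f) \triangleq \dist(f \circ h_s, F_w)$, convexity of $\cF_s$ (Assumption~\ref{convex_set}) lets us form, for any competitor $g \in \cF_s$, the segment $f_\lambda = (1-\lambda) f_{sw} + \lambda g$ and use the first-order optimality condition $\left.\tfrac{d}{d\lambda} D(f_\lambda)\right|_{\lambda=0^+} \ge 0$. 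The key point is that $D$ is \emph{linear in its first argument up to an affine term}: since $\dist(p,q) = \int p \log p - \int p \log q$, the map $p \mapsto \dist(p,q)$ has the property that $\dist\big((1-\lambda)p_0 + \lambda p_1, q\big)$ differs from $(1-\lambda)\dist(p_0,q) + \lambda \dist(p_1,q)$ only by the (concave) entropy terms, whose $\lambda$-derivative at $0$ is computable. This is exactly where the \emph{reverse} KL orientation in the theorem statement matters — with $f$ in the first slot the awkward $\log$ falls on the moving argument rather than on $q$.

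Concretely, I would take $g = f_s$, the realizing function with $F_s = f_s \circ h_s = F^\star$. Differentiating $D(f_\lambda)$ at $\lambda = 0$ and using optimality gives an inequality of the form
\begin{align*}
\int \big(F^\star - F_{sw}\big)\Big(\log \tfrac{F_{sw}}{F_w} + 1\Big)\, dx \ \ge\ 0,
\end{align*}
where the ``$+1$'' contribution integrates to zero because both $F^\star$ and $F_{sw}$ are probability-normalized over $\cX$ (the normalization built into Setting~2), so it can be dropped. This leaves $\int (F^\star - F_{sw}) \log \tfrac{F_{sw}}{F_w}\, dx \ge 0$. Now I rewrite the three KL quantities in the claimed bound in terms of the cross-quantities $\int F^\star \log F$ for $F \in \{F^\star, F_{sw}, F_w\}$, expand
\begin{align*}
\dist(F^\star, F_w) - \dist(F^\star, F_{sw}) - \dist(F_{sw}, F_w),
\end{align*}
and collect terms. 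The algebra should telescope into exactly $\int (F^\star - F_{sw}) \log \tfrac{F_{sw}}{F_w}\, dx$ (this is the KL analogue of the three-point / Pythagorean identity $\|a-c\|^2 = \|a-b\|^2 + \|b-c\|^2 + 2\langle a-b, b-c\rangle$, with the inner product replaced by a Bregman cross term), which we have just shown is $\ge 0$; rearranging yields~\eqref{eqn:realizable-main}.

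The main obstacle is making the first-order optimality step rigorous: I need $D(f_\lambda)$ to be differentiable at $\lambda = 0^+$ with the interchange of derivative and integral justified, and I need the boundary/support issues under control — this is presumably where the confidence assumption mentioned in the introduction enters (lower-bounding the model outputs away from $0$, recall $\cY = \{y : 0 < y \le 1\}$, so that $\log F_{sw}$ and $\log F_w$ are integrable and the $\lambda$-derivative of the entropy term $-\int f_\lambda \log f_\lambda$ is finite and equals $-\int (g - f_{sw})(\log F_{sw} + 1)$). A secondary subtlety is that $\dist$ here is defined via an unnormalized-looking integral $\int f \log(f/g)\, dx$, so I should be careful that "normalized to form valid probability distributions" is used consistently — in particular that $\int F^\star dx = \int F_{sw} dx = \int F_w dx = 1$ — since the vanishing of the $+1$ term relies on it. Once differentiability and normalization are pinned down, the remainder is the deterministic Bregman three-point identity, which is routine.
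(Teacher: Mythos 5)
Your proposal is correct and is essentially the paper's own argument: the paper likewise writes the Bregman three-point decomposition $\kl(g\|F_w)=\kl(g\|F_{sw})+\kl(F_{sw}\|F_w)+\int(g-F_{sw})\log\frac{F_{sw}}{F_w}\,dx$, and extracts the first-order optimality condition $\int(F^\star-F_{sw})\log\frac{F_{sw}}{F_w}\,dx\ge 0$ by plugging the segment point $g=F_{sw}+t(F^\star-F_{sw})$ into the projection inequality and Taylor-expanding (their $\cO(t^2)$-versus-$\cO(t)$ step is exactly your one-sided derivative at $\lambda=0^+$, with the $+1$ term killed by $\int F^\star=\int F_{sw}=1$ in the same way). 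Your flagged concerns about outputs bounded away from zero and normalization are legitimate and are handled (somewhat implicitly) by the paper's standing assumption $0<\gamma\le y\le 1$ in Setting~2.
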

\begin{remark}
    The corresponding theorem and proof in the case of forward KL divergence loss is provided in~\cref{thm:realizable} from~\cref{proof_theorem_1}, under an additional assumption.
\end{remark}

In contrast to the symmetric squared loss studied in prior work~\citep{charikar2024quantifying}, the emergence of the reverse KL divergence is inherently tied to the asymmetric properties of the KL divergence.
Although extending previous work to both forward and reverse KL divergences presents significant technical challenges, our results demonstrate the theoretical guarantees of W2SG in these settings.
In Inequality~\eqref{eqn:realizable-main}, the left-hand side represents the error of the weakly-supervised strong model on the true data. 
On the right-hand side, the first term denotes the true error of the weak model, while the second term captures the disagreement between the strong and weak models, which also serves as the minimization objective in W2SG. 
This inequality indicates that the weakly-supervised strong model improves upon the weak model by at least the magnitude of their disagreement, $\dist(F_{sw}, F_w)$.
To reduce the error of $F_{sw}$, \cref{thm:realizable-main} aligns with~\cref{lemma:upper_lower_inf}, highlighting the importance of selecting an effective weak model and the inherent limitations of the optimization objective in W2SG.


\subsection{Upper Bound (Non-Realizability)}

We relax the ``realizability'' condition and draw $n$ i.i.d. samples.
We provide the result in the ``unrealizable'' setting, where the condition $F_s = F^\star$ may not be satisfied for any $f_s \in \cF_s$.
It corresponds to Theorem 2 in~\citet{charikar2024quantifying}.

\begin{theorem}[Proved in~\cref{proof_non-realizable-main}] \label{thm:non-realizable-finite-samples-main}
Given $F^\star$, $F_w$ and $F_{sw}$ defined above.
Consider $\cF_s$ that satisfies~\cref{convex_set}.
Consider weak-to-strong generalization using reverse KL:
\begin{align*}
    & f_{sw} = \argmin_{f \in \cF_{s}}\; \dist(f \circ h_s, f_w \circ h_w),
    \\ & \hat{f}_{sw} = \argmin_{f \in \cF_{s}}\; \hat{d}_{\cP}(f \circ h_s, f_w \circ h_w),
\end{align*}
Denote $\dist(F^\star, F_s) = \eps$. 
With probability at least $1-\delta$ over the draw of $n$ i.i.d. samples, there holds
\begin{multline*} 
\dist(F^\star, \hat{F}_{sw}) \le \dist(F^\star, F_w) - \dist(\hat{F}_{sw}, F_w) \\ + \cO(\sqrt{\eps}) +  \cO\left(\sqrt{\frac{\cC_{\cF_s}}{n}}\right) + \cO\left(\sqrt{\frac{\log(1/\delta)}{n}}\right),
\end{multline*}
where $\cC_{\cF_s}$ is a constant capturing the complexity of the function class $\cF_s$, and the asymptotic notation is with respect to $\eps \to 0, n \to \infty$.
\end{theorem}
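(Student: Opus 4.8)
The plan is to reduce the finite-sample, unrealizable statement to the realizable population-level bound in \cref{thm:realizable-main} by two successive approximation steps: first replacing $F^\star$ with the strong ceiling $F_s$ (paying $\cO(\sqrt{\eps})$ since $\dist(F^\star,F_s)=\eps$), and then replacing the empirical minimizer $\hat{F}_{sw}$ with the population minimizer $F_{sw}$ (paying a uniform-convergence cost in terms of $\cC_{\cF_s}$ and $\log(1/\delta)$). Since $\dist$ is KL-like rather than a metric, the triangle inequality is not available directly; instead I would use the Pythagorean-type inequality for convex sets under Bregman/KL geometry (the analogue of the key lemma used in \citet{charikar2024quantifying} for squared loss) together with the "confidence assumption" mentioned in the paper, which bounds the deviation between KL and a genuine (squared-norm-like) metric on the relevant function class. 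The confidence assumption is what lets one convert $\dist$-distances into something obeying an approximate triangle inequality with controlled constants, so I would invoke it whenever I need to move from $F^\star$ to $F_s$ or to absorb cross terms.

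First I would prove an auxiliary inequality analogous to \eqref{eqn:realizable-main} but with $F_s$ in place of $F^\star$ and $F_{sw}$ replaced by $F_{sw}$ (the \emph{population} minimizer against $F_w$): applying \cref{thm:realizable-main} with $F_s$ playing the role of the realizable target yields $\dist(F_s, F_{sw}) \le \dist(F_s, F_w) - \dist(F_{sw}, F_w)$. Then I would relate $\dist(F^\star, F_{sw})$ to $\dist(F_s, F_{sw})$ and $\dist(F^\star, F_w)$ to $\dist(F_s, F_w)$ using the approximate triangle inequality supplied by the confidence assumption, so that each substitution $F^\star \rightsquigarrow F_s$ introduces only an additive $\cO(\sqrt{\eps})$ (the square root arising because $\dist$ behaves like a squared distance, so distances scale like $\sqrt{\dist}$, and $\sqrt{\eps}\cdot\sqrt{\text{bounded}} = \cO(\sqrt{\eps})$). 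This gives the population-level bound $\dist(F^\star, F_{sw}) \le \dist(F^\star, F_w) - \dist(F_{sw}, F_w) + \cO(\sqrt{\eps})$.

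Next I would handle the empirical-to-population step. Since $\hat{f}_{sw}$ minimizes $\disthat(\cdot, F_w)$ over $\cF_s$ while $f_{sw}$ minimizes $\dist(\cdot, F_w)$, standard generalization/uniform-convergence arguments over the class $\cF_s$ — controlling $\sup_{f\in\cF_s}|\disthat(f\circ h_s, F_w) - \dist(f\circ h_s, F_w)|$ via a Rademacher-complexity or covering-number bound, which is where $\cC_{\cF_s}$ enters — give, with probability at least $1-\delta$, that $\dist(\hat{F}_{sw}, F_w) \le \dist(F_{sw}, F_w) + \cO(\sqrt{\cC_{\cF_s}/n}) + \cO(\sqrt{\log(1/\delta)/n})$, and symmetrically a lower-order control in the other direction. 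Plugging $\hat{F}_{sw}$ into the population inequality in place of $F_{sw}$ (again using the approximate triangle inequality to pass the $\cO(\sqrt{\cdot/n})$ terms from "distance" to "$\dist$" scale, i.e. to keep them of order $\sqrt{\cdot/n}$) and collecting all error terms produces the claimed bound.

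The main obstacle I anticipate is the failure of the exact triangle inequality for KL divergence: every place where \citet{charikar2024quantifying} exploits the Hilbert-space geometry of squared loss (orthogonality, exact Pythagorean decomposition, triangle inequality) must be replaced by an \emph{approximate} statement, and the error terms from these approximations must be shown to be of the stated order rather than blowing up — this is precisely what the confidence assumption is engineered to control, but verifying that the constants it yields are uniform over $\cF_s$ and that they interact correctly with the minimization-objective term $\dist(\hat{F}_{sw},F_w)$ (so that the improvement guarantee survives) is the delicate part. A secondary technical point is ensuring the uniform convergence bound is stated for the (possibly unbounded, since $\log$ appears) KL integrand; the $\cY = (0,1]$ normalization plus the confidence assumption should provide the needed boundedness, but this needs to be checked carefully.
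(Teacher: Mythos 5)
Your high-level route is the same as the paper's: apply the realizable Pythagorean-type inequality of \cref{thm:realizable-main} with $F_s$ in place of $F^\star$, pay $\cO(\sqrt{\eps})$ for swapping $F^\star$ and $F_s$ (the paper does this not via an abstract ``approximate triangle inequality'' but via exact three-term expansions of the KL divergence whose cross terms are bounded by Pinsker's inequality together with the boundedness $\gamma \le y \le 1$ of the outputs --- which is all the ``confidence assumption'' amounts to here; no extra hypothesis beyond \cref{convex_set} is needed for the reverse-KL statement), and pay a Rademacher-complexity uniform-convergence cost for the empirical minimizer.

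There is, however, one concrete step in your plan that would fail as written: ``plugging $\hat{F}_{sw}$ into the population inequality in place of $F_{sw}$.'' Uniform convergence only tells you that the \emph{objective values} $\dist(\hat{F}_{sw},F_w)$ and $\dist(F_{sw},F_w)$ are within $\cO(\sqrt{\cC_{\cF_s}/n})+\cO(\sqrt{\log(1/\delta)/n})$ of each other; it does not by itself imply that $\hat{F}_{sw}$ and $F_{sw}$ are close as functions, so it gives you no handle on $\dist(F^\star,\hat{F}_{sw})-\dist(F^\star,F_{sw})$, which is exactly the quantity you need to control to move the left-hand side from $F_{sw}$ to $\hat{F}_{sw}$. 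Two elements of a non-strictly-convex class can have nearly equal divergence to $F_w$ while being far apart. The paper closes this gap by invoking the projection inequality a \emph{second} time, now with $\hat{F}_{sw}\in V_s$ as the test point: since $F_{sw}$ is the reverse-KL projection of $F_w$ onto the convex set $V_s$ (\cref{claim:Vs-convex}), one gets $\dist(\hat{F}_{sw},F_w)\ge \dist(\hat{F}_{sw},F_{sw})+\dist(F_{sw},F_w)$, which combined with the uniform-convergence upper bound on $\dist(\hat{F}_{sw},F_w)$ forces $\dist(\hat{F}_{sw},F_{sw})\le \cO(\sqrt{\cC_{\cF_s}/n})+\cO(\sqrt{\log(1/\delta)/n})$; this smallness, fed into an exact decomposition of $\dist(F^\star,\hat{F}_{sw})$ with a cross term bounded via the output bounds, completes the argument. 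You need to add this second application of the Pythagorean inequality; everything else in your outline matches the paper's proof.
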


\begin{remark}
    The extension to forward KL divergence loss is provided in~\cref{thm:non-realizable-finite-samples} from~\cref{proof_non-realizable}, under an additional assumption.
\end{remark}

Compared to Inequality~\eqref{eqn:realizable-main}, this bound introduces three another error terms: $\cO(\sqrt{\eps})$ arises due to the non-realizability assumption, and diminishes as the strong ceiling model $F_s$ becomes more expressive.
The remaining two error terms arise from the strong model $\hat{F}_{sw}$ being trained on a finite weakly-labeled sample. They also asymptotically approach zero as the sample size increases.

\begin{figure*}[t]
  \centering
  \subfigure[Realizable (pre-training).]{
    \includegraphics[width=0.31\textwidth]{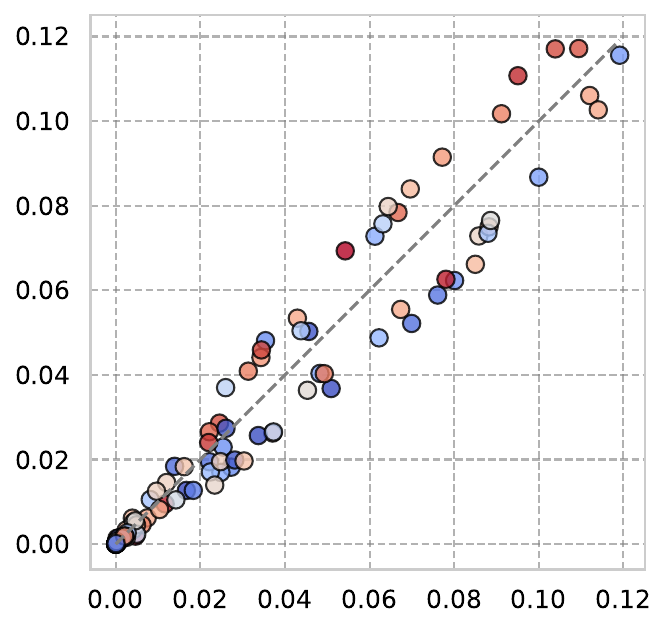}
  }
  \label{fig3:a}
  \subfigure[Non-realizable (pre-training).]{
    \includegraphics[width=0.31\textwidth]{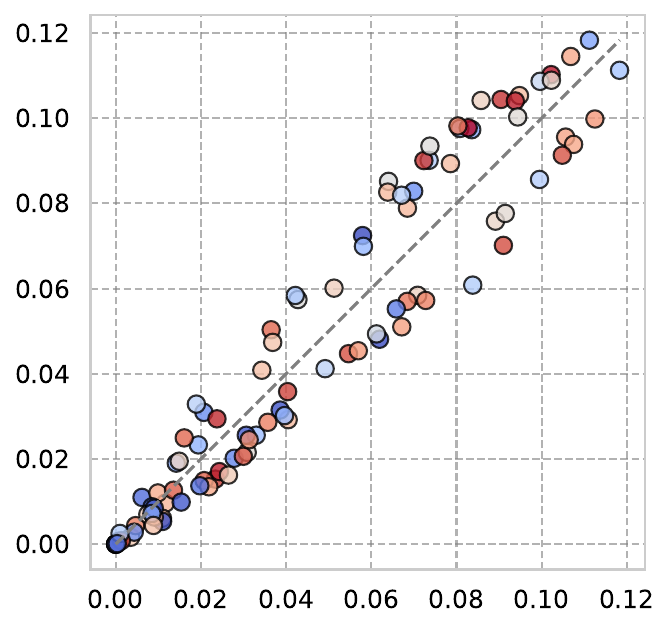}
    \label{fig3:b}
  }
  \subfigure[Non-realizable (perturbation).]{
    \includegraphics[width=0.31\textwidth]{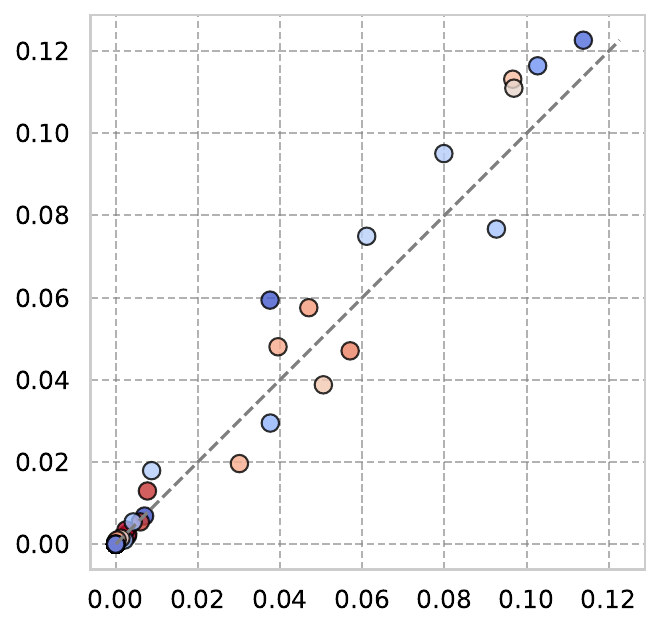}
    \label{fig3:c}
  }
  \subfigure[Realizable (pre-training).]{
    \includegraphics[width=0.31\textwidth]{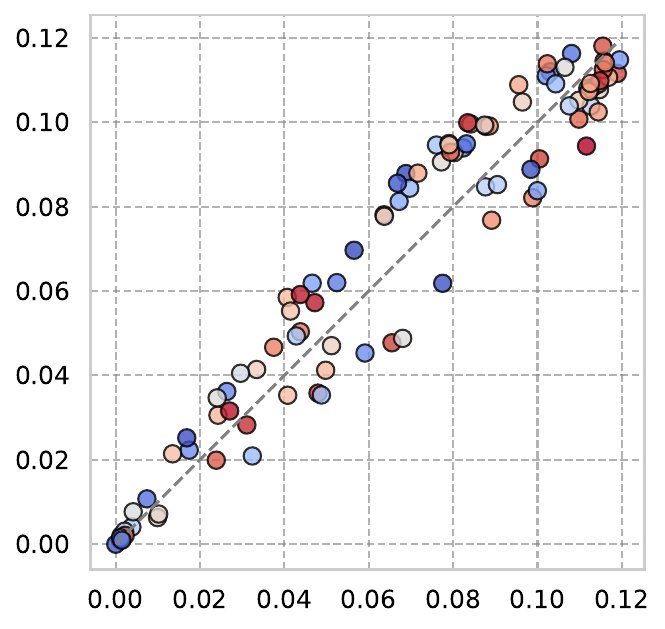}
    \label{fig3:d}
  }
  \subfigure[Non-realizable (pre-training).]{
    \includegraphics[width=0.31\textwidth]{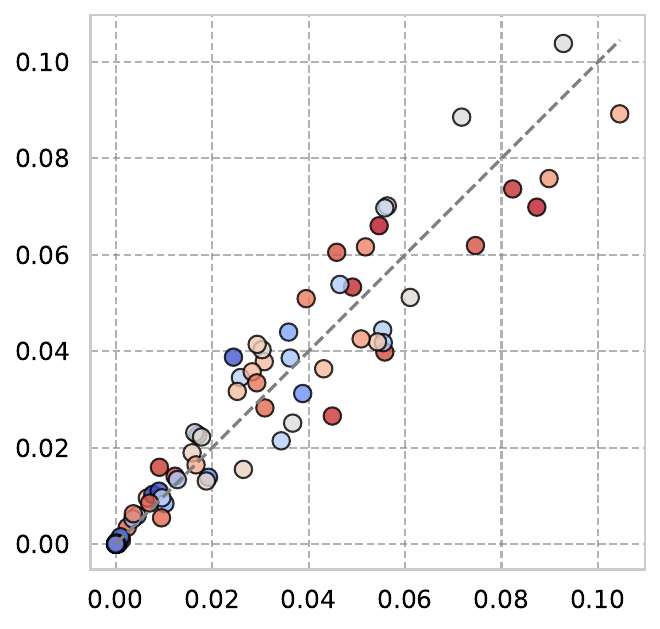}
    \label{fig3:e}
  }
  \subfigure[Non-realizable (perturbation).]{
    \includegraphics[width=0.31\textwidth]{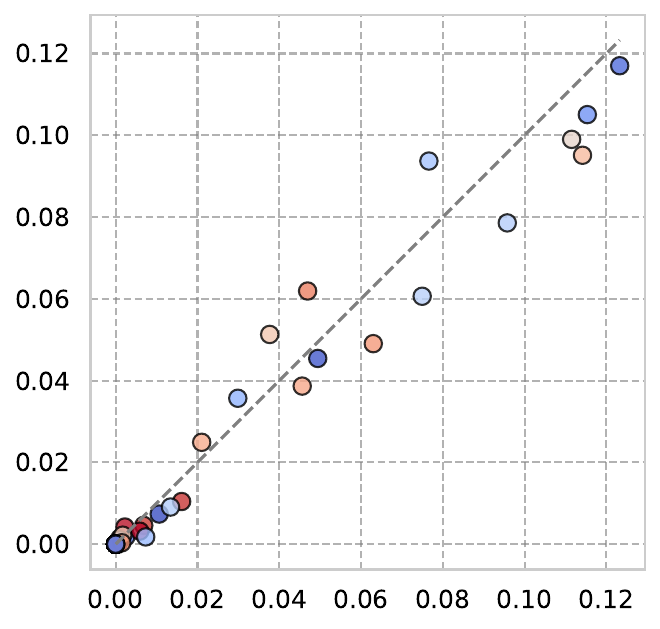}
    \label{fig3:f}
  }
  \vspace{-6pt}
  \caption{Synthetic experiments following~\citet{charikar2024quantifying} using reverse KL divergence loss (\textbf{a-c}) and forward KL divergence loss (\textbf{d-f}). 
  Each point corresponds to a task and the gray dotted line represents $y=x$. 
  $h^{\star}$ is a 16-layer MLP. 
  (\textbf{a,d}) Realizable (pre-training): $h_s=h^\star$, and $h_w$ is a 2-layer MLP obtained by pre-training. (\textbf{b,e}) Non-realizable (pre-training): $h_s$ is an 8-layer MLP, and $h_w$ is a 2-layer MLP. Both $h_s$ and $h_w$ are obtained by pre-training. (\textbf{c,f}) Non-realizable (perturbation): Both $h_s$ and $h_w$ are obtained by directly perturbing the weights in $h^{\star}$:  $h_s=h^{\star}+ \mathcal{N}\left(0,0.01\right)$, and $h_w=h^{\star}+\mathcal{N}\left(0,9\right)$.}
  \label{syn_result:reverse}
  \vspace{-11pt}
\end{figure*}

\subsection{Synthetic Experiments} \label{section:syn_exp}
In this section, we conduct experiments on synthetically generated data to validate the theoretical results in~\cref{subsec:recover_quantify}.
We extend the synthetic experiments from~\citet{charikar2024quantifying} using exactly the same setting except for the loss function defined in~\cref{def:kl_dist_emp}.
In particular, the data follows a zero-mean normal distribution. The weak, strong and ground truth representations ($h_w$, $h_s$ and $h^\star$) are implemented as randomly initialized MLPs. 
One 'Realizable' and two 'Non-realizable' settings are considered.
The full details are in~\cref{appendix:syn_train}.
To validate~\cref{thm:realizable-main}-\ref{thm:non-realizable-finite-samples-main} and visualize the trend clearly, we plot $\dist(F^\star, F_w)-\dist(F^\star, F_{sw})$ on the $x$-axis versus $\dist(F_{sw}, F_w)$ on the $y$-axis. The results are presented in~\cref{syn_result:reverse}(a)-(c).
We also examine forward KL divergence loss in~\cref{syn_result:reverse}(d)-(f). 

For the reverse KL divergence loss, the points in cluster around the line $y=x$.
This suggests that 
$\dist(F^\star, F_w)-\dist(F^\star, F_{sw}) \approx \dist(F_{sw}, F_w)$, which aligns with previous results~\citep{charikar2024quantifying} for squared loss.
It is also consistent with our theoretical results in~\cref{subsec:recover_quantify}, suggesting that the improvement over the weak teacher can be quantified by the disagreement between strong and weak models.
For the forward KL divergence loss, the observed trend closely mirrors that of reverse KL. 
The dots also are around the line $y=x$, suggesting the same relationship for the forward KL.

\section{Conclusion}
This paper provides a theoretical analysis of W2SG. 
In the classification setting, we establish upper and lower bounds for both generalization and calibration errors of the strong model, revealing that the primary limitations arise from the weak model and the optimization objective. 
These bounds emphasize two critical insights: (1) the weak model must demonstrate strong generalization and calibration performance, and (2) the strong model should avoid excessive training to prevent overfitting on weak supervision.
In the regression setting, we extend previous work to output distribution divergence loss, proving that a strong model can outperform its weak teacher by at least their disagreement under certain assumptions. 
Overall, we hope this work deepens the understanding of W2SG and inspires future research on its foundations.

\section*{Limitations}

While our work offers theoretical insights and empirical validation in W2SG, several limitations warrant discussion.
First, although the theoretical framework in~\cref{section:universal_result} provides broad conceptual understanding, the assumptions in \cref{subsec:recover_quantify} may not fully align with real-world LLM deployment scenarios. This challenge, however, is common across most theoretical analyses of W2SG. Despite this limitation, our findings establish a foundational framework for future research to refine W2SG theory in LLMs.
Second, our empirical evaluation is limited to two established alignment-focused binary classification tasks using relatively small-scale models. While the results validate our theory, further investigation is needed to assess the generalizability of our approach to more diverse datasets and larger model architectures. Addressing this in future studies will be critical for validating the broader applicability of our theory.

\section*{Broader Impact and Ethics Statement}

This work on weak-to-strong generalization aims to improve the alignment of superhuman models with human values. 
While our theoretical and empirical insights highlight the potential of this approach, we acknowledge the risks of propagating biases or errors from the weak model to the strong model. 
To address these concerns, we emphasize the importance of ensuring the weak model's generalization and calibration, as well as carefully balancing the strong model's optimization to avoid over-reliance on weak supervision. 
We encourage rigorous testing, transparency, and ongoing monitoring in real-world applications to ensure the safe and ethical deployment of such systems. 
Our work contributes to the broader effort of aligning advanced AI with human values, but its implementation must prioritize fairness, accountability, and safety.


\bibliography{custom}

\newpage
\onecolumn
\appendix

\tableofcontents
\newpage

{\LARGE \centering \textbf{Appendix} \par}
\section{Further Related Work} \label{appendix:related_work}

\paragraph{Teacher-student learning paradigm.}
The student-teacher training paradigm~\citep{meseguer2019dali,meng2019conditional}, which involves first training a teacher model and then using its outputs (e.g., pseudo-labels or soft targets) to guide the training of a student model, has become a cornerstone in various machine learning domains.
This approach is particularly prominent in knowledge distillation~\citep{hinton2015distilling,beyer2022knowledge}, semi-supervised learning~\citep{tarvainen2017mean} and domain adaptation~\citep{shu2018dirt}.
It can also be used in other fields like curriculum learning~\citep{matiisen2019teacher} and continual learning~\citep{lee2021continual}.
However, most prior work assumes that the teacher is either more capable or at least comparable to the student in terms of model capacity or performance. 
In contrast, weak-to-strong generalization~\citep{burns2023weak} explores a less studied setting where the student model is significantly more capable than the teacher, which is different from the traditional assumptions of the student-teacher framework.
By theoretically investigating this setting, we aim to uncover novel insights into the capabilities and limitations of weak-to-strong generalization.
We hope that our theoretical investigation into it will serve as a catalyst for advancements not only in the domain of super-alignment but also in the broader landscape of teacher-student learning paradigm.

\paragraph{Weakly-supervised learning.}
Weakly-supervised learning has emerged as a powerful paradigm to address the challenges of limited labeled data by leveraging weak supervision~\citep{ratner2020snorkel}.
Such weak supervision may be incomplete (i.e., only a small subset of labels are given), inexact (i.e., only coarse-grained labels are given) and inaccurate (i.e., the given labels are noisy)~\citep{zhou2018brief}.
This problem setting is also closely related to label noise~\citep{song2022learning} and semi-supervised learning~\citep{van2020survey}.
To address the problem of weakly-supervised learning, practical trials leverage these various forms of weak supervision, such as utilizing noisy labels~\citep{cheng2020weakly}, coarse-grained labels~\citep{oquab2015object}, and incomplete annotations~\citep{papadopoulos2017training}.
And most of them improving model performance within the limitations of weak supervision.
In contrast, weak-to-strong generalization explores a distinct yet related direction: it investigates how a strong model, when trained on weak supervision, can not only correct the errors of the weak supervisor but also generalize to instances where the weak supervisor is uncertain or incorrect~\citep{burns2023weak,yang2024super}. 
We hope that our theoretical exploration of weak-to-strong generalization can inspire not only the field of super-alignment but also research in weakly-supervised learning.

\paragraph{Calibration.}
Calibration is an important concept about uncertainty estimation and confidence~\citep{guo2017calibration,kuleshov2018accurate,kumar2019verified,mehrtash2020confidence} in machine learning.
There are several kinds of definition for calibration. For instance, taking expectation conditioned on the data distribution~\citep{kull2019beyond,kumar2019verified,roelofs2022mitigating} (Also, see Definition 3 in~\citep{pleiss2017fairness} and Equation (2) in~\citep{liu2019implicit} in the fairness literature), and
(2) taking expectation conditioned on the probability score~\citep{naeini2015obtaining,guo2017calibration}.
Researchers also investigate the calibration in natural language processing~\citep{desai2020calibration,guo2021overview,ulmer2022exploring,chen-etal-2023-close}.
In recent years, the calibration of large language models has garnered significant attention~\citep{zhu2023calibration,tian2023just,liang2022holistic}, with a thorough survey provided in~\citep{geng2024survey}.
However, to the best of our knowledge, while confidence issues in weak-to-strong generalization have been investigated in~\citep{burns2023weak}, the role of calibration has not been sufficiently investigated.
In this paper, we theoretically demonstrate how strong model's calibration is affected in W2SG.
And we believe that calibration warrants further in-depth investigation in this field.

\paragraph{Information-theoretic analysis.}
Information-theoretic analysis is commonly employed to bound the expected generalization error in supervised learning~\citep{russo2016controlling,xu2017information}, with subsequent studies providing sharper bounds~\citep{bu2020tightening,wang2023tighter}. 
These bounds have been used to characterize the generalization ability of stochastic gradient-based optimization algorithms~\citep{pensia2018generalization}. 
Furthermore, this theoretical framework has been extended to diverse settings, including meta-learning~\citep{chen2021generalization}, semi-supervised learning~\citep{aminian2022information}, transductive learning~\citep{tang2023information}, and domain adaptation~\citep{wang2022information}. 
For a comprehensive overview of these developments, we refer readers to the recent monograph by~\citet{hellstrom2023generalization}. 
Nonetheless, despite its extensive application across various domains, the information-theoretic analysis of super-alignment~\citep{openai_superalignment}, particularly in the context of weak-to-strong generalization, remains largely underexplored.
In this paper, we use KL divergence to analyze weak-to-strong generalization, which is not considered in previous work. 
KL divergence is an information-theoretic measure between two probability distributions in information theory~\citep{cover1999elements}.
And how to extend it to other information-theoretic measures remains an open question and warrants further exploration in future work.

\section{Main Proof}

\subsection{Proof of~\cref{lemma:upper_lower_inf}} \label{proof_lemma_inf}

We first state some preliminaries for the proof.

\begin{lemma}[Donsker and Varadhan’s variational formula~\citep{donsker1983asymptotic}] \label{lemma:donsker}
Let $Q, P$ be probability measures on $\cX$, for any bounded measurable function $f: \cX \rightarrow \mathbb{R}$, we have $$\mathrm{D}_{\mathrm{KL}}(Q \| P)=\sup _f \mathbb{E}_{x \sim Q}[f(x)]-\log \mathbb{E}_{x \sim P}[\exp f(x)].$$
\end{lemma}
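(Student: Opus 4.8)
The plan is to prove the claimed identity as two inequalities: first that $\mathbb{E}_{x\sim Q}[f] - \log\mathbb{E}_{x\sim P}[e^{f}]\le \mathrm{D}_{\mathrm{KL}}(Q\|P)$ for \emph{every} bounded measurable $f$, which gives ``$\sup\le\mathrm{D}_{\mathrm{KL}}$'', and then that the left-hand side can be pushed up to $\mathrm{D}_{\mathrm{KL}}(Q\|P)$ along a sequence of admissible $f$'s, which gives the reverse inequality.

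For the first inequality I would use a change-of-measure (Gibbs) argument. If $Q\not\ll P$ then $\mathrm{D}_{\mathrm{KL}}(Q\|P)=\infty$ and the bound is vacuous, so assume $Q\ll P$. Fix a bounded measurable $f$, let $Z_f:=\mathbb{E}_{x\sim P}[e^{f(x)}]\in(0,\infty)$ (finite since $f$ is bounded), and introduce the tilted probability measure $P_f$ defined by $\mathrm{d}P_f/\mathrm{d}P=e^{f}/Z_f$. Computing the Radon--Nikodym derivative $\mathrm{d}Q/\mathrm{d}P_f=(\mathrm{d}Q/\mathrm{d}P)\,Z_f\,e^{-f}$ gives the identity
\begin{equation*}
\mathrm{D}_{\mathrm{KL}}(Q\|P_f)=\mathrm{D}_{\mathrm{KL}}(Q\|P)-\mathbb{E}_{x\sim Q}[f(x)]+\log Z_f .
\end{equation*}
Since $\mathrm{D}_{\mathrm{KL}}(Q\|P_f)\ge 0$ (nonnegativity of KL divergence, i.e.\ Jensen applied to the convex function $-\log$), rearranging yields exactly $\mathbb{E}_{x\sim Q}[f]-\log\mathbb{E}_{x\sim P}[e^{f}]\le\mathrm{D}_{\mathrm{KL}}(Q\|P)$, hence $\sup_f\big(\mathbb{E}_{x\sim Q}[f]-\log\mathbb{E}_{x\sim P}[e^{f}]\big)\le\mathrm{D}_{\mathrm{KL}}(Q\|P)$.

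For the reverse inequality, the formal maximizer is $f^{\star}:=\log(\mathrm{d}Q/\mathrm{d}P)$, at which the identity above collapses to equality; the obstacle is that $f^{\star}$ need not be bounded, so it is not an admissible test function. I would therefore truncate: set $f_M:=\max(-M,\min(M,f^{\star}))$, which is bounded and measurable, and let $M\to\infty$. One checks $\mathbb{E}_{x\sim P}[e^{f_M}]\to\mathbb{E}_{x\sim P}[e^{f^{\star}}]=1$ by dominated convergence (dominating $e^{f_M}$ by $e^{f^{\star}}\vee 1\in L^{1}(P)$), so $\log\mathbb{E}_{x\sim P}[e^{f_M}]\to 0$; and $\mathbb{E}_{x\sim Q}[f_M]\to\mathbb{E}_{x\sim Q}[f^{\star}]=\mathrm{D}_{\mathrm{KL}}(Q\|P)$ by splitting $f_M$ into positive and negative parts and applying monotone convergence to each, the key point being that the negative part satisfies $\mathbb{E}_{x\sim Q}[(f^{\star})^{-}]\le 1$ (from $\log t\le t-1$), so there is no $\infty-\infty$ cancellation even when $\mathrm{D}_{\mathrm{KL}}(Q\|P)=\infty$. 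Hence $\mathbb{E}_{x\sim Q}[f_M]-\log\mathbb{E}_{x\sim P}[e^{f_M}]\to\mathrm{D}_{\mathrm{KL}}(Q\|P)$. The case $Q\not\ll P$ left open in the first step is disposed of directly: pick $A$ with $P(A)=0<Q(A)$ and take $f=t\,\mathbf{1}_{A}$, so that $\mathbb{E}_{x\sim P}[e^{f}]=1$ while $\mathbb{E}_{x\sim Q}[f]=tQ(A)\to\infty$ as $t\to\infty$.

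I expect the main obstacle to be this truncation/limiting step rather than the algebra of the change of measure: because \cref{lemma:donsker} restricts to \emph{bounded} $f$, the supremum is in general not attained, and one has to verify that both terms $\mathbb{E}_{x\sim Q}[f_M]$ and $\log\mathbb{E}_{x\sim P}[e^{f_M}]$ converge to the right limits simultaneously and uniformly across the three regimes $\mathrm{D}_{\mathrm{KL}}(Q\|P)<\infty$, $\mathrm{D}_{\mathrm{KL}}(Q\|P)=\infty$ with $Q\ll P$, and $Q\not\ll P$. Controlling the integrability of $(\log(\mathrm{d}Q/\mathrm{d}P))^{-}$ under $Q$ is the single ingredient that makes all three cases go through cleanly.
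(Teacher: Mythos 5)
Your proposal addresses a statement that the paper itself never proves: \cref{lemma:donsker} is imported verbatim from Donsker--Varadhan with a citation and used purely as a black box inside the proofs of the generalization bounds, so there is no in-paper argument to compare against. Judged on its own terms, your proof is the standard one and it is correct and complete. The first half (the Gibbs tilting $\mathrm{d}P_f/\mathrm{d}P = e^{f}/Z_f$, nonnegativity of $\mathrm{D}_{\mathrm{KL}}(Q\|P_f)$, and the rearrangement, with the $Q\not\ll P$ case dismissed as vacuous) is exactly the textbook route to ``$\sup \le \mathrm{D}_{\mathrm{KL}}$'', and the rearrangement is legitimate because boundedness of $f$ keeps $\log Z_f$ and $\mathbb{E}_Q[f]$ finite while the case $\mathrm{D}_{\mathrm{KL}}(Q\|P)=\infty$ is trivial. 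The second half correctly identifies the only delicate points: the domination $e^{f_M}\le \max(e^{f^\star},1)\in L^1(P)$ giving $\log\mathbb{E}_P[e^{f_M}]\to 0$, the bound $\mathbb{E}_Q\bigl[(\log(\mathrm{d}Q/\mathrm{d}P))^-\bigr]\le 1$ ensuring no $\infty-\infty$ issue when the truncations are split into positive and negative parts, and the indicator test functions $f=t\,\mathbf{1}_A$ handling the singular case where the supremum is genuinely $+\infty$. Note also that your truncation automatically takes care of the set $\{\mathrm{d}Q/\mathrm{d}P=0\}$, which is $Q$-null and on which $e^{f_M}\le 1$ still obeys the domination, so the limit $\mathbb{E}_P[e^{f_M}]\to\mathbb{E}_P[\mathrm{d}Q/\mathrm{d}P]=1$ goes through without further comment. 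In short: a sound, self-contained proof of a result the paper only cites.
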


\begin{lemma}[Hoeffding's lemma] \label{hoeffding_lemma}
Let $X \in \R$ such that $a \leq X \leq b$. Then, for all $\lambda \in \mathbb{R}$,
$$\mathbb{E}\left[e^{\lambda(X-\mathbb{E}[X])}\right] \leq \exp \left(\frac{\lambda^2(b-a)^2}{8}\right).$$
\end{lemma}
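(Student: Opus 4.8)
The plan is to prove Hoeffding's lemma by the classical reduction to a one-dimensional convexity estimate. First I would center the variable: set $Y = X - \E[X]$, so that $a' \le Y \le b'$ almost surely with $a' = a - \E[X] \le 0 \le b - \E[X] = b'$ and $b' - a' = b - a$, and $\E[Y] = 0$. Since $\E[e^{\lambda(X - \E[X])}] = \E[e^{\lambda Y}]$, it suffices to show $\E[e^{\lambda Y}] \le \exp\!\big(\lambda^2 (b-a)^2 / 8\big)$. The degenerate case $a = b$ forces $X$ to be a.s.\ constant and makes both sides equal $1$, so I may assume $a < b$.

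Next I would exploit the convexity of $y \mapsto e^{\lambda y}$ on $[a', b']$: for every $y \in [a',b']$ one writes $y$ as the convex combination $y = \frac{b' - y}{b' - a'} a' + \frac{y - a'}{b' - a'} b'$, whence $e^{\lambda y} \le \frac{b' - y}{b' - a'} e^{\lambda a'} + \frac{y - a'}{b' - a'} e^{\lambda b'}$. Taking expectations and using $\E[Y] = 0$ eliminates the terms linear in $Y$ and yields $\E[e^{\lambda Y}] \le \frac{b'}{b'-a'} e^{\lambda a'} - \frac{a'}{b'-a'} e^{\lambda b'}$.

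Then I would reparametrize: put $p = \frac{-a'}{b' - a'} \in [0,1]$ and $u = \lambda(b' - a') = \lambda(b-a)$, so the right-hand side becomes $(1-p) e^{\lambda a'} + p e^{\lambda b'} = e^{-pu}\big(1 - p + p e^{u}\big) = e^{\varphi(u)}$ with $\varphi(u) := -pu + \log\!\big(1 - p + p e^{u}\big)$. It then remains to prove $\varphi(u) \le u^2/8$ for all $u \in \R$. I would check $\varphi(0) = 0$ and $\varphi'(0) = -p + \frac{p}{1-p+p} = 0$, and compute $\varphi''(u) = \frac{p(1-p)e^u}{(1 - p + p e^u)^2} = t(1-t)$ where $t := \frac{p e^u}{1 - p + p e^u} \in [0,1]$; since $t(1-t) \le 1/4$ on $[0,1]$, we get $\varphi''(u) \le 1/4$ uniformly. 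By Taylor's theorem with Lagrange remainder there is $\xi$ between $0$ and $u$ with $\varphi(u) = \varphi(0) + u\varphi'(0) + \tfrac{u^2}{2}\varphi''(\xi) \le \tfrac{u^2}{8}$, and substituting $u = \lambda(b-a)$ finishes the proof.

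The argument is entirely routine; the only points needing care are the bookkeeping in the degenerate cases ($a=b$, or $p \in \{0,1\}$, i.e.\ $X$ equal a.s.\ to an endpoint, where $\varphi$ is linear and the bound is immediate) and the elementary inequality $t(1-t) \le 1/4$. An alternative route I could mention sets $\psi(\lambda) = \log \E[e^{\lambda Y}]$ and observes that $\psi''(\lambda)$ is the variance of $Y$ under the exponentially tilted law $d\mathbb{Q}_\lambda \propto e^{\lambda Y}\, d\mathbb{P}$, which by Popoviciu's inequality on a range of length $b-a$ is at most $(b-a)^2/4$; the same second-order Taylor expansion then gives the claim. I would present the convexity version to keep everything self-contained and avoid differentiating under the expectation.
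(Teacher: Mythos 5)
Your proof is correct: the centering, the convexity bound $e^{\lambda y} \le \frac{b'-y}{b'-a'}e^{\lambda a'} + \frac{y-a'}{b'-a'}e^{\lambda b'}$, the reparametrization to $\varphi(u) = -pu + \log(1-p+pe^u)$, the computation $\varphi''(u) = t(1-t) \le 1/4$, and the second-order Taylor expansion together give exactly the claimed bound $\exp\left(\lambda^2(b-a)^2/8\right)$, and you handle the degenerate cases properly. Note that the paper itself offers no proof of this statement: it quotes Hoeffding's lemma as a standard preliminary (used later only to certify that a bounded function is $\sigma$-subgaussian with $\sigma = (b-a)/2$), so there is nothing to compare against; your argument is the classical textbook derivation and is entirely adequate.
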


\begin{definition}[Subgaussian random variable]
A random variable $X \in \R$ is $\sigma$-subgaussian if for any $\rho$, 
$$\log \mathbb{E} \exp (\rho(X-\mathbb{E} X)) \leq \rho^2 \sigma^2 / 2.$$
\end{definition}

\paragraph{Notation of probability distribution for the model output.}
We define the corresponding probability distributions for prediction of $F_{sw}$ and $F_w$.
Recall that for $F_w, F_{sw}: \cX \to \cY$ and $x \in \cX$:
$$\dist(F_w, F_{sw}) = \bE_x \left[ \sum_{j=1}^k [F_w(x)]_j \log \frac{[F_w(x)]_j}{[F_{sw}(x)]_j} \right] = \bE_x \left[ \mathrm{D}_{\mathrm{KL}}(F_w(x), F_{sw}(x)) \right],$$
where $\mathrm{D}_{\mathrm{KL}}$ is the discrete version of KL divergence. 
$\forall x \in \cX$, we know that $\sum_{j=1}^k [F_w(x)]_j = 1$. Therefore, given the class space $C_k = \{ 1, \cdots, k \}$, we define a probability distribution $\cP_{w}(x)$ with the probability density function $p_w$, where $j \in C_k$ and 
\begin{align} \label{def_new_distribution}
    p_w(j)=[F_w(x)]_j.
\end{align}
Using this method, we also define the probability distribution $\cP_{sw}(x)$ for $F_{w}(x)$.

Now we start the proof.

\begin{proof}

For better readability, we divide the proof into several steps.

\paragraph{The first step.}
Given the probability distributions $\cP_{w}(x)$ and $\cP_{sw}(x)$ above,
the first step is motivated by Lemma A.2 from~\citep{wang2022information}.
For any $x \in \cX$, $j \in C_k$, $g: C_k \to \R$ and assume that $g$ is $\sigma$-subgaussian (we will specify $\sigma$ later).
Let $f=t \cdot g$ for any $t \in \R$.
We have
\begin{align*}
& \mathrm{D}_{\mathrm{KL}}\left( F_w(x) \| F_{sw}(x) \right) \\ = & \mathrm{D}_{\mathrm{KL}}(\cP_w(x) \| \cP_{sw}(x)) \\
= & \sup_t \mathbb{E}_{j^{\prime} \sim \cP_w(x)}\left[t \cdot g(j') \right]-\log \mathbb{E}_{j \sim \cP_{sw}(x)}[\exp \left(t \cdot g(j) \right)] \tag{\cref{lemma:donsker}} \\
= & \sup_t \mathbb{E}_{j^{\prime} \sim \cP_w(x)}\left[tg\left(j^{\prime}\right)\right]-\log \mathbb{E}_{j \sim \cP_{sw}(x)}\left[\exp t\left(g(j)-\mathbb{E}_{j \sim \cP_{sw}(x)}[g(j)]+\mathbb{E}_{j \sim \cP_{sw}(x)}[g(j)]\right)\right] \\
= & \sup_t \mathbb{E}_{j^{\prime} \sim \cP_w(x)}\left[t g\left(j^{\prime}\right)\right]-\mathbb{E}_{j \sim \cP_{sw}(x)}[tg(j)]-\log \mathbb{E}_{j \sim \cP_{sw}(x)}\left[\exp t\left(g(j)-\mathbb{E}_{j \sim \cP_{sw}(x)}[g(j)]\right)\right] \\
\geq & \sup _t t\left(\mathbb{E}_{j^{\prime} \sim \cP_w(x)}\left[g\left(j^{\prime}\right)\right]-\mathbb{E}_{j \sim \cP_{sw}(x)}[g(j)]\right)-t^2 \sigma^2 / 2. \tag{Subgaussianity}
\end{align*}

\paragraph{The second step.}
The second step is associating the above result with $\dist(F_w, F_{sw})$.
In particular, by taking expectations of $x$ on both sides of the above inequality, we obtain
$$\dist(F_w, F_{sw}) = \bE_x \mathrm{D}_{\mathrm{KL}}\left( F_w(x) \| F_{sw}(x) \right) \ge \sup _t \underbrace{t\left(\bE_x\mathbb{E}_{j^{\prime} \sim \cP_w(x)}\left[g\left(j^{\prime}\right)\right]-\bE_x\mathbb{E}_{j \sim \cP_{sw}(x)}[g(j)]\right)-t^2 \sigma^2 / 2}_{\phi(t)}.$$

Note that $\phi(t)$ is a quadratic function of $t$.
Therefore, by AM–GM inequality, we find the maximum of this quadratic function:
\begin{align*}
    \phi(t) \le \frac{1}{2\sigma^2}\left(\bE_x\mathbb{E}_{j^{\prime} \sim \cP_w(x)}\left[g\left(j^{\prime}\right)\right]-\bE_x\mathbb{E}_{j \sim \cP_{sw}(x)}[g(j)]\right)^2 = \sup _t \phi(t) \le \dist(F_w, F_{sw}).
\end{align*}

Subsequently, there holds
\begin{align} \label{ineq:lower_upper_kl_loss}
\left|\bE_x\mathbb{E}_{j^{\prime} \sim \cP_w(x)}\left[g\left(j^{\prime}\right)\right]-\bE_x\mathbb{E}_{j \sim \cP_{sw}(x)}[g(j)]\right| \le \sqrt{2\sigma^2 \dist(F_w, F_{sw})}.
\end{align}

\paragraph{The third step.}
The third step is constructing $g$ to associate the above result with $\dist(F^\star, F_{sw})$ and $\dist\left( F^\star, F_w  \right)$.
Specifically, given a probability distribution $\cP_g$ with the density function $p_g$, we define function $g: C_k \to (0,1]$ associated with $\cP_g$: 
$$g(j) \triangleq \frac{[F^\star(x)]_j}{p_g(j)} \log \frac{[F^\star(x)]_j}{p_g(j)}, \quad \text{for} \ j \in C_k.$$

We have
\begin{align*}
    \bE_x\mathbb{E}_{j \sim \cP_g} \left[g(j)\right] & = \bE_x \bE_{j \sim \cP_g} \left[\frac{[F^\star(x)]_j}{p_g(j)} \log \frac{[F^\star(x)]_j}{p_g(j)} \right] \\
    & = \bE_x \left[\sum_{j \in C_k} p_g(j) \cdot \frac{[F^\star(x)]_j}{p_g(j)} \cdot \log \frac{[F^\star(x)]_j}{p_g(j)} \right] \\
    & = \bE_x \left[\sum_{j \in C_k} [F^\star(x)]_j \cdot \log \frac{[F^\star(x)]_j}{p_g(j)} \right]
\end{align*}

Recall the definition of $\cP_{sw}$ and $\cP_w$ in~\eqref{def_new_distribution}, we replace $\cP_g$ with $\cP_{sw}$ and $\cP_w$ in the above equation:
\begin{align*}
    & \bE_x\mathbb{E}_{j^{\prime} \sim \cP_{sw}}\left[g\left(j^{\prime}\right)\right] = \bE_x \left[ \sum_{j=1} [F^\star(x)]_j \log \frac{[F^\star(x)]_j}{[F_{sw}(x)]_j} \right] = \dist(F^\star, F_{sw}), \\
    & \bE_x\mathbb{E}_{j \sim \cP_w}[g(j)] = \bE_x \left[ \sum_{j=1} [F^\star(x)]_j \log \frac{[F^\star(x)]_j}{[F_{w}(x)]_j} \right] = \dist(F^\star, F_{w}).
\end{align*}

Substitute the above into~\eqref{ineq:lower_upper_kl_loss}:
\begin{align} \label{ineq:temp-1}
    \left| \dist(F^\star, F_{sw})-\dist(F^\star, F_{w}) \right| \le \sqrt{2\sigma^2 \dist(F_w, F_{sw})}.
\end{align}

\paragraph{The final step.}
Finally, we obtain the subgaussian factor $R$ of function $g$ by using the fact that $g$ is bounded.
For simplicity, we use Hoeffding's Lemma (\cref{hoeffding_lemma}) to obtain the subgaussian factor $R$. However, it can be more precisely determined using advanced techniques in learning theory literature (for instance, see Remark 2.14 in~\citep{lialgorithmic}, where $\alpha=2$ recovers the subgaussian setting).

Recall that the output domain $\cY \subseteq \R^k$, where $\forall y = (y_1, \cdots, y_k)^T \in \cY$, there holds $\sum_{i=1}^k y_i=1$ and $0 < y_i \le 1$.
In other words, $\exists \gamma>0$ such that $0 < \gamma \le y_i \le 1$.
It means that $g(j) \in [-\frac{1}{\gamma} \log \frac{1}{\gamma}, \frac{1}{\gamma} \log \frac{1}{\gamma}]$.
According to~\cref{hoeffding_lemma}, $\forall \lambda \in \R$, we have
$$\mathbb{E}\left[e^{\lambda(g(j)-\mathbb{E}[g(j)])}\right] \leq \exp \left(\frac{\lambda^2 \left(\frac{1}{\gamma} \log \frac{1}{\gamma} \right)^2}{2}\right).$$
In other words, $g(j)$ is $\sigma$-subgaussian, where $\sigma=\frac{1}{\gamma} \log \frac{1}{\gamma}$.
Substitute it into~\cref{ineq:temp-1} and we obtain:
\begin{align*}
    \left| \dist(F^\star, F_{sw}) - \dist\left( F^\star, F_w  \right) \right| \le C_1 \sqrt{\dist(F_w, F_{sw})},
\end{align*}
where the constant $C_1 = \frac{\sqrt{2}}{\gamma} \log \frac{1}{\gamma}$. The proof is complete.
\end{proof}

\subsection{Extension of~\cref{lemma:upper_lower_inf}} \label{proof:lower_upper}

In this section, we extend~\cref{lemma:upper_lower_inf} to output distribution divergence in regression.

\begin{proof}

Denote $ \kl(f \| g) = \mathrm{D}_{\mathrm{KL}}(\cP_f \| \cP_g) =\int_{\cX} f(x) \log \frac{f(x)}{g(x)} d x$.
Let $(\cX, \mathcal{F}, \cP_{sw}),(\cX, \mathcal{F}, \cP_w)$ be two probability spaces.
Denoting $F_{sw}$ and $F_w$ the densities of the measures.
Therefore,
$$\int_{\cX} F_{sw}(x) d x = \int_{\cX} F_w(x) d x = 1.$$

Let $x \in \cX$, $g: \cX \to \R$ and assume that $g$ is $R$-subgaussian (we will specify $R$ later).
Let $f=t \cdot g$ for any $t \in \R$. By~\cref{lemma:donsker}, we have
\begin{align*}
\kl\left( F_w \| F_{sw} \right) & = \mathrm{D}_{\mathrm{KL}}(\cP_w \| \cP_{sw}) \\ & = \sup_t \mathbb{E}_{x^{\prime} \sim \cP_w}\left[t \cdot g\left(x^{\prime}\right)\right]-\log \mathbb{E}_{x \sim \cP_{sw}}[\exp \left(t \cdot g(x) \right)] \\
& =\sup _t \mathbb{E}_{x^{\prime} \sim \cP_w}\left[tg\left(x^{\prime}\right)\right]-\log \mathbb{E}_{x \sim \cP_{sw}}\left[\exp t\left(g(x)-\mathbb{E}_{x \sim \cP_{sw}}[g(x)]+\mathbb{E}_{x \sim \cP_{sw}}[g(x)]\right)\right] \\
& =\sup _t \mathbb{E}_{x^{\prime} \sim \cP_w}\left[t g\left(x^{\prime}\right)\right]-\mathbb{E}_{x \sim \cP_{sw}}[tg(x)]-\log \mathbb{E}_{x \sim \cP_{sw}}\left[\exp t\left(g(x)-\mathbb{E}_{x \sim \cP_{sw}}[g(x)]\right)\right] \\
& \geq \sup _t \underbrace{t\left(\mathbb{E}_{x^{\prime} \sim \cP_w}\left[g\left(x^{\prime}\right)\right]-\mathbb{E}_{x \sim \cP_{sw}}[g(x)]\right)-t^2 R^2 / 2}_{\phi(t)}. \tag{Subgaussianity}
\end{align*}

Let
$$\phi(t)=t\left(\mathbb{E}_{x^{\prime} \sim \cP_w}\left[g\left(x^{\prime}\right)\right]-\mathbb{E}_{x \sim \cP_{sw}}[g(x)]\right)-t^2 R^2 / 2,$$
which is a quadratic function of $t$.
Therefore, 
\begin{align*}
    \phi(t) \le \frac{1}{2R^2}\left(\mathbb{E}_{x^{\prime} \sim \cP_{sw}}\left[g\left(x^{\prime}\right)\right]-\mathbb{E}_{x \sim \cP_w}[g(x)]\right)^2 = \sup _t \phi(t) \le \mathrm{D}_{\mathrm{KL}}(\cP_w \| \cP_{sw}) = \kl\left( F_w \| F_{sw} \right).
\end{align*}

So we have
\begin{align} \label{ineq:kl_upp}
\left|\mathbb{E}_{x^{\prime} \sim \cP_{sw}}\left[g\left(x^{\prime}\right)\right]-\mathbb{E}_{x \sim \cP_w}[g(x)]\right| \le \sqrt{2R^2 \kl\left( F_w \| F_{sw} \right)}.
\end{align}

Given a probability space $(\cX, \mathcal{F}, \cP_g)$ with the density function $p_g$. Define 
$$g(x) \triangleq \frac{F^\star(x)}{p_g(x)} \log \frac{F^\star(x)}{p_g(x)}.$$
So there holds
$$\mathbb{E}_{x \sim \cP_g} \left[g(x)\right] = \int_{\cX} F^\star(x) \cdot \log \frac{F^\star(x)}{p_g(x)} d x.$$

Replace $\cP_g$ with $\cP_{sw}$ and $\cP_w$:
\begin{align*}
    & \mathbb{E}_{x^{\prime} \sim \cP_{sw}}\left[g\left(x^{\prime}\right)\right] = \int_{\cX} F^\star(x) \log \frac{F^\star(x)}{F_{sw}(x)} d x = \dist(F^\star, F_{sw}), \\
    & \mathbb{E}_{x \sim \cP_w}[g(x)] = \int_{\cX} F^\star(x) \log \frac{F^\star(x)}{F_{w}(x)} d x = \dist(F^\star, F_{w}).
\end{align*}

Substitute them back into~\eqref{ineq:kl_upp} and we obtain:
\begin{align} \label{info_theory:ziqiao_result}
& \left| \dist(F^\star, F_{sw}) - \dist(F^\star, F_{w}) \right| \le \sqrt{2 R^2 \kl\left( F_w \| F_{sw} \right)}.
\end{align}

Finally, 
recall that the output domain $\cY = \{ y \in \R| 0 < y \le 1 \}$. In other words, $\exists \gamma>0$ such that $\cY = \{ y \in \R| 0 < \gamma \le y \le 1 \}$.
It means that $g(x) \in [-\frac{1}{\gamma} \log \frac{1}{\gamma}, \frac{1}{\gamma} \log \frac{1}{\gamma}]$.
According to~\cref{hoeffding_lemma}, $\forall \lambda \in \R$, we have
$$\mathbb{E}\left[e^{\lambda(g(x)-\mathbb{E}[g(x)])}\right] \leq \exp \left(\frac{\lambda^2 \left(\frac{1}{\gamma} \log \frac{1}{\gamma}\right)^2}{2}\right).$$
In other words, $g(x)$ is $R$-subgaussian, where $R=\frac{1}{\gamma} \log \frac{1}{\gamma}$.
Substitute it into~\cref{info_theory:ziqiao_result} and we obtain:
\begin{align*}
    \left| \dist(F^\star, F_{sw}) - \dist\left( F^\star, F_w \right) \right| \le \sqrt{C_1 \dist(F_w, F_{sw})},
\end{align*}
where the constant $C_1 = 2\left(\frac{1}{\gamma} \log \frac{1}{\gamma} \right)^2$.
\end{proof}

\subsection{Proof of~\cref{theorem:calibration}} \label{proof:calibration}

Total variation distance is introduced for our proof.
\begin{definition}[Total Variation Distance] \label{def:tv_distance}
Given two probability distributions $P$ and $Q$, the Total Variation (TV) distance between $P$ and $Q$ is
$$\tv(P \| Q)= \frac{1}{2} \int_{x \in \mathcal{X}} \left| P(x)-Q(x) \right| d x.$$
\end{definition}
Note that $\tv(P \| Q)\in[0,1]$. Also, $\tv(P \| Q)=0$ if and only if $P$ and $Q$ coincides, and $\tv(P \| Q)=1$ if and only if $P$ and $Q$ are disjoint.

Let the calibrated bayes score function $F^b: \cX \to \cY$ that satisfies $\forall i \in \{1, \cdots, k\}$, $[F^b(x)]_i=\prob(Y_i=1|X=x)$, where $Y=[Y_1, \cdots, Y_k]^T \in \{0,1\}^k$ and $\|Y\|_1=1$.
Now we start our proof.


\begin{proof}

Consider the definition of MCE in~\cref{def:cal_err}.
Notice that
\begin{align*}
    \textit{MCE}(F) & = \sum_{i=1}^k \expect_X \left| [F(X)]_i-\prob[Y_i=1|[F(X)]_i] \right| \\ & = \expect_X \left[ \sum_{i=1}^k \left| [F(X)]_i-\prob[Y_i=1|[F(X)]_i] \right| \right] \\
    & = \expect_X \left\| F(X)-F^b(X) \right\|_1,
\end{align*}
and $\textit{MCE}(F) \in [0,2]$.

So there holds
\begin{align*}
    \textit{MCE}(F_w) - \textit{MCE}(F_{sw}) & = \expect_X \left\| F_w(X)-F^b(X) \right\|_1 - \expect_X \left\| F_{sw}(X)-F^b(X) \right\|_1 \\
    & \le \expect_X \left\| F_w(X)-F_{sw}(X) \right\|_1 \tag{Triangle inequality} \\
    & = 2 \cdot \expect_X \tv(F_w(X), F_{sw}(X)) \\
    & \le 2 \cdot \expect_X \sqrt{1-\exp{\left(-\mathrm{D}_{\mathrm{KL}}(F_w(X),F_{sw}(X))\right)}} \tag{Bretagnolle–Huber inequality} \\
    & \le 2 \cdot \sqrt{1-\exp{\left(-\expect_X \mathrm{D}_{\mathrm{KL}}(F_w(X),F_{sw}(X))\right)}} \tag{Jensen’s inequality} \\
    & = 2 \cdot \sqrt{1-\exp{\left(-\dist(F_w,F_{sw})\right)}}. \tag{Definition of $\dist$ for KL divergence loss}
\end{align*}

Likewise,
\begin{align*}
    \textit{MCE}(F_{sw}) - \textit{MCE}(F_w) & = \expect_X \left\| F_{sw}(X)-F^b(X) \right\|_1 - \expect_X \left\| F_w(X)-F^b(X) \right\|_1 \\
    & \le \expect_X \left\| F_{sw}(X)-F_w(X) \right\|_1 \tag{Triangle inequality} \\
    & \le \expect_X \left\| F_w(X)-F_{sw}(X) \right\|_1 \tag{Symmetry} \\
    & = 2 \cdot \expect_X \tv(F_w(X), F_{sw}(X)) \\
    & \le 2 \cdot \sqrt{1-\exp{\left(-\dist(F_w,F_{sw})\right)}}. \tag{Using the derivation above}
\end{align*}

Combining the above, we have
\begin{align*}
    & \textit{MCE}(F_{sw}) - \textit{MCE}(F_w) \le 2 \cdot \sqrt{1-\exp{\left(-\dist(F_w,F_{sw})\right)}}, \\
    & \textit{MCE}(F_w) - \textit{MCE}(F_{sw}) \le 2 \cdot \sqrt{1-\exp{\left(-\dist(F_w,F_{sw})\right)}}.
\end{align*}

The proof is complete.

\end{proof}

\subsection{Proof of~\cref{thm:realizable-main}} \label{proof_theorem_1-main}

We first restate a lemma for our proof.
Recall that the strong model learns from a linear function class $\cF:\R^{d_s} \to \R$ of fine-tuning tasks. Recall also that we denote the strong model representation map by $h_s:\R^d \to \R^{d_s}$. Let $V_s = \{f \circ h_s: f \in \cF\}$ be the set of all tasks in $\cF$ composed with the strong model representation. We first observe that $V_s$ is also a convex set.
\begin{lemma}[\citet{charikar2024quantifying}]
    \label{claim:Vs-convex}
    $V_s$ is a convex set.
\end{lemma}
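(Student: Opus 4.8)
The goal is to show that $V_s = \{f \circ h_s : f \in \cF\}$ is convex, given that $\cF$ is a convex set of linear functions mapping $\R^{d_s} \to \R$. The plan is to verify the defining property of convexity directly: take two arbitrary elements of $V_s$, form an arbitrary convex combination, and exhibit it as a member of $V_s$.

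First I would pick $u, v \in V_s$ and $\lambda \in [0,1]$. By definition of $V_s$, there exist $f, g \in \cF$ with $u = f \circ h_s$ and $v = g \circ h_s$. Consider the pointwise convex combination $w = \lambda u + (1-\lambda) v$, so that for every $x \in \R^d$ we have $w(x) = \lambda f(h_s(x)) + (1-\lambda) g(h_s(x))$. Next I would invoke Assumption~\ref{convex_set}: since $\cF$ is convex, there is some $p \in \cF$ such that $p(z) = \lambda f(z) + (1-\lambda) g(z)$ for all $z \in \R^{d_s}$. Substituting $z = h_s(x)$ gives $p(h_s(x)) = \lambda f(h_s(x)) + (1-\lambda) g(h_s(x)) = w(x)$ for every $x$, hence $w = p \circ h_s \in V_s$. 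Since $u, v, \lambda$ were arbitrary, $V_s$ is convex.

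There is essentially no hard part here: the statement is a one-line consequence of composing a convex family with a fixed map, and the only thing to be careful about is bookkeeping with the quantifiers (the convex combination identity in Assumption~\ref{convex_set} must hold for \emph{all} arguments $z$, so it in particular holds at the points $h_s(x)$). I would present the argument in the three short steps above and conclude. (One could remark that linearity of the functions in $\cF$ is not even used for this particular claim---only the convexity of $\cF$ as a set of functions matters---though linearity is what makes Assumption~\ref{convex_set} natural in the first place.)
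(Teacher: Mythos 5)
Your proof is correct and is essentially identical to the paper's: both take two elements $f\circ h_s, g\circ h_s\in V_s$, use the convexity of $\cF$ to obtain $p\in\cF$ realizing the pointwise convex combination, and evaluate at $z=h_s(x)$ to conclude $\lambda(f\circ h_s)+(1-\lambda)(g\circ h_s)=p\circ h_s\in V_s$. Your side remark that only convexity (not linearity) of $\cF$ is needed is accurate.
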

\begin{proof}
    Fix $f, g \in \cF$, and consider $f \circ h_s, g \circ h_s \in V_s$. Fix any $\lambda \in [0,1]$. Since $\cF$ is the linear function class so that it is a convex set, there exists $p \in \cF$ such that for all $y \in \R^{d_s}$, $p(y) = \lambda f(y) + (1-\lambda)g(y)$. Now, fix any $x \in \R^d$. Then, we have that
    \begin{align*}
        \lambda (f \circ h_s)(x) + (1-\lambda)(g \circ h_s)(x) &= \lambda f(h_s(x)) + (1-\lambda)g(h_s(x))
        = p(h_s(x)) = (p \circ h_s)(x),
    \end{align*}
    and hence $\lambda (f \circ h_s) + (1-\lambda)(g \circ h_s) = p \circ h_s \in V_s$.
\end{proof}

Now we start the proof.

\begin{proof}

For any $f,g \in \cX \to \cY$, denote $ \kl(f \| g) = \int_{\cX} f(x) \log \frac{f(x)}{g(x)} d x$.

Given any $g \in V_s$, observe that
\begin{align} \label{eqn:square-expansion-main}
    \kl(g \| F_w) &= \int_{\cX} g(x) \log \frac{g(x)}{F_w(x)} d x \nonumber \\
    &= \int_{\cX} g(x) \log \left( \frac{g(x)}{F_{sw}(x)} \cdot \frac{F_{sw}(x)}{F_w(x)} \right) d x \nonumber \\
    &=\kl(g \| F_{sw}) + \kl(F_{sw} \| F_w) - \kl(F_{sw} \| F_w) + \int_{\cX} g(x) \log  \frac{F_{sw}(x)}{F_w(x)} d x \nonumber \\
    &=\kl(g \| F_{sw}) + \kl(F_{sw} \| F_w) + \underbrace{\int_{\cX} \left( g(x) - F_{sw}(x) \right) \log  \frac{F_{sw}(x)}{F_w(x)} d x}_{Q_1}.
\end{align}
Now our goal is to judge whether $Q_1 \ge 0$.

Recall that $$f_{sw} = \argmin_{f}\; \kl(f \circ h_s \| F_w).$$
In other words, $F_{sw}$ is the \textit{projection} of $F_w$ onto the convex set $V_s$. 
Therefore:
$$\kl(g \| F_w) \ge \kl(F_{sw} \| F_w).$$
Therefore, 
\begin{align} \label{eqn:cross-term-inequality-1-main}
\kl(g \| F_{sw}) + Q_1 \ge 0. 
\end{align}

Now, fix $t \in (0,1)$, and consider the function $g=F_{sw}+t(F^\star-F_{sw})$.

\begin{align*}
    \kl(g \| F_{sw}) & = \int_\cX g(x) \log \frac{g(x)}{F_{sw}(x)}
    \\ & = -\int_\cX g(x) \log \frac{F_{sw}(x)}{g(x)} d x
    \\ & = - \int_\cX g(x) \log \left[1-\frac{t(F^\star(x)-F_{sw}(x))}{g(x)}\right] d x
    \\ & = \int_\cX g(x) \left[\frac{t(F^\star(x)-F_{sw}(x))}{g(x)} + \cO(t^2)\right] d x
    \\ & = \int_\cX \left[t(F^\star(x)-F_{sw}(x)) + \cO(t^2)\right] d x \tag{Taylor expansion}
    \\ & = \cO(t^2). \tag{$\int_\cX F_{sw}(x) d x = \int_\cX F^\star(x) d x = 1$}
\end{align*}

While
\begin{align*}
    Q_1 & = \int_{\cX} \left( g(x) - F_{sw}(x) \right) \log  \frac{F_{sw}(x)}{F_w(x)} d x
    \\ & = t \cdot \int_{\cX} \left( F^\star(x) - F_{sw}(x) \right) \log  \frac{F_{sw}(x)}{F_w(x)} d x
    \\ & = \cO(t).
\end{align*}

Recall~\cref{eqn:cross-term-inequality-1-main} that
$$\underbrace{\kl(F_{sw} \| g)}_{\cO(t^2)} + \underbrace{Q_1}_{\cO(t)} \ge 0,$$
which means $Q_1 \ge 0$.
So we have 
$$\int_{\cX} \left( F^\star(x) - F_{sw}(x) \right) \log  \frac{F_{sw}(x)}{F_w(x)} d x \ge 0.$$
Let $g=F^\star$ in~\cref{eqn:square-expansion-main} and we can prove the result $\dist(F^\star, F_{sw}) \le \dist(F^\star, F_w) - \dist(F_{sw}, F_w)$.

\end{proof}

\subsection{Extension of~\cref{thm:realizable-main}} \label{proof_theorem_1}

We first introduce some definitions.

\begin{definition}[Itakura–Saito Divergence~\citep{itakura1968analysis,fevotte2009nonnegative,prasetyawan2020sensory}]
Given two probability distributions $P$ and $Q$, the Itakura–Saito divergence between them is defined as
$$\is(P \| Q) = \int_{\cX} \left( \frac{P(x)}{Q(x)} - \log \frac{P(x)}{Q(x)} - 1 \right) d x.$$
\end{definition}

Similar to the KL divergence, the Itakura–Saito divergence is also a Bregman divergence~\citep{inderjit_bregman}.

\begin{definition}[Weighted Itakura–Saito Divergence~\citep{chu1982frequency}]
Given two probability distributions $P$ and $Q$, the weighted Itakura–Saito divergence between them is defined as
$$\wis(P \| Q) = \int_{\cX} w(x) \left( \frac{P(x)}{Q(x)} - \log \frac{P(x)}{Q(x)} - 1 \right) d x,$$
where $w(x)$ is the weight function.
\end{definition}

We also define the inner product of functions 
$$\left \langle f,g \right \rangle \triangleq \int_{\cX} f(x)g(x) d x.$$

Now we present the theoretical extension of~\cref{thm:realizable-main} to forward KL divergence loss in W2SG.

\begin{corollary} \label{thm:realizable}
Given $F^\star$, $F_w$ and $F_{sw}$ defined above.
Let $\dist$ be the output distribution divergence and consider W2SG in~\cref{eqn:fsw-population-minimizer} using forward KL divergence loss.
Assume that $\exists f_s \in \cF_s$ such that $F_s = F^\star$.
Consider $\cF_s$ that satisfies Assumption~\ref{convex_set}. 
If the weighted Itakura–Saito divergence $\wis(F^\star \| F_{sw}) \le 0$ with the weight function $w=F_{sw} - F_w$, then:
\begin{align} \label{eqn:realizable}
    d_{\mathcal{P}}\left(F_{sw}, F^\star\right) \leq d_{\mathcal{P}}\left(F_w, F^\star\right)-d_{\mathcal{P}}\left(F_w, F_{sw}\right).
\end{align}
\end{corollary}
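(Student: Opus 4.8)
The plan is to reduce Inequality~\eqref{eqn:realizable} to a single scalar inequality, and then to sandwich that inequality between the first-order optimality condition of the WTSG minimizer and the weighted Itakura--Saito hypothesis. First I would rewrite the target: since the $\int F_w \log F_w$ terms cancel, one has $\dist(F_w,F^\star) - \dist(F_w,F_{sw}) = \int_{\cX} F_w(x)\log\frac{F_{sw}(x)}{F^\star(x)}\,dx$, while $\dist(F_{sw},F^\star) = \int_{\cX} F_{sw}(x)\log\frac{F_{sw}(x)}{F^\star(x)}\,dx$. Hence~\eqref{eqn:realizable} is equivalent to the inequality
\[
  \int_{\cX} \bigl(F_{sw}(x) - F_w(x)\bigr)\,\log\frac{F_{sw}(x)}{F^\star(x)}\,dx \;\le\; 0 ,
\]
which I will call $(\star)$. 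So the whole proof comes down to establishing $(\star)$.

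Next I would extract a first-order optimality condition, mirroring the Taylor-expansion device used in the proof of~\cref{thm:realizable-main}. Since $\cF_s$ satisfies~\cref{convex_set}, the set $V_s$ is convex by~\cref{claim:Vs-convex}, and realizability gives $F^\star = F_s \in V_s$; therefore $g_t \triangleq F_{sw} + t(F^\star - F_{sw}) \in V_s$ for all $t\in[0,1]$, with $\int_{\cX} g_t = 1$ and $g_t > 0$. Optimality of $F_{sw}$ for $\min_{g\in V_s}\kl(F_w\|g)$ gives $\kl(F_w\|g_t)\ge\kl(F_w\|F_{sw})$, and expanding $\kl(F_w\|g_t)-\kl(F_w\|F_{sw}) = \int_{\cX} F_w\log\frac{F_{sw}}{g_t}$ with $\log\frac{F_{sw}}{g_t} = -t\,\frac{F^\star-F_{sw}}{F_{sw}} + \cO(t^2)$, dividing by $t>0$ and letting $t\to 0^+$ yields
\[
  \int_{\cX} \frac{F_w(x)\,F^\star(x)}{F_{sw}(x)}\,dx \;\le\; 1 .
\]

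Then I would unpack the hypothesis $\wis(F^\star\|F_{sw})\le 0$ with weight $w = F_{sw}-F_w$. Expanding the integrand $(F_{sw}-F_w)\bigl(\tfrac{F^\star}{F_{sw}} - \log\tfrac{F^\star}{F_{sw}} - 1\bigr)$ termwise and using $\int_{\cX}(F_{sw}-F_w) = 1-1 = 0$ together with $\int_{\cX}(F_{sw}-F_w)\tfrac{F^\star}{F_{sw}} = 1 - \int_{\cX}\tfrac{F_w F^\star}{F_{sw}}$, the hypothesis reads
\[
  \Bigl(1 - \int_{\cX}\frac{F_w(x)F^\star(x)}{F_{sw}(x)}\,dx\Bigr) \;-\; \int_{\cX}\bigl(F_{sw}(x)-F_w(x)\bigr)\log\frac{F^\star(x)}{F_{sw}(x)}\,dx \;\le\; 0 .
\]
Combining this with the optimality inequality, which says the first parenthesized difference is nonnegative, forces $\int_{\cX}(F_{sw}-F_w)\log\frac{F^\star}{F_{sw}}\ge 0$, i.e.\ $\int_{\cX}(F_{sw}-F_w)\log\frac{F_{sw}}{F^\star}\le 0$, which is exactly $(\star)$; substituting $g=F^\star$ into the analogue of~\eqref{eqn:square-expansion-main} then gives the claim.

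The main obstacle I anticipate is less a single inequality than the bookkeeping: tracking the sign conventions across $\dist$, $\kl$, $\is$, $\wis$ consistently, justifying the differentiation under the integral and the $\cO(t^2)$ remainder (which needs the ratios $F^\star/F_{sw}$ and $F_w/F_{sw}$ to stay bounded---here the output-domain constraint $0<\gamma\le y\le 1$ is what keeps everything finite), and checking that $g_t$ remains a legitimate unit-mass element of $V_s$. The conceptual point is the recognition that the weighted Itakura--Saito term is \emph{precisely} the slack between the crude optimality estimate $\int_{\cX}\frac{F_wF^\star}{F_{sw}}\le 1$ and the sharp inequality $(\star)$ that one actually needs---which is why the corollary must \emph{assume} it is nonpositive rather than derive it, in contrast to the symmetric squared-loss and reverse-KL cases.
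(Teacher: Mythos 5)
Your proposal is correct, and it reaches the paper's key inequality $\int_{\cX}(F_w-F_{sw})\log\frac{F_{sw}}{F^\star}\,dx\ge 0$ by the same overall strategy (perturb $F_{sw}$ toward $F^\star$ inside the convex set $V_s$, invoke first-order optimality of the projection, then spend the $\wis\le 0$ hypothesis), but with a noticeably cleaner decomposition. The paper's proof introduces the geometric interpolant $w(x)=F_{sw}(x)\bigl(F^\star(x)/F_{sw}(x)\bigr)^t$ alongside the affine one $w'$, defines $\phi=\log(w/w')$, and tracks three separate terms ($\kl(F_{sw}\|w')=\cO(t^2)$, a $\wis$ term, and the target cross term), each requiring its own Taylor-remainder bookkeeping via Claims~\ref{lemma:taylor}--\ref{lemma:taylor_frac}. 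You instead linearize $\kl(F_w\|g_t)$ directly to get the closed-form optimality condition $\int_{\cX}F_wF^\star/F_{sw}\le 1$, and then observe the exact algebraic identity
\begin{equation*}
\wis(F^\star\|F_{sw})=\Bigl(1-\int_{\cX}\tfrac{F_wF^\star}{F_{sw}}\,dx\Bigr)+\int_{\cX}(F_{sw}-F_w)\log\tfrac{F_{sw}}{F^\star}\,dx,
\end{equation*}
so that $\wis\le 0$ plus optimality immediately forces the target sign. (One can check the paper's three-term first-order balance collapses to exactly this identity, so the two arguments are equivalent in content.) What your version buys is twofold: it eliminates the auxiliary function $w$ and two of the three Taylor claims, and it makes transparent \emph{why} the $\wis$ condition is the right assumption—it is precisely the slack between the linearized optimality condition and the desired cross-term inequality, rather than an opaque sufficient condition that happens to make a middle term nonpositive. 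The only care points, which you correctly flag, are the uniform control of the $\cO(t^2)$ remainder (guaranteed by $0<\gamma\le y\le 1$) and the fact that $g_t$ stays in $V_s$ with unit mass; these are handled at the same level of rigor as in the paper.
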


\cref{thm:realizable} shows that~\citet{charikar2024quantifying} can be extended to our setting if we introduce another assumption, which comes from the technical challenges of theoretically analyzing the non-linear nature of KL divergence.
Denote $F^+ = \frac{F^\star}{F_{sw}} - 1 - \log{\frac{F^\star}{F_{sw}}}$, then the assumption $\wis(F^\star \| F_{sw}) \le 0$ is equivalent to $\left \langle F_{sw} - F_w, F^+ \right \rangle \le 0$.
Note that $\forall x \in \cX$, $F_+(x) \ge 0$ always holds, and a very small or large value of $\frac{F^\star(x)}{F_{sw}(x)}$ generally contributes to a large $F_+(x)$.
To make $\left \langle F_{sw} - F_w, F^+ \right \rangle \le 0$ more likely to hold, we expect $F_{sw}(x) \le F_w(x)$ if $F^\star(x)$ is small or large.
In general, since this condition cannot be guaranteed to hold universally, the inequality in~\cref{eqn:realizable-main} may fail to hold. 
This reveals a key discrepancy between the square function (as considered in \citet{charikar2024quantifying}) and the KL divergence (in this work) within the W2SG framework—a phenomenon that will be empirically validated through our experiments.

\paragraph{Proof sketch of~\citet{charikar2024quantifying}.}
For the proof technique,~\citet{charikar2024quantifying} constructs a function within a convex set.
By exploiting the property of projection and square function, they demonstrate that $\cO(t)+\cO(t^2)$ is non-negative as $t \to 0^+$. Consequently, the first-order term must be non-negative, which proves the result.

\paragraph{Proof sketch of ours.}
Extending the proof framework from Theorem 1 in~\citep{charikar2024quantifying} presents several challenges. 
First, due to the properties of KL divergence, the constructed function does not lie within the convex set. To address this issue, we employ a first-order Taylor expansion and introduce a remainder term.
Secondly, because of the remainder, we derive that $\cO(t)+\cO(t)+\cO(t^2)$ is non-negative. Consequently, we must assume that one of the first-order terms is non-positive to ensure that the other first-order term is non-negative, which allows us to prove the result.
However, if the first-order term is positive, the second first-order term might also remain non-negative.

Now we start our proof of~\cref{thm:realizable}.
Some Taylor expansion claims used in the proof (\cref{lemma:taylor}, \cref{lemma:taylor_log} and~\cref{lemma:taylor_frac}) are provided at the end of the proof.

\begin{proof}

For any $f,g \in \cX \to \cY$, denote $ \kl(f \| g) = \int_{\cX} f(x) \log \frac{f(x)}{g(x)} d x$.

Given any $g \in V_s$, observe that
\begin{align} \label{eqn:square-expansion}
    \kl(F_w \| g) &= \int_{\cX} F_w(x) \log \frac{F_w(x)}{g(x)} d x \nonumber \\
    &= \int_{\cX} F_w(x) \log \left( \frac{F_w(x)}{F_{sw}(x)} \cdot \frac{F_{sw}(x)}{g(x)} \right) d x \nonumber \\
    &= \int_{\cX} F_w(x) \log  \frac{F_w(x)}{F_{sw}(x)} d x + \int_{\cX} F_w(x) \log  \frac{F_{sw}(x)}{g(x)} d x \nonumber \\
    &=\kl(F_w \| F_{sw}) + \kl(F_{sw} \| g) - \kl(F_{sw} \| g) + \int_{\cX} F_w(x) \log  \frac{F_{sw}(x)}{g(x)} d x \nonumber \\
    &=\kl(F_w \| F_{sw}) + \kl(F_{sw} \| g) + \underbrace{\int_{\cX} \left( F_w(x) - F_{sw}(x) \right) \log  \frac{F_{sw}(x)}{g(x)} d x}_{Q_1}.
\end{align}
Now our goal is to judge whether $Q_1 \ge 0$.

Recall that $$f_{sw} = \argmin_{f}\; \kl(F_w \| f \circ h_s).$$
In other words, $F_{sw}$ is the \textit{projection} of $F_w$ onto the convex set $V_s$. 
Therefore:
$$\kl(F_w \| g) \ge \kl(F_w \| F_{sw}).$$
And hence
\begin{align}
    & \kl(F_w \| g) - \kl(F_w \| F_{sw}) \ge 0 
    \nonumber \\ \Rightarrow & \int_{\cX} F_w(x) \log  \frac{F_w(x)}{g(x)} d x - \int_{\cX} F_w(x) \log \frac{F_w(x)}{F_{sw}(x)} d x \ge 0 
    \nonumber \\ \Rightarrow & \underbrace{\int_{\cX} F_w(x) \log \frac{F_{sw}(x)}{g(x)} d x}_{Q_2} \ge 0. \nonumber
\end{align}
Therefore, 
\begin{align}
Q_2 = \kl(F_{sw} \| g) + Q_1 
\ge 0. \label{eqn:cross-term-inequality-1}
\end{align}

Now, fix $t \in (0,1)$, and consider the functions 
\begin{align} 
& w(x) = (F_{sw}(x)) \cdot \left(\frac{F^\star(x)}{F_{sw}(x)} \right)^t, \label{eq:def_w_t} \\
& w'(x)= F_{sw}(x) + t\left( F^\star(x) - F_{sw}(x) \right).\label{eq:def_w'_t}
\end{align}

It is clear that $w(x)>0$, $w'(x)>0$.
And according to~\cref{lemma:taylor}, we have
\begin{align*}
    w(x)=F_{sw}(x) \cdot \left[ 1+t \log{\frac{F^\star(x)}{F_{sw}(x)}} + \frac{1}{2}t^2 \left(\log{\frac{F^\star(x)}{F_{sw}(x)}}\right)^2 \left( \frac{F^\star(x)}{F_{sw}(x)} \right)^\xi \right],
\end{align*}
where $\xi \in (0,t)$.
It means that
\begin{align} \label{diff_w_w'}
    w'(x)-w(x) & = \underbrace{t \cdot F_{sw}(x) \left(\frac{F^\star(x)}{F_{sw}(x)} - 1 - \log \frac{F^\star(x)}{F_{sw}(x)} \right)}_{\cO (t)} - \underbrace{t^2 \cdot \frac{F_{sw}(x)}{2} \left( \log \frac{F^\star(x)}{F_{sw}(x)} \right)^2 \left( \frac{F^\star(x)}{F_{sw}(x)} \right)^\xi}_{\cO (t^2)}.
\end{align}
And $\frac{F^\star(x)}{F_{sw}(x)} - 1 - \log \frac{F^\star(x)}{F_{sw}(x)}>0$. It means that as $t \to 0^+$, $w'(x)-w(x)>0$ and $w'(x)-w(x) = \cO(t)$.

Define
\begin{align}
    \phi(x) & \triangleq \log \frac{F_{sw}(x)}{w'(x)} - \log \frac{F_{sw}(x)}{w(x)} \label{eq:def_phi_x_t}
    \\ & = \log \frac{w(x)}{w'(x)} \nonumber
    \\ & = \log \left( 1-\frac{w'(x)-w(x)}{w'(x)} \right) \nonumber
    \\ & = -\underbrace{\frac{w'(x)-w(x)}{w'(x)}}_{\cO (t)} - \frac{1}{2}\underbrace{\left(\frac{w'(\zeta)-w(\zeta)}{w'(\zeta)} \right)^2}_{\cO (t^2)}, \tag{\cref{lemma:taylor_log}}
\end{align}
where $\zeta$ is between $0$ and $\frac{w'(x)-w(x)}{w'(x)}$.
As $t \to 0^+$, we know that $\phi(x) \le 0$ and $\phi(x) = \cO (t)$.

Combining~\eqref{eq:def_w_t} with~\eqref{eq:def_phi_x_t} and we have:
\begin{align}
    & \log \frac{F_{sw}(x)}{w(x)} = t \log \frac{F_{sw}(x)}{F^\star(x)}, \nonumber \\
    \text{and} \quad  & \log \frac{F_{sw}(x)}{w'(x)} = t \log \frac{F_{sw}(x)}{F^\star(x)} + \phi(x). \label{equation:w_w}
\end{align}

Note that $F_{sw} \in V_s$, which is a convex set (\Cref{claim:Vs-convex}).
Also, $F^\star \in V_s$ (Realizability).
Therefore, according to the definition of convexity,
we have $w' \in V_s$. 
Hence, substituting $w'$ for $g$ in \eqref{eqn:cross-term-inequality-1} and consider \cref{equation:w_w}, we get

\begin{align}
    & Q_2 = \underbrace{\int_{\cX} F_{sw}(x) \log \frac{F_{sw}(x)}{g(x)} d x}_{\kl(F_{sw} \| g)} + \underbrace{\int_{\cX} (F_w(x) - F_{sw}(x)) \log \frac{F_{sw}(x)}{g(x)} d x}_{Q_1} \ge 0, \nonumber \\
    \Rightarrow \quad & \kl(F_{sw} \| w') + \int_{\cX} (F_w(x) - F_{sw}(x)) \cdot \phi(x) d x + \nonumber \\ & \hspace{5.5cm} t \int_{\cX} (F_w(x) - F_{sw}(x)) \log \frac{F_{sw}(x)}{F^\star(x)} d x \ge 0. \label{discriminant}
\end{align}

Here, we address these three components individually.
Our goal is to show that the first term is $\cO(t^2)$, while the second term, which is $\cO(t)$, is negative. Considering these collectively, the third term, also $\cO(t)$, must be positive.

\paragraph{The first term.}
Denote $u(x)=F^\star(x) - F_{sw}(x)$. Note that 
\begin{align*}
\kl(F_{sw} \| w') & = \int_{\cX} (F_{sw}(x)) \log \frac{F_{sw}(x)}{F_{sw}(x) + t \cdot u(x)} d x 
\\ & = \int_{\cX} (F_{sw}(x)) \log \left( 1-\frac{t \cdot u(x)}{F_{sw}(x) + t \cdot u(x)} \right) d x 
\\ & = -t \int_{\cX} \frac{F_{sw}(x) \cdot u(x)}{F_{sw}(x) + t \cdot u(x)} d x - \frac{1}{2}t^2 \int_{\cX} \frac{F_{sw}(x) \cdot (u(\zeta'))^2}{(w'(\zeta'))^2} d x \tag{\cref{lemma:taylor_log}}
\\ & = -t \int_{\cX} \left[ u(x)- \frac{t \cdot (u(x))^2}{F_{sw}(x) + t \cdot u(x)} \right] d x - \frac{1}{2}t^2 \int_{\cX} \frac{F_{sw}(x) \cdot (u(\zeta'))^2}{(w'(\zeta'))^2} d x
\\ & = -t \underbrace{\int_{\cX} u(x) d x}_{0} + t^2 \int_{\cX} \frac{(u(x))^2}{F_{sw}(x) + t \cdot u(x)} d x - t^2 \int_{\cX} \frac{F_{sw}(x) \cdot (u(\zeta'))^2}{2(w'(\zeta'))^2} d x
\\ & = t^2 \int_{\cX} \left[ \frac{(u(x))^2}{F_{sw}(x) + t \cdot u(x)}-\frac{F_{sw}(x) \cdot (u(\zeta'))^2}{2(w'(\zeta'))^2} \right] d x
\end{align*}
where $\zeta'$ is between $0$ and $\frac{t \cdot u(x)}{F_{sw}(x) + t \cdot u(x)}$.
Therefore, taking the limit as $t \to 0^+$, we get that $\kl(F_{sw} \| w') = \cO (t^2)$. 

\paragraph{The second term.}

Recall~\cref{diff_w_w'} that
\begin{align*}
    & w'(x)-w(x) = t \cdot F_{sw}(x) \left(\frac{F^\star(x)}{F_{sw}(x)} - 1 - \log \frac{F^\star(x)}{F_{sw}(x)} \right) + \cO (t^2), \\ & w'(x)= F_{sw}(x) \left( 1 + t\left( \frac{F^\star(x)}{F_{sw}(x)} - 1 \right) \right).
\end{align*}

Therefore,
\begin{align*}
    \frac{w'(x)-w(x)}{w'(x)} & =\frac{t \left(\frac{F^\star(x)}{F_{sw}(x)} - 1 - \log \frac{F^\star(x)}{F_{sw}(x)} \right) + \cO (t^2)}{1 + t\left( \frac{F^\star(x)}{F_{sw}(x)} - 1 \right)} 
    \\ & = \left[ t \left(\frac{F^\star(x)}{F_{sw}(x)} - 1 - \log \frac{F^\star(x)}{F_{sw}(x)} \right) + \cO (t^2) \right] \cdot \left[ 1-t\left( \frac{F^\star(x)}{F_{sw}(x)} - 1 \right) + \cO (t^2) \right] \tag{\cref{lemma:taylor_frac}}
    \\ & = t \left(\frac{F^\star(x)}{F_{sw}(x)} - 1 - \log \frac{F^\star(x)}{F_{sw}(x)} \right) + \cO (t^2).
\end{align*}

In other words, the second term in \eqref{discriminant} is
\begin{align*}
& \int_{\cX} (F_w(x) - F_{sw}(x)) \cdot \phi(x) d x 
\\ = & -\int_{\cX} (F_w(x) - F_{sw}(x)) \cdot \left(\frac{w'(x)-w(x)}{w'(x)} + \cO (t^2)\right) d x \tag{\cref{eq:def_phi_x_t}} 
\\ = & \int_{\cX} (F_{sw}(x) - F_w(x)) \cdot \left(\frac{w'(x)-w(x)}{w'(x)} \right) d x + \cO (t^2) 
\\ = & t\int_{\cX} (F_{sw}(x) - F_w(x)) \left(\frac{F^\star(x)}{F_{sw}(x)} - 1 - \log \frac{F^\star(x)}{F_{sw}(x)} \right) d x + \cO (t^2) 
\\ = & t \cdot \wis(F^\star \| F_{sw}) + \cO (t^2),
\end{align*}
where the weight function of the weighted Itakura–Saito Divergence is $w=F_{sw} - F_w$.
Therefore, as $t \to 0^+$, the second term in~\eqref{discriminant} is of the order $\cO (t)$.
If $\wis(F^\star \| F_{sw}) \le 0$, the second term in~\eqref{discriminant} will be non-positive (otherwise, it will be positive).

\paragraph{The third term.}
Taking the limit as $t \to 0^+$ in~\eqref{discriminant}:
\begin{align*} 
\underbrace{\kl(F_{sw} \| w')}_{\cO (t^2)} + \underbrace{\int_{\cX} (F_w(x) - F_{sw}(x)) \cdot \phi(x) d x}_{\cO (t)} + \underbrace{t \int_{\cX} (F_w(x) - F_{sw}(x)) \log \frac{F_{sw}(x)}{F^\star(x)} d x}_{\cO (t)} \ge 0.
\end{align*}

If the middle term is non-positive, the last term should be non-negative:
\begin{align}
    \int_{\cX} (F_w(x) - F_{sw}(x)) \log \frac{F_{sw}(x)}{F^\star(x)} d x \ge 0. \label{eqn:cross-term-inequality-2}
\end{align}
Substituting $F^\star$ for $g$ in \eqref{eqn:square-expansion}, and using \eqref{eqn:cross-term-inequality-2}, we obtain the desired result
\begin{align*}
    \dist(F_{sw}, F^\star) \le \dist(F_w, F^\star) - \dist(F_w, F_{sw}).
\end{align*}

Else, if the middle term is positive (i.e,, $\wis(F^\star \| F_{sw}) \le 0$ is not satisfied), the last term may also be non-negative, which means that $\dist(F_{sw}, F^\star) \le \dist(F_w, F^\star) - \dist(F_w, F_{sw})$ may also hold.

\end{proof}

The following tools used in the above proof can be proved by Taylor expansion.

\begin{claim} \label{lemma:taylor}
For $t,x \in \R_+$, there holds
$$x^{t} = 1 + t\log x + \frac{1}{2}t^2(\log x)^2 x^\xi,$$
where $\xi \in (0,t)$.
\end{claim}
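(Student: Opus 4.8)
The plan is to read the identity as a second-order Taylor expansion with Lagrange remainder of the one-variable function $\phi(s) = x^{s}$ about $s = 0$, treating $x > 0$ as a fixed parameter. First I would write $\phi(s) = \exp(s \log x)$, which is infinitely differentiable on $\R$, and compute $\phi(0) = 1$, $\phi'(s) = (\log x)\exp(s\log x) = (\log x)\,x^{s}$ so that $\phi'(0) = \log x$, and $\phi''(s) = (\log x)^{2} x^{s}$.

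Next I would invoke Taylor's theorem with the Lagrange form of the remainder on the interval with endpoints $0$ and $t$ (with $t > 0$): there exists $\xi \in (0,t)$ such that
\[
\phi(t) = \phi(0) + \phi'(0)\, t + \tfrac{1}{2}\phi''(\xi)\, t^{2}.
\]
Substituting the derivative values just computed yields $x^{t} = 1 + t\log x + \tfrac{1}{2} t^{2} (\log x)^{2} x^{\xi}$, which is precisely the claimed identity.

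Finally I would dispose of the degenerate case $x = 1$: then $\log x = 0$ and both sides equal $1$ for every choice of $\xi$, so the statement holds trivially, while for $x \neq 1$ the argument above applies verbatim. The only point requiring any care is the bookkeeping hypothesis of Taylor's theorem, namely that $\phi$ is $C^{2}$ on a neighborhood of the segment $[0,t]$; this is immediate since $\phi$ is the composition of $\exp$ with the linear map $s \mapsto s\log x$. Hence there is no genuine obstacle — the result is a routine application of Taylor's theorem — and I would keep the write-up to the few lines above.
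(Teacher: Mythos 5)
Your proposal is correct and is exactly the argument the paper intends: the paper merely remarks that this claim "can be proved by Taylor expansion," and your write-up supplies precisely that — a second-order Taylor expansion of $\phi(s)=e^{s\log x}$ about $s=0$ with Lagrange remainder, plus the harmless $x=1$ edge case. Nothing further is needed.
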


\begin{claim} \label{lemma:taylor_log}
For $x \in (0,1)$, there holds:
$$\log (1-x) = -x - \frac{1}{2}\zeta^2,$$
where $\zeta \in (0,x)$.
\end{claim}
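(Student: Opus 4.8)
The plan is to obtain \cref{lemma:taylor_log} as the second-order Taylor expansion with Lagrange remainder of $h(x) := \log(1-x)$ about $x=0$. First I would record that $h$ is $C^\infty$ on $(-\infty,1)$, with $h(0)=0$, $h'(x) = -(1-x)^{-1}$ (so $h'(0)=-1$), and $h''(x) = -(1-x)^{-2}$. Taylor's theorem then yields, for each fixed $x \in (0,1)$, some $\xi = \xi(x) \in (0,x)$ with
$$\log(1-x) \;=\; h(0) + h'(0)\,x + \tfrac12\,h''(\xi)\,x^2 \;=\; -x \;-\; \tfrac12\cdot\frac{x^2}{(1-\xi)^2}.$$

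The second step is to repackage the remainder in exactly the shape the statement requires. Putting $\zeta := \dfrac{x}{1-\xi}$, we get $\zeta > 0$ because $\xi \in (0,x)\subset(0,1)$, and $\dfrac{x^2}{(1-\xi)^2} = \zeta^2$, hence $\log(1-x) = -x - \tfrac12\zeta^2$. This already supplies everything the downstream arguments need: in the proofs of \cref{thm:realizable-main} and \cref{thm:realizable} the claim is applied with $x$ replaced by quantities such as $\frac{w'(x)-w(x)}{w'(x)}$ or $\frac{t\,u(x)}{F_{sw}(x)+t\,u(x)}$, both of which are $\cO(t)$, and only the fact that the quadratic remainder is $\cO(t^2)$ (uniformly, since the argument stays bounded away from $1$) is invoked. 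Since $0<\xi<x$ one has $x < \zeta < \dfrac{x}{1-x}$, so indeed $\zeta \to 0^+$ as $x \to 0^+$ and $\zeta = \Theta(x)$ on compact subsets of $(0,1)$.

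The one point needing care — and the genuine obstacle — is the precise localisation of $\zeta$: the literal assertion $\zeta \in (0,x)$ cannot hold, because $\tfrac{t}{1-t} > t$ on $(0,1)$ integrates to the strict bound $\log(1-x) < -x - \tfrac{x^2}{2}$, which forces $\zeta^2 = -2\bigl(\log(1-x)+x\bigr) > x^2$. I would therefore state the remainder variable as $\zeta \in \bigl(x,\tfrac{x}{1-x}\bigr)$ (equivalently, exhibit $\zeta = x/(1-\xi)$ for some $\xi\in(0,x)$), which is correct and in fact stronger, and leaves every use of \cref{lemma:taylor_log} in the paper untouched since only $\zeta=\cO(x)$ is needed. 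A Taylor-free alternative giving an explicit $\zeta$ is to write $\log(1-x)+x = -\int_0^x \tfrac{t}{1-t}\,dt$ and apply the mean value theorem for integrals, obtaining $\log(1-x)+x = -\tfrac{\eta}{1-\eta}\,x$ for some $\eta \in (0,x)$, whence $\zeta = \sqrt{2\eta x/(1-\eta)}$.
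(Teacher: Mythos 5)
Your derivation is the same Taylor-expansion argument the paper has in mind (the paper gives no written proof of \cref{lemma:taylor_log}, only the remark that it ``can be proved by Taylor expansion''), and it is correct. Your criticism of the stated localization is also right: since $\log(1-x) < -x - x^2/2$ for $x \in (0,1)$, any $\zeta$ satisfying the displayed identity obeys $\zeta^2 = -2\bigl(\log(1-x)+x\bigr) > x^2$, so $\zeta \in (0,x)$ is impossible; the correct statement is the standard Lagrange form $\log(1-x) = -x - \tfrac{x^2}{2(1-\xi)^2}$ with $\xi \in (0,x)$, or equivalently $\zeta = x/(1-\xi) \in \bigl(x, \tfrac{x}{1-x}\bigr)$ as you propose. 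As you observe, this correction is harmless downstream: in the proofs of \cref{thm:realizable-main} and \cref{thm:realizable} the claim is applied to arguments of size $\mathcal{O}(t)$ bounded away from $1$, and only $\zeta = \mathcal{O}(x)$ — hence an $\mathcal{O}(t^2)$ remainder — is ever used, which your corrected range still supplies.
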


\begin{claim} \label{lemma:taylor_frac}
For $x \in (-1,1)$, there holds:
$$\frac{1}{1+x}=1-x+\epsilon^2,$$
where $\epsilon$ is between $0$ and $x$.
\end{claim}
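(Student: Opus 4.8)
The plan is to prove the claim by a direct algebraic identity rather than by invoking the full machinery of Taylor's theorem, and then to repackage the remainder in the stated form. First I would note that for every $x \in (-1,1)$ the denominator $1+x$ is strictly positive, so every manipulation below is legitimate. The key computation is then the elementary exact identity
\begin{align*}
\frac{1}{1+x} - (1-x) = \frac{1-(1-x)(1+x)}{1+x} = \frac{x^2}{1+x},
\end{align*}
which rearranges to $\frac{1}{1+x} = 1 - x + \frac{x^2}{1+x}$. This isolates the quadratic remainder $R(x) := \frac{x^2}{1+x}$ exactly, with no approximation and no appeal to derivatives.

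The second step is to write $R(x)$ in the required form $\epsilon^2$ with $\epsilon$ of the right size. Since $1+x>0$ on $(-1,1)$ we have $R(x)\ge 0$, so putting $\epsilon := \operatorname{sign}(x)\,\frac{|x|}{\sqrt{1+x}}$ gives $\epsilon^2 = R(x)$ with $\epsilon$ of the same sign as $x$. For $x \in [0,1)$ one has $\sqrt{1+x}\ge 1$, hence $0 \le \epsilon \le x$, so $\epsilon$ lies between $0$ and $x$ as claimed (the case $x=0$ being trivial). Equivalently, one may instead run Taylor's theorem with the Lagrange remainder, $\frac{1}{1+x}=1-x+\tfrac{1}{(1+\eta)^3}x^2$ for some $\eta$ strictly between $0$ and $x$, and take $\epsilon := \operatorname{sign}(x)\,|x|/(1+\eta)^{3/2}$; this yields the same conclusion after the same one-line monotonicity check, which reconciles the statement with its ``proved by Taylor expansion'' framing.

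The only point that needs care — and hence the ``obstacle'' — is the sign regime $x \in (-1,0)$: there $\sqrt{1+x}<1$, so $|\epsilon| = |x|/\sqrt{1+x}$ can exceed $|x|$, i.e. $\epsilon < x < 0$, and $\epsilon$ is not literally between $0$ and $x$. Where Claim~\ref{lemma:taylor_frac} is actually used — expanding $\bigl(1+t(F^\star(x)/F_{sw}(x)-1)\bigr)^{-1}$ in the proof of \cref{thm:realizable} — the argument tends to $0$ and only the order of the remainder matters, so I would record the consequence we genuinely need, namely $R(x) = \cO(x^2)$ as $x\to 0$, which is immediate from $\frac{1}{1+x}\to 1$. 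Accordingly the cleanest fix is to state the claim either in that $\cO$-form or restricted to $x\in[0,1)$, proving it exactly via the displayed identity above; the rest of the argument is then just the trivial sign/monotonicity verification for $\sqrt{1+x}$ already indicated.
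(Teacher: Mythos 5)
Your proposal is correct, and it is actually more careful than what the paper offers: the paper gives no detailed argument for \cref{lemma:taylor_frac}, stating only that it ``can be proved by Taylor expansion'' (i.e.\ Lagrange's form of the remainder, $\tfrac{1}{1+x}=1-x+\tfrac{x^2}{(1+\eta)^3}$ with $\eta$ between $0$ and $x$), whereas you derive the exact identity $\tfrac{1}{1+x}=1-x+\tfrac{x^2}{1+x}$ and then repackage the remainder as $\epsilon^2$. The two routes are essentially equivalent in substance — both exhibit a remainder of order $x^2$ with an explicit, $x$-dependent denominator — but your algebraic identity is more elementary, exact, and makes the sign issue transparent. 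Your flagged obstacle is genuine: for $x\in(-1,0)$ the remainder $\tfrac{x^2}{1+x}$ (or $\tfrac{x^2}{(1+\eta)^3}$) strictly exceeds $x^2$, so no $\epsilon$ lying between $0$ and $x$ can satisfy $\epsilon^2$ equal to it; e.g.\ at $x=-1/2$ one needs $\epsilon^2=1/2$, which forces $|\epsilon|>|x|$. Hence the claim as literally stated holds only for $x\in[0,1)$, and the Taylor-expansion framing does not rescue the negative case. Your proposed repair is also the right one for how the claim is used: in the proof of \cref{thm:realizable} the argument is $u=t\bigl(F^\star(x)/F_{sw}(x)-1\bigr)$ with $t\to 0^+$ and $F^\star/F_{sw}$ bounded (outputs lie in $[\gamma,1]$), so $1+u$ stays bounded away from $0$ and only the statement $\tfrac{1}{1+u}=1-u+\cO(u^2)$ is needed, which your identity delivers uniformly on any interval $[-1+c,1)$ with $c>0$. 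So: same quantitative content as the paper's intended argument, obtained by a cleaner exact computation, together with a correct observation that the stated localization of $\epsilon$ should either be restricted to $x\ge 0$ or replaced by the $\cO(x^2)$ form.
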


\subsection{Proof of~\cref{thm:non-realizable-finite-samples-main}} \label{proof_non-realizable-main}

\paragraph{Proof sketch of~\citet{charikar2024quantifying}.}
They apply the proof technique from their Theorem 1 to different variables and obtain several inequalities.
Subsequently, leveraging the triangle inequality for the $\ell_2$-norm and a uniform convergence argument, they establish the desired result.

\paragraph{Proof sketch of ours.}
Extending the proof framework from Theorem 2 in~\citep{charikar2024quantifying} is also non-trivial. 
Specifically, the absence of a triangle inequality for KL divergence necessitates an alternative approach. 
To address this, we decompose the relevant terms in a manner analogous to the triangle inequality and exhaustively demonstrate that each of the three resulting remainder terms asymptotically converges to zero.

\paragraph{Notations.}
For a clear presentation, let
\begin{align*}
    A &= \dist(F_s, F_{sw})\\
    B &=\dist(F_{sw}, F_w) \\
    C &= \dist(F_s, F_w)\\
    D &= \dist(F^\star, F_s) =\eps\\
    E &= \dist(F^\star, F_{sw}) \\
    F &= \dist(F^\star, F_w) \\
    G &= \dist(F^\star, \hat{F}_{sw}) \\
    H &= \dist(\hat{F}_{sw}, F_{sw}) \\
    I &= \dist(\hat{F}_{sw}, F_w).
\end{align*}

Now we start the proof of~\cref{thm:non-realizable-finite-samples-main}.
A uniform convergence result and two claims used in the proof (\cref{lem:uniform-convergence}, \cref{claim_xlnx} and~\cref{claim_xlnx_reverse}) are provided at the end of the proof.

\begin{proof}
Non-realizable weak-to-strong generalization where $F^\star \notin V_s$, and we use a finite sample to perform weak-to-strong supervision. 
Note that by virtue of the range of $f^\star, f_w$ and all functions in $\cF$ being absolutely bounded, and $\dist$ is also bounded.

Due to $F^\star \notin V_s$, we replace $F^\star$ with $F_s$ in the final step of proof of~\Cref{thm:realizable-main}, we obtain
\begin{align}
    C \ge A + B. \label{eqn:1-main}
\end{align}

Notice that
\begin{align}
    & E = A + D - \underbrace{\int_\cX (F^\star(x)-F_s(x))\log \frac{F_{sw}(x)}{F_s(x)} d x}_{t_1}, \label{eqn:2-main}\\ 
    & F = C + D - \underbrace{\int_\cX (F^\star(x)-F_s(x))\log \frac{F_w(x)}{F_s(x)} d x}_{t_2}, \label{eqn:2-1-main} \\
    & G = E - H - \underbrace{\int_\cX (\hat{F}_{sw}(x)-F^\star(x))\log \frac{F_{sw}(x)}{\hat{F}_{sw}(x)} d x}_{t_3} \label{eqn:2-2-main}. 
\end{align}

Combining \eqref{eqn:1-main} and \eqref{eqn:2-main}, we get
\begin{align}
    E \le C + D - B - t_1. \label{eqn:3-main}
\end{align}

By a uniform convergence argument (\Cref{lem:uniform-convergence-rev}), we have that with probability at least $1-\delta$ over the draw of $\{(x_1,y_1),\dots, (x_n,y_n)\}$ that were used to construct $\hat{F}_{sw}$,
\begin{align}
    I &\le B + \underbrace{\cO\left(\sqrt{\frac{\cC_{\cF_s}}{n}}\right)}_{t_4} + \underbrace{\cO\left(\sqrt{\frac{\log(1/\delta)}{n}}\right)}_{t_5}. \label{eqn:uc-main}
\end{align}

Combining \eqref{eqn:3-main} with \eqref{eqn:uc-main} and we have
\begin{align}
    E \le C + D - I - t_1 + t_4 + t_5. \label{eqn:4-main}
\end{align}

Combining \eqref{eqn:2-1-main} with \eqref{eqn:4-main} and we have
\begin{align}
    E \le F - I - t_1 + t_2 + t_4 + t_5. \label{eqn:5-1-main}
\end{align}

Combining \eqref{eqn:2-2-main} with \eqref{eqn:5-1-main} and we have
\begin{align}
    G \le F - I - H - t_1 + t_2 - t_3 + t_4 + t_5. \label{eqn:5-main}
\end{align}

We replace $F^\star$ with $\hat{F}_{sw}$ in the final step of proof of~\Cref{thm:realizable} and obtain:
\begin{align}
    I \ge H + B. \label{eqn:uc-projection-main}
\end{align}

Combining \eqref{eqn:uc-projection-main} with \eqref{eqn:uc-main} and we have
\begin{align}
    0 \le H \le t_4 + t_5 = \cO\left(\sqrt{\frac{\cC_{\cF_s}}{n}}\right) + \cO\left(\sqrt{\frac{\log(1/\delta)}{n}}\right). \label{eqn:6-main}
\end{align}

Combining \eqref{eqn:6-main} with \eqref{eqn:5-main} and we have
\begin{align}
    G \le F - I - t_1 + t_2 - t_3 + t_4 + t_5. \label{eqn:7-main}
\end{align}

While $t_4$ and $t_5$ are known in~\eqref{eqn:uc-main}, we analyze $t_1$, $t_2$ and $t_3$ one by one.

\paragraph{Deal with $t_1$.}

We know that
\begin{align}
    t_1 & = \int_\cX (F^\star(x)-F_s(x))\log \frac{F_{sw}(x)}{F_s(x)} d x. \nonumber
\end{align}
Using Pinsker's inequality and the fact that $\frac{F_{sw}(x)}{F_s(x)} \le \frac{1}{\gamma}$, we have
\begin{align} \label{def_t_1_ineq-main}
    |t_1| \le \frac{1}{\gamma} \int_{\cX}\left| F_s(x) - F^\star(x) \right| d x \le \frac{1}{\gamma} \sqrt{\frac{1}{2}\kl(F_s \| F^\star)} = \frac{1}{\gamma} \sqrt{\frac{1}{2}\varepsilon}.
\end{align}

Therefore,
\begin{align} \label{equation_t_1-main}
    |t_1| = \cO(\sqrt{\varepsilon}).
\end{align}

\paragraph{Deal with $t_2$.}
The proof for $t_2$ is similar for $t_1$.
In particular, replacing $F_{sw}$ with $F_w$ in the above and we can get 
\begin{align} \label{equation_t_2-main}
    |t_2| = O(\sqrt{\varepsilon}).
\end{align}

\paragraph{Deal with $t_3$.}

We know that
$$t_3 = \int_\cX (\hat{F}_{sw}(x)-F^\star(x))\log \frac{F_{sw}(x)}{\hat{F}_{sw}(x)} d x.$$

According to~\cref{lem:uniform-convergence-rev}, 
with probability at least $1-\delta$ over the draw of $(x_1,y_1),\dots,(x_n, y_n)$, we have
\begin{align} \label{t_3_proof_unif_conv-main}
    \left|d_{\cP}(\hat{F}_{sw},F_w) - d_{\cP}(F_{sw},F_w) \right| \le \cO\left(\sqrt{\frac{\cC_{\cF}}{n}}\right) + \cO\left(\sqrt{\frac{\log(1/\delta)}{n}}\right).
\end{align}

According to~\cref{claim_xlnx} and~\eqref{t_3_proof_unif_conv-main}, we have
\begin{align*}
    \left| \int_\cX \log \frac{F_{sw}(x)}{\hat{F}_{sw}(x)} d x \right| \le \cO\left(\sqrt{\frac{\cC_{\cF}}{n}}\right) + \cO\left(\sqrt{\frac{\log(1/\delta)}{n}}\right).
\end{align*}

Since $|\hat{F}_{sw}(x)-F^\star(x)|$ is upper bounded, there holds
\begin{align} \label{equation_t_3-main}
    |t_3| = \left| \int_\cX (\hat{F}_{sw}(x)-F^\star(x))\log \frac{F_{sw}(x)}{\hat{F}_{sw}(x)} d x \right| \le \cO\left(\sqrt{\frac{\cC_{\cF}}{n}}\right) + \cO\left(\sqrt{\frac{\log(1/\delta)}{n}}\right).
\end{align}

Therefore, combing~\eqref{equation_t_1-main}, \eqref{equation_t_2-main} and \eqref{equation_t_3-main}, we have
\begin{align} \label{t_1_2_3_ineq-main}
    |t_1| + |t_2| + |t_3| \le O(\sqrt{\varepsilon}) + \cO\left(\sqrt{\frac{\cC_{\cF}}{n}}\right) + \cO\left(\sqrt{\frac{\log(1/\delta)}{n}}\right).
\end{align}

Finally, combing~\eqref{eqn:uc-main} and~\eqref{eqn:7-main} with \eqref{eqn:6-main} and~\eqref{t_1_2_3_ineq-main}, we get the result:
\begin{align*}
\dist(F^\star, \hat{F}_{sw}) \le \dist(F^\star, F_w) - \dist(\hat{F}_{sw}, F_w) + O(\sqrt{\eps}) + \cO\left(\sqrt{\frac{\cC_{\cF}}{n}}\right) + \cO\left(\sqrt{\frac{\log(1/\delta)}{n}}\right),
\end{align*}
where in the last inequality, we instantiate asymptotics with respect to $\eps \to 0$ and $n \to \infty$.

\end{proof}

Here are some tools used in the above proof.

\begin{lemma}[Uniform convergence (forward KL loss)]
\label{lem:uniform-convergence}
Let $(x_1,y_1),\dots,(x_n, y_n)$ be an i.i.d. training sample, where each $x_i \sim \cP$ and $y_i = F_w(x_i)$ for a target function $F_w$. For a fixed strong model representation $h_s$, let 
\begin{align*}
    & f_{sw} = \argmin_{f \in \cF_{s}}\; \dist(F_w, f \circ h_s),
    \\ & \hat{f}_{sw} = \argmin_{f \in \cF_s} \disthat(F_w, f \circ h_s).
\end{align*}
Assume that the range of $F_w$ and functions in $\cF_s$ is absolutely bounded. Then, with probability at least $1-\delta$ over the draw of $(x_1,y_1),\dots,(x_n, y_n)$, we have
\begin{align*}
    \left|d_{\cP}(F_w, \hat{F}_{sw}) - d_{\cP}(F_w, F_{sw}) \right| \le \cO\left(\sqrt{\frac{\cC_{\cF_s}}{n}}\right) + \cO\left(\sqrt{\frac{\log(1/\delta)}{n}}\right),
\end{align*}
where $\cC_{\cF_s}$ is a constant capturing the complexity of the function class $\cF_s$.
\end{lemma}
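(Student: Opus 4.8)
\textbf{Proof plan for Lemma~\ref{lem:uniform-convergence}.}
The plan is to bound the deviation $\left|\disthat(F_w, f\circ h_s) - \dist(F_w, f\circ h_s)\right|$ uniformly over $f \in \cF_s$, and then convert this into a bound on $\left|\dist(F_w,\hat F_{sw}) - \dist(F_w,F_{sw})\right|$ via the standard ERM argument. First I would observe that the per-sample loss is $\ell(f;x) \triangleq F_w(x)\log\frac{F_w(x)}{(f\circ h_s)(x)}$, and that since the range of $F_w$ and of the functions in $\cF_s$ is absolutely bounded and bounded away from $0$ (by the assumption $\cY = \{y : 0 < \gamma \le y \le 1\}$ carried over from Setting~2), the map $x \mapsto \ell(f;x)$ is uniformly bounded by some constant $M = M(\gamma)$ and the functional $f \mapsto \log(f\circ h_s)(x)$ is Lipschitz in $f\circ h_s$ with constant $1/\gamma$. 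Thus $\disthat(F_w,\cdot)$ is an empirical average of i.i.d.\ bounded random variables indexed by $f$.

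The key steps, in order, are: (i) write the uniform deviation $\sup_{f\in\cF_s}\left|\disthat(F_w,f\circ h_s)-\dist(F_w,f\circ h_s)\right|$ and apply a symmetrization plus Rademacher-complexity argument; (ii) use the contraction (Talagrand) inequality to peel off the $1/\gamma$-Lipschitz function $u \mapsto F_w(x)\log\frac{F_w(x)}{u}$, reducing the problem to the Rademacher complexity $\mathfrak{R}_n(V_s)$ of the class $V_s = \{f\circ h_s : f\in\cF_s\}$, which for the linear class $\cF_s$ scales as $\sqrt{\cC_{\cF_s}/n}$ with $\cC_{\cF_s}$ capturing (e.g.) the squared norm bound times the input-norm bound; (iii) apply McDiarmid/bounded-differences to get the high-probability $\cO(\sqrt{\log(1/\delta)/n})$ tail term, since changing one sample moves $\disthat$ by at most $M/n$; (iv) combine (i)--(iii) into $\sup_{f\in\cF_s}|\disthat - \dist| \le \cO(\sqrt{\cC_{\cF_s}/n}) + \cO(\sqrt{\log(1/\delta)/n})$ on an event of probability $\ge 1-\delta$.

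Finally, I would run the routine ERM sandwich on that event: by definition $\disthat(F_w,\hat F_{sw}) \le \disthat(F_w,F_{sw})$, so
\begin{align*}
\dist(F_w,\hat F_{sw}) &\le \disthat(F_w,\hat F_{sw}) + \sup_f|\dist - \disthat| \\
&\le \disthat(F_w,F_{sw}) + \sup_f|\dist - \disthat| \\
&\le \dist(F_w,F_{sw}) + 2\sup_f|\dist - \disthat|,
\end{align*}
and symmetrically $\dist(F_w,F_{sw}) \le \dist(F_w,\hat F_{sw}) + 2\sup_f|\dist-\disthat|$ using that $F_{sw}$ minimizes the population risk; together these give $\left|\dist(F_w,\hat F_{sw}) - \dist(F_w,F_{sw})\right| \le \cO(\sqrt{\cC_{\cF_s}/n}) + \cO(\sqrt{\log(1/\delta)/n})$. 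The main obstacle I anticipate is step (ii): verifying that the logarithmic loss genuinely has a bounded Lipschitz constant requires the lower bound $\gamma > 0$ on all model outputs (otherwise $\log$ blows up), and one must be careful that $\hat f_{sw}$, being an empirical minimizer, still produces outputs bounded away from $0$ — this is where the assumption that functions in $\cF_s$ (composed with $h_s$) have range in $[\gamma,1]$ is doing the real work, and the constant $\cC_{\cF_s}$ must be stated to absorb the resulting $1/\gamma$ factors.
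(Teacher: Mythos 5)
Your proposal is correct and follows essentially the same route as the paper: a standard ERM risk decomposition (the paper's three-term $a+b+c$ split with the empirical-minimizer term nonpositive is exactly your sandwich), combined with a symmetrization/Rademacher bound, the Ledoux--Talagrand contraction principle to handle the Lipschitz log-loss (using the lower bound $\gamma>0$ on outputs, just as you flag), and a concentration tail for the $\sqrt{\log(1/\delta)/n}$ term. No meaningful differences to report.
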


\begin{proof}
The proof is strongly motivated by lemma 4 in~\citet{charikar2024quantifying}.

Note that
\begin{multline}
    \label{eqn:risk-decomposition}
    d_{\cP}(F_w, \hat{F}_{sw}) - d_{\cP}(F_w, F_{sw}) = \underbrace{d_{\cP}(F_w, \hat{F}_{sw}) - \hat{d}_{\cP}(F_w, \hat{F}_{sw})}_{a} + \\ \underbrace{\hat{d}_{\cP}(F_w, \hat{F}_{sw}) -  \hat{d}_{\cP}(F_w, F_{sw})}_{b} + \underbrace{\hat{d}_{\cP}(F_w, F_{sw}) - d_{\cP}(F_w, F_{sw})}_{c}.
\end{multline}
By the definition of $\hat{f}_{sw}$, the second term $b\le 0$ in \eqref{eqn:risk-decomposition}. 
Therefore,
\begin{align} \label{proof:ineq:conv}
    \left| d_{\cP}(F_w, \hat{F}_{sw}) - d_{\cP}(F_w, F_{sw}) \right| \le |a| +|c|.
\end{align}
The terms $a$ and $c$ measure the difference between the empirical risk and true population risk, and can be controlled by a standard uniform convergence argument.

Let $S = \{(x_1,y_1),\dots,(x_n,y_n)\}$, where $x_i \sim \cP$ and $y_i = F_w(x_i)$. 
According to statistical learning theory literature~\citep{bartlett2002rademacher}, it first holds that with probability at least $1-\delta$,
\begin{align*}
    \sup_{f \in \cF_s} |\hat{d}_{\cP}(F_w, f \circ h_s) - d_{\cP}(F_w, f \circ h_s)| &\le O \left( \mathcal{R}_n (l(\cF_s))\right) + \cO\left(\sqrt{\frac{\log(1/\delta)}{n}}\right),
\end{align*}
where $\mathcal{R}_n (l(\cF_s))$ is the \textit{Rademacher complexity} of the loss class of $\cF_s$:
\begin{align*}
    \mathcal{R}_n (l(\cF_s)) &= \E_{S} \E_{\eps_i \sim \{-1,1\}} \sup_{f \in \cF_s} \frac{1}{n}\sum_{i=1}^n \eps_i \cdot \ell(f \circ h_s(x_i), y_i).
\end{align*}
Notice again that the model output space $\cY = \{ y \in \R| 0 < \gamma \le y \le 1, \gamma>0 \}$.
We can then use the assumption that the range of $F_w$ and $\cF_s$ is absolutely bounded, which implies that $\ell$ is both bounded and Lipschitz in both arguments. 
This allows us to use the contraction principle in Theorem 4.12 from~\citet{ledoux2013probability} so as to move from the Rademacher complexity of the loss class $l(\cF_s)$ to that of $\cF_s$ itself, and claim that with probability at least $1-\delta$,
\begin{align}
    \sup_{f \in \cF_s} |\hat{d}_{\cP}(F_w, f \circ h_s) - d_{\cP}(F_w, f \circ h_s)| & \le O \left( \mathcal{R}_n (\cF_s)\right) + \cO\left(\sqrt{\frac{\log(1/\delta)}{n}}\right) \label{eqn:rademacher-cxty-bound}
\end{align}
Finally, the Rademacher complexity $\mathcal{R}_n (\cF_s)$ can be upper bounded by a quantity known as the \textit{worst-case Gaussian complexity} of $\cF_s$; in any case, for a majority of parametric function classes $\cF_s$, this quantity scales as $\sqrt{\frac{\cC_{\cF_s}}{n}}$~\citep{bartlett2002rademacher}, where $\cC_{\cF_s}$ is a constant capturing the inherent complexity of $\cF_s$. 
Plugging this into \eqref{eqn:rademacher-cxty-bound} and considering $f=\hat{f}_{sw}$ or $f=f_{sw}$ in this inequality, we have
\begin{align*}
    \underbrace{\left| \hat{d}_{\cP}(F_w, \hat{F}_{sw}) - d_{\cP}(F_w, \hat{F}_{sw}) \right|}_{|a|} \le \cO\left(\sqrt{\frac{\cC_{\cF_s}}{n}}\right) + \cO\left(\sqrt{\frac{\log(1/\delta)}{n}}\right), \\
    \underbrace{\left| \hat{d}_{\cP}(F_w, F_{sw}) - d_{\cP}(F_w, F_{sw}) \right|}_{|c|} \le \cO\left(\sqrt{\frac{\cC_{\cF_s}}{n}}\right) + \cO\left(\sqrt{\frac{\log(1/\delta)}{n}}\right).
\end{align*}

Finally, substitute it into~\cref{proof:ineq:conv} and we can obtain the desired bound.
\end{proof}

\begin{lemma}[Uniform convergence (reverse KL loss)]
\label{lem:uniform-convergence-rev}
Let $(x_1,y_1),\dots,(x_n, y_n)$ be an i.i.d. training sample, where each $x_i \sim \cP$ and $y_i = F_w(x_i)$ for a target function $F_w$. For a fixed strong model representation $h_s$, let 
\begin{align*}
    & f_{sw} = \argmin_{f \in \cF_{s}}\; \dist(f \circ h_s, F_w),
    \\ & \hat{f}_{sw} = \argmin_{f \in \cF_s} \disthat(f \circ h_s, F_w).
\end{align*}
Assume that the range of $F_w$ and functions in $\cF_s$ is absolutely bounded. Then, with probability at least $1-\delta$ over the draw of $(x_1,y_1),\dots,(x_n, y_n)$, we have
\begin{align*}
    \left|d_{\cP}(\hat{F}_{sw}, F_w) - d_{\cP}(F_{sw}, F_w) \right| \le \cO\left(\sqrt{\frac{\cC_{\cF_s}}{n}}\right) + \cO\left(\sqrt{\frac{\log(1/\delta)}{n}}\right),
\end{align*}
where $\cC_{\cF_s}$ is a constant capturing the complexity of the function class $\cF_s$.
\end{lemma}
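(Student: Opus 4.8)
The plan is to follow the proof of \cref{lem:uniform-convergence} essentially verbatim, exchanging the two arguments of $\dist$ and $\disthat$ so that the variable model $f\circ h_s$ now occupies the first slot of the (reverse) KL divergence. First I would write the telescoping decomposition
\begin{multline*}
\dist(\hat{F}_{sw}, F_w) - \dist(F_{sw}, F_w) = \underbrace{\dist(\hat{F}_{sw}, F_w) - \disthat(\hat{F}_{sw}, F_w)}_{a} \\ + \underbrace{\disthat(\hat{F}_{sw}, F_w) - \disthat(F_{sw}, F_w)}_{b} + \underbrace{\disthat(F_{sw}, F_w) - \dist(F_{sw}, F_w)}_{c},
\end{multline*}
observe that $b \le 0$ since $\hat f_{sw}$ minimizes $\disthat(f\circ h_s, F_w)$ over $f \in \cF_s$, and hence reduce the claim to bounding $|a| + |c| \le 2\sup_{f\in\cF_s} |\disthat(f\circ h_s, F_w) - \dist(f\circ h_s, F_w)|$.

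The second step is the standard Rademacher-complexity control of this uniform deviation. The per-sample loss is now $\ell(f\circ h_s(x_i), F_w(x_i)) = \mathrm{D}_{\mathrm{KL}}(f\circ h_s(x_i) \| F_w(x_i))$, a sum over $k$ coordinates of terms $u\log(u/v)$ where $u$ is a coordinate of the variable strong-model output and $v$ the corresponding coordinate of $F_w(x_i)$. Using that every output coordinate lies in $[\gamma,1]$, one checks that $u \mapsto u\log(u/v)$ is bounded by $\log(1/\gamma)$ and has derivative $\log(u/v) + 1$, so it is $(\log(1/\gamma)+1)$-Lipschitz in $u$ uniformly in $v \in [\gamma,1]$; summing the coordinates keeps $\ell$ bounded and Lipschitz in its first argument. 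With this in hand, the contraction principle (Theorem 4.12 of~\citet{ledoux2013probability}, as invoked in the proof of \cref{lem:uniform-convergence}) transfers the bound from the Rademacher complexity of the loss class to that of $\cF_s$ itself, which for parametric classes scales as $\sqrt{\cC_{\cF_s}/n}$, while bounded-differences concentration supplies the $\cO(\sqrt{\log(1/\delta)/n})$ term. Instantiating $f = \hat f_{sw}$ and $f = f_{sw}$ in the resulting uniform bound controls $|a|$ and $|c|$, and combining yields the stated inequality.

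The only genuine difference from the forward case, and the part I expect to require the most care, is the Lipschitz/boundedness check: in \cref{lem:uniform-convergence} the variable argument enters the divergence only through $-[F_w]_l \log[f\circ h_s]_l$, i.e.\ inside the logarithm in the denominator, whereas here it enters both as the multiplicative prefactor and inside the logarithm, via an $x\log x$-type term. This is handled by the elementary fact that $x \mapsto x\log x$ has bounded derivative on $[\gamma,1]$, and it is precisely here that the hypothesis of outputs bounded away from $0$ (the constant $\gamma>0$ from the output-domain assumption) is indispensable, since without it $\ell$ would be neither bounded nor Lipschitz. All remaining ingredients — the decomposition, $b \le 0$, and the passage from Rademacher to worst-case Gaussian complexity — are identical to the forward-KL argument.
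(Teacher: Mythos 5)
Your proposal is correct and takes essentially the same route as the paper: the paper's own proof of this lemma is literally ``swap the order of the two arguments in the proof of \cref{lem:uniform-convergence},'' which is exactly the decomposition-plus-contraction argument you reproduce. Your explicit verification that $u\mapsto u\log(u/v)$ is bounded and $(\log(1/\gamma)+1)$-Lipschitz on $[\gamma,1]$ supplies the one detail the paper leaves implicit in that swap.
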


\begin{proof}
    Swap the order of the two elements in $\dist(\cdot, \cdot)$ and $\hat{d}_\cP(\cdot, \cdot)$ in the proof of~\cref{lem:uniform-convergence} and we can prove the result.
\end{proof}

\begin{claim} \label{claim_xlnx}
Let $f(x), g(x) \in [\gamma,1]$ where $\gamma>0$. If there exists $\xi>0$ such that $\int_{\cX} \left|f(x)-g(x) \right| d x \le \xi$,
then there holds
$$\int_{\cX} \left| \log f(x)- \log g(x) \right| d x \le \frac{1}{\gamma}\xi.$$
\end{claim}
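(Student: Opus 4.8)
The plan is to exploit the fact that $\log$ is Lipschitz on the interval $[\gamma,1]$, with Lipschitz constant exactly $1/\gamma$, and then pass from a pointwise inequality to the integral inequality. Concretely, I would first fix an arbitrary $x \in \cX$ and, assuming without loss of generality that $f(x) \neq g(x)$ (the case $f(x)=g(x)$ being trivial), apply the mean value theorem to $\log$ on the closed interval with endpoints $f(x)$ and $g(x)$. Since both endpoints lie in $[\gamma,1]$, this interval is contained in $[\gamma,1]$, so there exists a point $\theta_x$ in it with
\begin{align*}
\log f(x) - \log g(x) = \frac{1}{\theta_x}\bigl(f(x) - g(x)\bigr),
\qquad \theta_x \in [\gamma,1].
\end{align*}

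The second step is to take absolute values and bound $1/\theta_x \le 1/\gamma$, which gives the pointwise estimate
\begin{align*}
\bigl| \log f(x) - \log g(x) \bigr| \le \frac{1}{\gamma}\,\bigl| f(x) - g(x) \bigr|
\quad \text{for all } x \in \cX.
\end{align*}
(Alternatively, one can avoid the mean value theorem entirely: since $\log$ is monotone, $|\log f(x) - \log g(x)| = \bigl|\int_{g(x)}^{f(x)} \frac{1}{t}\,dt\bigr| \le \frac{1}{\gamma}|f(x)-g(x)|$ because $1/t \le 1/\gamma$ on the range of integration.)

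The final step is simply to integrate the pointwise bound over $\cX$ and invoke the hypothesis $\int_{\cX} |f(x)-g(x)|\,dx \le \xi$:
\begin{align*}
\int_{\cX} \bigl| \log f(x) - \log g(x) \bigr|\,dx
\le \frac{1}{\gamma} \int_{\cX} \bigl| f(x) - g(x) \bigr|\,dx
\le \frac{\xi}{\gamma}.
\end{align*}
I do not anticipate any real obstacle here; the only things to be mildly careful about are noting that the interval between $f(x)$ and $g(x)$ stays inside $[\gamma,1]$ so that the derivative bound $1/t \le 1/\gamma$ applies, and that measurability of $x \mapsto |\log f(x) - \log g(x)|$ is inherited from that of $f$ and $g$ since $\log$ is continuous. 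This is essentially the statement that a $\tfrac1\gamma$-Lipschitz map does not expand $L^1$ distances beyond a factor $\tfrac1\gamma$.
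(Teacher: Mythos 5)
Your proposal is correct and follows essentially the same route as the paper: the paper also bounds the secant slope $\left|\frac{\log f(x)-\log g(x)}{f(x)-g(x)}\right|$ by $\frac{1}{\gamma}$ on $[\gamma,1]$ and then integrates, which is exactly your Lipschitz/mean-value-theorem argument (your version just justifies the slope bound a bit more explicitly).
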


\begin{proof}

Using the property of the function $\phi(x)=\log x$ (as shown in~\cref{fig:function_xlogx}):
if $x \in (0,1]$, then the slope of a line with any two points on the function $\phi(x)$ is bounded.
\begin{figure}[ht]
    \centering
    \includegraphics[width=0.4\linewidth]{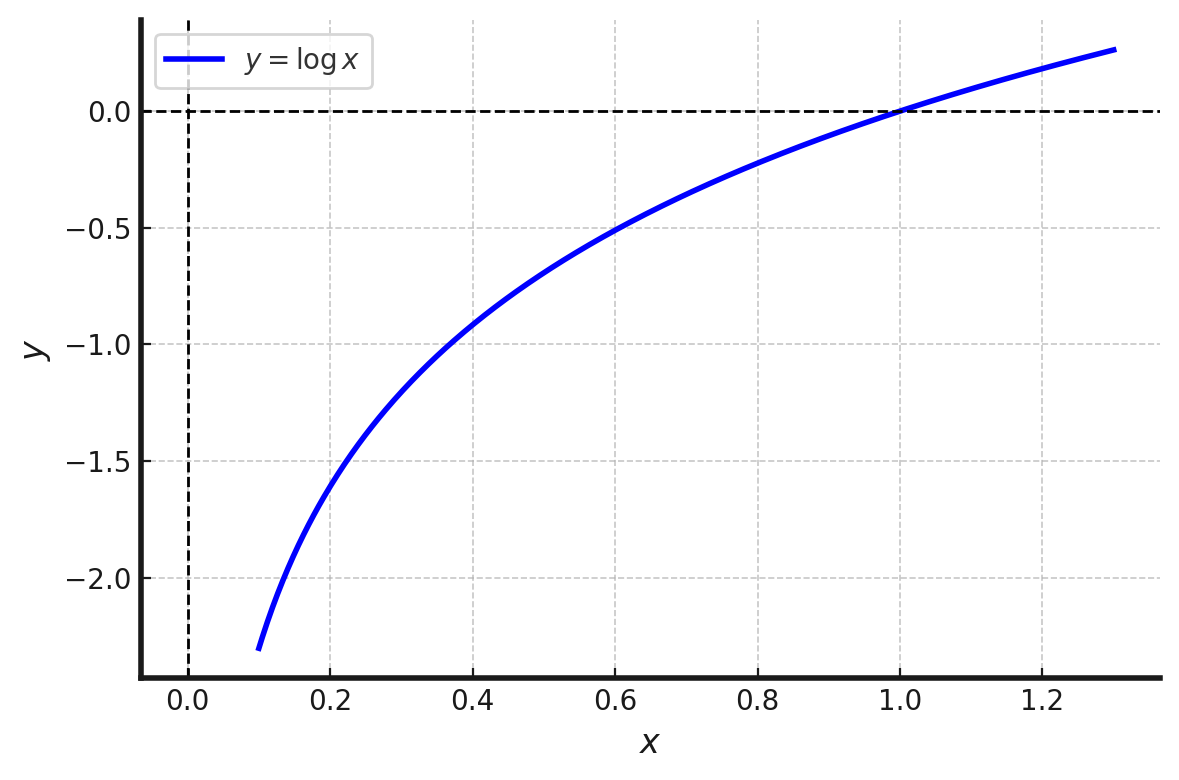}
    \caption{The function $\phi(x)= \log x$.}
    \label{fig:function_xlogx}
\end{figure}

In particular, we have
\begin{align*}
    & \int_{\cX} \left|\log f(x)-\log g(x) \right| d x \nonumber \\ = & \int_{\cX} \left|\frac{\log f(x)-\log g(x) }{f(x)-g(x)}\right| |f(x)-g(x)| d x \nonumber \\ \le & \frac{1}{\gamma} \int_{\cX} \left|f(x)-g(x) \right| d x \nonumber \\ \le & \frac{1}{\gamma} \xi.
\end{align*}
\end{proof}

\begin{claim} \label{claim_xlnx_reverse}
Let $f(x), g(x) \in [\gamma,1]$ where $\gamma>0$. If there exists $\xi>0$ such that $\int_{\cX} \left| \log f(x) - \log g(x) \right| d x \le \xi$,
then there holds
$$\int_{\cX} \left| f(x)- g(x) \right| d x \le \xi.$$
\end{claim}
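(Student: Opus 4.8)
The plan is to reverse the argument used in the proof of Claim~\ref{claim_xlnx}, exploiting that on the relevant range the logarithm contracts distances rather than expands them. The key elementary fact is: for any $a,b$ lying in $(0,1]$, one has $|\log a-\log b|\ge|a-b|$. To see this, assume without loss of generality $a<b\le 1$ and write $\log b-\log a=\int_a^b \frac{1}{t}\,dt\ge\int_a^b 1\,dt=b-a$, using $1/t\ge 1$ for $t\le 1$. (Equivalently, the mean value theorem gives a point $c$ between $a$ and $b$ with $\log a-\log b=(a-b)/c$ and $c\le 1$, so $|a-b|\le|\log a-\log b|$; when $a=b$ the inequality is trivial.)

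Next I would apply this pointwise. Since $f(x),g(x)\in[\gamma,1]\subseteq(0,1]$ for every $x\in\cX$ by hypothesis, the inequality above yields $|f(x)-g(x)|\le|\log f(x)-\log g(x)|$ for all $x$. Integrating both sides over $\cX$ and then invoking the assumption $\int_{\cX}|\log f(x)-\log g(x)|\,dx\le\xi$ gives $\int_{\cX}|f(x)-g(x)|\,dx\le\xi$, which is exactly the claim.

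There is essentially no obstacle here; the only point that needs care is that the slope comparison goes in the correct direction, and this is guaranteed precisely because the common domain is bounded above by $1$ (so $1/t\ge 1$ on it). Note the lower bound $\gamma>0$ is not actually used in this direction — it is retained only for symmetry with Claim~\ref{claim_xlnx}, where the upper bound $1/\gamma$ on the slope of $\log$ near $0$ is genuinely needed. One could phrase the pointwise step instead via monotonicity of $t\mapsto t-\log t$, but the integral/MVT route is the cleanest.
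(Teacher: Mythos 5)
Your proof is correct and matches the paper's argument: both rest on the pointwise bound $|f(x)-g(x)|\le|\log f(x)-\log g(x)|$, which holds because the secant slope of $\log$ on $(0,1]$ is at least $1$, followed by integration. Your observation that the lower bound $\gamma>0$ is not actually needed in this direction is also accurate.
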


\begin{proof}

Using the property of the function $\phi(x)=\log x$ (as shown in~\cref{fig:function_xlogx}):
if $x \in (0,1]$, then the slope of a line with any two points on the function $\phi(x)$ is bounded.

In particular, we have
\begin{align*}
    & \int_{\cX} \left|f(x)-g(x) \right| d x \nonumber \\ = & \int_{\cX} \left|\frac{\log f(x)-\log g(x)}{f(x)-g(x)}\right|^{-1} |\log f(x)-\log g(x)| d x \nonumber \\ \le & \int_{\cX} \left|\log f(x)-\log g(x) \right| d x \nonumber \\ \le & \xi.
\end{align*}
\end{proof}

\subsection{Extension of~\cref{thm:non-realizable-finite-samples-main}} \label{proof_non-realizable}

This additional theoretical result follows~\cref{thm:realizable} in~\cref{proof_theorem_1}.

\begin{corollary} \label{thm:non-realizable-finite-samples}
Given $F^\star$, $F_w$ and $F_{sw}$ defined above.
Let $\dist$ be the output distribution divergence and consider W2SG in~\cref{eqn:erm} using forward KL divergence loss.
Consider $\cF_s$ that satisfies~\cref{convex_set}.
If the weighted Itakura–Saito divergence $\wis(F^\star \| F_{sw}) \le 0$ with the weight function $w=F_{sw} - F_w$, then we have that with probability at least $1-\delta$ over the draw of $n$ i.i.d. samples,
\begin{align} 
\dist(\hat{F}_{sw}, F^\star) \le \dist(F_w, F^\star) - \dist(F_w, \hat{F}_{sw}) + \cO(\sqrt{\eps}) +  \cO\left(\sqrt{\frac{\cC_{\cF_s}}{n}}\right) + \cO\left(\sqrt{\frac{\log(1/\delta)}{n}}\right),
\end{align}
where $\cC_{\cF_s}$ is a constant capturing the complexity of the function class $\cF_s$, and the asymptotic notation is with respect to $\eps \to 0, n \to \infty$.
\end{corollary}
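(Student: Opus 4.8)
The plan is to replay the proof of \cref{thm:non-realizable-finite-samples-main} almost verbatim, swapping every occurrence of reverse KL for forward KL: \cref{thm:realizable-main} is replaced by its forward‑KL counterpart \cref{thm:realizable}, and the uniform‑convergence lemma \cref{lem:uniform-convergence-rev} is replaced by \cref{lem:uniform-convergence}. Concretely, I would first introduce the shorthand $A=\dist(F_{sw},F_s)$, $B=\dist(F_w,F_{sw})$, $C=\dist(F_w,F_s)$, $D=\dist(F_s,F^\star)=\eps$, $E=\dist(F_{sw},F^\star)$, $F=\dist(F_w,F^\star)$, $G=\dist(\hat F_{sw},F^\star)$, $H=\dist(F_{sw},\hat F_{sw})$, $I=\dist(F_w,\hat F_{sw})$ --- the same quantities as in \cref{proof_non-realizable-main} but with the arguments of $\dist$ in the ``forward'' order that matches \cref{eqn:fsw-population-minimizer} and \cref{eqn:erm} under forward KL.

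Since $F^\star\notin V_s$ in the unrealizable regime, I would rerun the last step of the proof of \cref{thm:realizable} with the realizable target $F^\star$ replaced by $F_s\in V_s$; this yields the Bregman‑projection inequality $C\ge A+B$, where the hypothesis $\wis(F^\star\|F_{sw})\le 0$ is used exactly as in \cref{thm:realizable} (the $\cO(\sqrt\eps)$ slack between $F_s$ and $F^\star$ is postponed to the remainder terms). Next I would write the three ``three‑point'' expansions of $E$, $F$, $G$ about $F_s$ and $F_{sw}$ --- the forward‑KL analogues of \eqref{eqn:2-main}, \eqref{eqn:2-1-main}, \eqref{eqn:2-2-main} --- each of which produces exactly one cross term, call them $t_1,t_2,t_3$, of the form $\int_\cX(\cdot)\log\frac{(\cdot)}{(\cdot)}\,dx$. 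Combining $C\ge A+B$ with these expansions and with the uniform‑convergence estimate $|I-B|\le t_4+t_5$, where $t_4+t_5=\cO(\sqrt{\cC_{\cF_s}/n})+\cO(\sqrt{\log(1/\delta)/n})$, from \cref{lem:uniform-convergence}, together with the bound $0\le H\le t_4+t_5$ obtained by invoking \cref{thm:realizable} once more with $\hat F_{sw}$ in the target slot, reproduces the chain ending at $G\le F-I-t_1+t_2-t_3+t_4+t_5$, exactly as in \eqref{eqn:7-main}.

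Bounding the remainders is the routine part. For $t_1$ and $t_2$, Pinsker's inequality together with the hypothesis $\cY\subseteq[\gamma,1]$ --- which makes all log‑ratios in the cross terms absolutely bounded by $\log\frac1\gamma$ --- gives $|t_1|,|t_2|\le\frac1\gamma\sqrt{\eps/2}=\cO(\sqrt\eps)$, just as for $t_1,t_2$ in \cref{proof_non-realizable-main}. For $t_3$, I would combine the uniform‑convergence estimate for $\dist(\hat F_{sw},F_w)$ versus $\dist(F_{sw},F_w)$ with \cref{claim_xlnx}, \cref{claim_xlnx_reverse}, and the boundedness of $\hat F_{sw}-F^\star$ to get $|t_3|\le\cO(\sqrt{\cC_{\cF_s}/n})+\cO(\sqrt{\log(1/\delta)/n})$. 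Substituting into $G\le F-I-t_1+t_2-t_3+t_4+t_5$ and instantiating the asymptotics as $\eps\to0$, $n\to\infty$ gives the claimed inequality.

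The main obstacle is the asymmetry of KL that already forced the $\wis$‑assumption in \cref{thm:realizable}: the step ``replace the realizable target by $F_s$ (resp.\ by $\hat F_{sw}$)'' requires re‑examining the first‑order Taylor argument behind \cref{thm:realizable} to make sure it still closes with the non‑realizable target, and --- a minor extra wrinkle absent from the reverse‑KL proof --- one must reconcile the two orientations $\dist(F_{sw},\hat F_{sw})$ and $\dist(\hat F_{sw},F_{sw})$ that appear respectively in the projection inequality and in the three‑point expansion of $G$; both are controlled by $t_4+t_5$ up to the density‑ratio factor $1/\gamma$ coming from $\cY\subseteq[\gamma,1]$, so they are harmless for the stated $\cO(\cdot)$ bound, but this needs to be spelled out. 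Once these points are handled, the rest is mechanical chaining of inequalities.
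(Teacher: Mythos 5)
Your proposal matches the paper's proof of \cref{thm:non-realizable-finite-samples} essentially step for step: the same shorthand $A$--$I$ in the forward order, the same substitution of $F_s$ (resp.\ $\hat F_{sw}$) into the final step of \cref{thm:realizable} to get $C\ge A+B$ and $I\ge H+B$, the same three-point expansions producing $t_1,t_2,t_3$, \cref{lem:uniform-convergence} for $t_4+t_5$, and the same Pinsker/\cref{claim_xlnx}/\cref{claim_xlnx_reverse} bounds on the remainders. The only cosmetic discrepancy is that in the forward-KL version the bounded factor in $t_3$ is $\log\frac{F^\star}{F_{sw}}$ (with $\hat F_{sw}-F_{sw}$ controlled by uniform convergence) rather than $\hat F_{sw}-F^\star$, but this does not change the argument.
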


\begin{proof}

For a clear presentation, let
\begin{align*}
    A &= \dist(F_{sw}, F_s)\\
    B &=\dist(F_w, F_{sw}) \\
    C &= \dist(F_w, F_s)\\
    D &= \dist(F_s, F^\star) =\eps\\
    E &= \dist(F_{sw}, F^\star) \\
    F &= \dist(F_w, F^\star) \\
    G &= \dist(\hat{F}_{sw}, F^\star) \\
    H &= \dist(F_{sw}, \hat{F}_{sw}) \\
    I &= \dist(F_w, \hat{F}_{sw}).
\end{align*}

The main proof idea follows~\cref{proof_non-realizable-main}.
Specifically, we first replace $F^\star$ with $F_s$ in the final step of proof of~\Cref{thm:realizable}, we obtain
\begin{align}
    C \ge A + B. \label{eqn:1}
\end{align}

Notice that
\begin{align}
    & E = A + D - \underbrace{\int_\cX (F_{sw}(x)-F_s(x))\log \frac{F^\star(x)}{F_s(x)} d x}_{t_1}, \label{eqn:2}\\ 
    & F = C + D - \underbrace{\int_\cX (F_w(x)-F_s(x))\log \frac{F^\star(x)}{F_s(x)} d x}_{t_2}, \label{eqn:2-1} \\
    & G = E - H - \underbrace{\int_\cX (\hat{F}_{sw}(x)-F_{sw}(x))\log \frac{F^\star(x)}{F_{sw}(x)} d x}_{t_3} \label{eqn:2-2}. 
\end{align}

Combining \eqref{eqn:1} and \eqref{eqn:2}, we get
\begin{align}
    E \le C + D - B - t_1. \label{eqn:3}
\end{align}

According to~\Cref{lem:uniform-convergence}, we have that with probability at least $1-\delta$ over the draw of $\{(x_1,y_1),\dots, (x_n,y_n)\}$ that were used to construct $\hat{F}_{sw}$,
\begin{align}
    I &\le B + \underbrace{\cO\left(\sqrt{\frac{\cC_{\cF_s}}{n}}\right)}_{t_4} + \underbrace{\cO\left(\sqrt{\frac{\log(1/\delta)}{n}}\right)}_{t_5}. \label{eqn:uc}
\end{align}

Combining \eqref{eqn:3} with \eqref{eqn:uc} and we have
\begin{align}
    E \le C + D - I - t_1 + t_4 + t_5. \label{eqn:4}
\end{align}

Combining \eqref{eqn:2-1} with \eqref{eqn:4} and we have
\begin{align}
    E \le F - I - t_1 + t_2 + t_4 + t_5. \label{eqn:5-1}
\end{align}

Combining \eqref{eqn:2-2} with \eqref{eqn:5-1} and we have
\begin{align}
    G \le F - I - H - t_1 + t_2 - t_3 + t_4 + t_5. \label{eqn:5}
\end{align}

We replace $F^\star$ with $\hat{F}_{sw}$ in the final step of proof of~\Cref{thm:realizable} (Recall the fact that $\hat{F}_{sw} \in V_s$ and~\eqref{eqn:cross-term-inequality-2}: substituting $\hat{F}_{sw}$ for $g$ in \eqref{eqn:square-expansion}, and using \eqref{eqn:cross-term-inequality-2}), we obtain:
\begin{align}
    I \ge H + B. \label{eqn:uc-projection}
\end{align}

Combining \eqref{eqn:uc-projection} with \eqref{eqn:uc} and we have
\begin{align}
    0 \le H \le t_4 + t_5 = \cO\left(\sqrt{\frac{\cC_{\cF_s}}{n}}\right) + \cO\left(\sqrt{\frac{\log(1/\delta)}{n}}\right). \label{eqn:6}
\end{align}

Combining \eqref{eqn:6} with \eqref{eqn:5} and we have
\begin{align}
    G \le F - I - t_1 + t_2 - t_3 + t_4 + t_5. \label{eqn:7}
\end{align}

While $t_4$ and $t_5$ are known in~\eqref{eqn:uc}, we analyze $t_1$, $t_2$ and $t_3$ one by one.


\paragraph{Deal with $t_1$.}
We know that
\begin{align}
    t_1 & = \int_\cX (F_{sw}(x)-F_s(x))\log \frac{F^\star(x)}{F_s(x)} d x. \nonumber
\end{align}
Using the fact that $\left|F_{sw}(x)-F_s(x)\right| \le 1$, we have
\begin{align} \label{def_t_1_ineq}
    |t_1| \le \int_{\cX} \left| \log \frac{F^\star(x)}{F_s(x)} \right| d x = \int_{\cX} \left| \log F_s(x) - \log F^\star(x) \right| d x.
\end{align}

According to Pinsker's inequality,
\begin{align} \label{output_diff}
    \int_{\cX}\left| F_s(x) - F^\star(x) \right| d x \le \sqrt{\frac{1}{2}\kl(F_s \| F^\star)} = \sqrt{\frac{1}{2}\varepsilon}.
\end{align}
Substitute $f(x)=F_s(x)$, $g(x)=F^\star(x)$ and $\xi = \sqrt{\frac{1}{2}\varepsilon}$ into~\cref{claim_xlnx} and recall~\eqref{def_t_1_ineq}, we have
\begin{align} \label{equation_t_1}
    |t_1| \le \frac{1}{\gamma}\sqrt{\frac{1}{2}\varepsilon} = O(\sqrt{\varepsilon}).
\end{align}

\paragraph{Deal with $t_2$.}
The proof for $t_2$ is similar for $t_1$.
In particular, replacing $F_{sw}$ with $F_w$ in the above and we can get 
\begin{align} \label{equation_t_2}
    |t_2| = O(\sqrt{\varepsilon}).
\end{align}

\paragraph{Deal with $t_3$.}

We know that
$$t_3 = \int_\cX (\hat{F}_{sw}(x)-F_{sw}(x))\log \frac{F^\star(x)}{F_{sw}(x)} d x.$$

According to~\cref{lem:uniform-convergence}, 
with probability at least $1-\delta$ over the draw of $(x_1,y_1),\dots,(x_n, y_n)$, we have
\begin{align} \label{t_3_proof_unif_conv}
    \left|d_{\cP}(F_w, \hat{F}_{sw}) - d_{\cP}(F_w, F_{sw}) \right| \le \cO\left(\sqrt{\frac{\cC_{\cF}}{n}}\right) + \cO\left(\sqrt{\frac{\log(1/\delta)}{n}}\right).
\end{align}
Notice that 
\begin{align} \label{t_3_proof}
H & = d_{\cP}(F_{sw}, \hat{F}_{sw}) \nonumber \\ & = d_{\cP}(F_w, F_{sw}) - d_{\cP}(F_w, \hat{F}_{sw}) + \int_\cX (F_w(x)+F_{sw}(x))\log \frac{F_{sw}(x)}{\hat{F}_{sw}(x)} d x.
\end{align}
Substitute~\eqref{eqn:6} and~\eqref{t_3_proof_unif_conv} into~\cref{t_3_proof} with the triangle inequality for absolute values, we get
\begin{align*}
    \left| \int_\cX (F_w(x)+F_{sw}(x))\log \frac{F_{sw}(x)}{\hat{F}_{sw}(x)} d x \right| \le \cO\left(\sqrt{\frac{\cC_{\cF}}{n}}\right) + \cO\left(\sqrt{\frac{\log(1/\delta)}{n}}\right)
\end{align*}
Since $\left| F_w(x)+F_{sw}(x) \right|$ is bounded, we have
$$\left| \int_\cX \left[ \log F_{sw}(x)- \log \hat{F}_{sw}(x) \right] d x \right| \le \cO\left(\sqrt{\frac{\cC_{\cF}}{n}}\right) + \cO\left(\sqrt{\frac{\log(1/\delta)}{n}}\right).$$

Using~\cref{claim_xlnx_reverse}, we have
$$\left| \int_\cX (\hat{F}_{sw}(x)-F_{sw}(x)) d x \right| \le \cO\left(\sqrt{\frac{\cC_{\cF}}{n}}\right) + \cO\left(\sqrt{\frac{\log(1/\delta)}{n}}\right).$$
Since $\left| \log \frac{F^\star(x)}{F_{sw}(x)} \right|$ is bounded, there holds
\begin{align} \label{equation_t_3}
    |t_3| = \left| \int_\cX (\hat{F}_{sw}(x)-F_{sw}(x)) \log \frac{F^\star(x)}{F_{sw}(x)} d x  \right| \le \cO\left(\sqrt{\frac{\cC_{\cF}}{n}}\right) + \cO\left(\sqrt{\frac{\log(1/\delta)}{n}}\right).
\end{align}

Therefore, combing~\eqref{equation_t_1}, \eqref{equation_t_2} and \eqref{equation_t_3}, we have
\begin{align} \label{t_1_2_3_ineq}
    |t_1| + |t_2| + |t_3| \le O(\sqrt{\varepsilon}) + \cO\left(\sqrt{\frac{\cC_{\cF}}{n}}\right) + \cO\left(\sqrt{\frac{\log(1/\delta)}{n}}\right).
\end{align}

Finally, combing~\eqref{eqn:uc} and~\eqref{eqn:7} with \eqref{eqn:6} and~\eqref{t_1_2_3_ineq}, we get the result:
\begin{align*}
\dist(\hat{F}_{sw}, F^\star) \le \dist(F_w, F^\star) - \dist(F_w, \hat{F}_{sw}) + O(\sqrt{\eps}) + \cO\left(\sqrt{\frac{\cC_{\cF}}{n}}\right) + \cO\left(\sqrt{\frac{\log(1/\delta)}{n}}\right),
\end{align*}
where in the last inequality, we instantiate asymptotics with respect to $\eps \to 0$ and $n \to \infty$.
\end{proof}

\section{Further Details and Results of Experiments}

\subsection{Training Details of Experiments in Language Models} \label{exp_llm_training_detail}

The dataset is randomly divided into three distinct subsets:
\begin{itemize}
    \item 4K samples (ground truth): They are used to fine-tune weak and strong base language models;
    \item 4K samples (held-out set): These labels are predicted by the weak model and used to provide weak supervision for training the strong model;
    \item 4K samples (the remaining): They are used for testing and evaluating the performance of all models.
\end{itemize}

When fine-tuning the weak-to-strong models, we follow~\citep{yang2024super} to set the batch size to $32$, learning rate to $10^{-5}$, \texttt{max\_seq\_len} to $512$.
The training epoch is set to 1 to avoid overfitting. All experiments are conducted on NVIDIA A100 80G.

\subsection{Weak-to-Strong Training Procedure in Synthetic Experiments} \label{appendix:syn_train}

We explore two methods to generate the weak and strong representations, which follows the experimental setting in~\citep{charikar2024quantifying}.

\begin{itemize}
    \item \textbf{Pre-training.} We begin by randomly sampling $T$ fine-tuning tasks $f_1^\star,\dots,f_T^\star \in \cF_s$. 
    For each $t \in \{ 1, \cdots, T \}$, we generate $N_r$ data $\{x^{(t)}_j\}_{j=1}^{N_r}$ where $x^{(t)}_j \sim \mathcal{P}$.
    Let the representations $h_w, h_s: \R^8 \to \R^{16}$ be 2-layer and 8-layer MLP with ReLU activations, respectively. And $h_w \in \cH_w$, $h_s \in \cH_s$.
    We obtain $h_w$ and $h_s$ via gradient descent on the representation parameters to find the minimizer of output distribution divergence loss. 
    Specifically, We use~\cref{def:kl_dist_emp} as the loss function on $T$ tasks:
    \begin{align} \label{eqn:expts-obtain-representations}
    h_l = \argmin_{h \in \cH_l} \ \frac{1}{T} \sum_{t=1}^T \disthat(f_t^\star \circ h, f_t^\star \circ h^\star),
    \end{align}
    where $l \in \{w, s\}$,  $T=10$, and $N_r=2000$.
    Additionally, the realizable setting (\Cref{thm:realizable}) is considered by explicitly setting $h_s = h^\star$, and only obtaining $h_w$ as above.
    \item \textbf{Perturbations.} As an alternative, we directly perturb the parameters of $h^\star$ to obtain the weak and strong representations. 
    Specifically, we add independent Gaussian noise $\mathcal{N}(0, \sigma_s^2)$ to every parameter in $h^\star$ to generate $h_s$. 
    Similarly, we perturb $h^\star$ with $\mathcal{N}(0, \sigma_w^2)$ to generate $h_w$. 
    To ensure the strong representation $h_s$ is closer to $h^\star$ than $h_w$, we set $\sigma_s=0.1$ and $\sigma_w=9$.
\end{itemize}

\paragraph{Weak Model Fine-tuning.} 
After obtaining $h_w$ and $h_s$, we fix these representations and train weak models on new fine-tuning tasks.
We randomly sample $M$ new fine-tuning tasks $f_1^\star,\dots,f_M^\star \in \cF_s$, and generate data $\{x^{(i)}_j\}_{j=1}^{N_f}$, where $x^{(i)}_j \sim \mathcal{P}$. 
For each task $i=\{1, \cdots, M \}$, the weak model is trained through:
\begin{align}
    \label{eqn:weak-model-finetuning}
    f^{(i)}_{w} &= \argmin_{f} \ \frac{1}{M} \sum_{i=1}^M \disthat(f_t^\star \circ h^\star, f \circ h_w),
\end{align}
where $M=100, N_f=2000$.
Here, the representation parameters $h_w$ are frozen, and $f^{(i)}_{w}$ is learned via gradient descent. 
Weak models are thus trained on true data.

\paragraph{Weak-to-Strong Supervision.} 
Using the trained weak models, we generate weakly labeled datasets for each fine-tuning task.
Specifically, for each $i \in \{ 1, \cdots, M \}$, we generate $\{\tilde{x}^{(i)}_j\}_{j=1}^{N_f}$ where $\tilde{x}^{(i)}_j \sim \mathcal{P}$. 
The strong models are then trained on these weakly labeled datasets by solving the following optimization problem using reverse KL divergence loss for each task $i \in \{ 1, \cdots, M \}$:
\begin{align}
    \label{eqn:weak-to-strong-supervision}
    f^{(i)}_{sw} &= \argmin_{f \in \cF} \ \disthat(f \circ h_s, f^{(i)}_{w} \circ h_w). 
\end{align}
At this stage, the weak-to-strong training procedure is complete.

\end{document}